\newtheorem{asm}{Assumption}	
\newtheorem{Rem}{Remark}	
\newtheorem{Def}{Definition}	
\newtheorem{Thm}{Theorem}	
\newtheorem{Lem}{Lemma}	
\def\BibTeX{{\rm B\kern-.05em{\sc i\kern-.025em b}\kern-.08em
		T\kern-.1667em\lower.7ex\hbox{E}\kern-.125emX}}
\begin{document}
	\title{Output-Feedback Boundary Control of Thermally and Flow-Induced Vibrations in Slender Timoshenko Beams}
	\author{Chengyi Wang and Ji Wang
		\thanks{C. Wang and J. Wang are with the School of Aerospace Engineering, Xiamen University, Xiamen 361102, P. R. China (e-mail: 23220231151773@stu.xmu.edu.cn, jiwang@xmu.edu.cn). }}

\maketitle

\begin{abstract}
	This work is motivated by the engineering challenge of suppressing vibrations in turbine blades of aero engines, which often operate under extreme thermal conditions and high-Mach aerodynamic environments that give rise to complex vibration phenomena, commonly referred to as thermally-induced and flow-induced vibrations. Using Hamilton’s variational principle, the system is modeled as a rotating slender Timoshenko beam under thermal and aerodynamic loads, described by a coupled system of $2\times2$ hyperbolic PIDEs, parabolic PDE, and ODEs, where the nonlocal terms exist in the hyperbolic PDE domain, and where the external disturbance (heat flux) flows into one boundary of the heat PDE. For the general form of such mixed systems, we present the state-feedback control design based on the PDE backstepping method, and then design an extended state observer for the unmeasurable distributed states and external disturbances using only available boundary measurements. In the resulting output-feedback closed-loop system, the state of the uncontrolled boundary, i.e., the furthest state from the control input, is proved to be exponentially convergent to zero, and all signals are proved to be uniformly ultimately bounded. Moreover, if the external disturbance vanishes, the exponential stability of the overall system is obtained. The proposed control design is validated on an aero-engine flexible blade under extreme thermal and aerodynamic conditions.
\end{abstract}

\begin{IEEEkeywords}
	Boundary control; Hyperbolic–parabolic PDEs; Timoshenko beams; Backstepping.
\end{IEEEkeywords}

%
%
\section{Introduction}
\subsection{Motivation}
The turbine blades in aero-engines often operate under extreme thermal
conditions and high-Mach aerodynamic environments that lead to complex vibration behaviors. These vibrations, commonly known as thermally-induced vibrations and flow-induced vibrations, are key challenges for modern aeronautical engineering. 

Thermally-induced vibrations arise primarily due to temperature gradients across the flexible structure, causing the thermal moments that lead to oscillatory behavior \cite{bib6}, \cite{bib1}, \cite{bib2}, \cite{bib3}. Flow-induced vibrations, on the other hand, occur when aerodynamic forces interact with the flexible structure. In the case of aero engines, high-speed airflow over the surface of the turbine blades generates unsteady pressures, often resulting in aeroelastic instability \cite{bib00}. The flow-induced and thermally-induced vibrations can lead to material fatigue, crack propagation, and eventual failure of aero-engine blades, compromising safety, efficiency, and reliability. 

This work is motivated by suppressing the flow-induced and thermally-induced vibrations in the turbine blades of aero engines, where, by Hamilton’s variational
principle, the thermal blade is modeled as a mixed system of a slender Timoshenko beam and a heat PDE, and where the flow pressure acting on the blade is approximated as in-domain instability sources according to ''piston theory'' \cite{bib49}. 

\subsection{Boundary control of beams}
Several approaches have been developed for the boundary control of beams. A boundary damper method requiring spatial and temporal derivatives at the beam tip was introduced in \cite{bib11}. To address the challenge of measuring tip velocity, a more sophisticated dynamic feedback approach was later proposed in \cite{bib12}. In \cite{bib13}, an energy shaping controller was developed, leveraging tip actuation for stabilization. Additionally, \cite{bib14} introduced an output-feedback control law that employed a Lyapunov-based framework combined with a disturbance observer. The work in \cite{bib15} achieved two-dimensional robust output tracking for a Timoshenko beam by utilizing an observer-based error feedback control strategy. Further, \cite{bib10} explored stabilizing a rotating Timoshenko beam through force application at the free end and torque at the pivoted end, while \cite{bib16} demonstrated stabilization of a rotating beam system requiring only torque control at the pivoted end. As a notable application, \cite{bib17} successfully stabilized a one-link flexible arm interacting with a soft environment by applying a damping force and a carefully designed controller.

The backstepping method, recognized as a powerful tool for controlling infinite-dimensional systems, was first applied to beam equations in \cite{bib18}, \cite{bib19}. This method was later extended in \cite{bib20} to design an output feedback control strategy for a slender Timoshenko beam model, where actuation occurs at the beam’s base and sensing at the tip. Some studies have focused on controlling the shear beam and Euler-Bernoulli beam, as evidenced in \cite{bib21}, \cite{bib22}. Expanding on these foundations, \cite{bib24} presented a boundary control design for coupled hyperbolic equations with nonlocal terms, demonstrating its applicability to stabilizing the shear beam model. A significant development in the field was made in \cite{bib23}, where a Riemann transformation was introduced to reformulate the Timoshenko beam as a $(2+2) \times (2+2)$ system of first-order transport equations, enabling rapid stabilization of a Timoshenko beam exhibiting anti-damping and anti-stiffness effects at the uncontrolled boundary. The aforementioned results have not taken into account the influence of thermal dynamics. Using the Riemann transformation, a thermal Timoshenko beam can be reformulated as a mixed system consisting of parabolic PDEs and coupled hyperbolic PDEs, which is introduced next.
\subsection{Control of coupled hyperbolic-parabolic PDEs}
The control design for this problem is particularly challenging due to the fundamentally different dynamic characteristics of the coupled subsystems. The initial application of the backstepping method to mixed PDE systems was introduced in \cite{bib25}, where a boundary control strategy was developed for an unstable reaction-diffusion PDE coupled with an arbitrarily long delay modeled as a transport PDE. The control of the two types of PDEs coupled within their spatial domain, presenting new theoretical and technical difficulties, was initially explored in \cite{bib26,bib27}. Furthermore, the authors of \cite{bib28} designed a backstepping-based controller aimed at stabilizing a class of hyperbolic-parabolic PDE systems that include interior mixed-coupling terms. A more general class of mixed PDE systems was subsequently addressed in \cite{bib29}, where state feedback controllers and observers were constructed using an advanced multi-step backstepping procedure. Moreover, the control of the mixed PDE systems involving the domain coupling of the parabolic PDE is addressed in \cite{bib53} through a combined strategy based on the backstepping method and spectral-reduction techniques.
\subsection{Main Contribution}

1) In contrast to \cite{bib24}, which addresses full-state feedback control of coupled hyperbolic PDEs with nonlocal terms, we focus on output-feedback control for these types of hyperbolic PDEs, which are additionally coupled with a parabolic PDE.

2) Compared with \cite{bib14}, our control design effectively tackles the instabilities arising in the spatial domain and the uncontrolled boundary of the beams. It is, therefore, capable of addressing aeroelastic instability that occurs in severe high-speed conditions and suppressing the thermally-induced vibrations distributed along the beam in extreme thermal environments, such as those encountered in aero engines.

3) In comparison to previous studies about backstepping control of Timoshenko beam---the case with a small Kelvin-Voigt damping or undamping in \cite{bib18}, and the case with
antidamping and antistiffness at the uncontrolled boundary in \cite{bib28}, we investigate a more challenging scenario where the beam is subject to thermal moments and aerodynamic pressure. This means that the anti-damping and anti-stiffness, as well as heat PDE states, are present in the beam PDE domain. 

4) Compared with the existing work on boundary control on hyperbolic-parabolic mixed PDEs \cite{bib53,bib26,bib27}, \cite{bib28}, \cite{bib29}, physically motivated by the problem of suppressing thermally-induced vibrations in
	slender Timoshenko beams, our work introduces the novel feature of incorporating both nonlocal terms and ODE states within the PDE spatial domain, broadening the problem setting of hyperbolic-parabolic mixed PDEs and posing new theoretical challenges that have not been addressed in the prior literature. To the best of our knowledge, this is the first result about boundary control of a class of mixed systems consisting of hyperbolic PIDE, parabolic PDE, and ODEs. The theoretical result is new, even if the disturbance rejection and the observer design are not incorporated.
\subsection{Organization}
The rest of the article is organized as follows. The problem formulation is presented in Section \ref{problemf}. State-feedback control design is proposed in Section \ref{staefd}. The design of the observer is shown in Section \ref{observerds}. The result of the output-feedback closed-loop system is given in Section \ref{outputfd}. The numerical simulation results are shown in Section \ref{simulation}. Section \ref{conclusion} provides the conclusion and future work.
\subsection{Notation}
\begin{itemize}
	\item  Let $U \subseteq \mathbb{R}^n$ be an open set and let $\Omega \subseteq \mathbb{R}$ be a set. By $C^0(U;\Omega)$, we denote the class of continuous mappings on $U$, which take values in $\Omega$. By $C^k(U;\Omega)$, where $k \geq 1$, we denote the class of continuous functions on $U$, which have continuous derivatives of order $k$ on $U$ and take values in $\Omega$. Also, by $L^{\infty}(U,\Omega)$, we denote the class of (Bochner) measurable mappings on $U$, whose $\Omega$-norms are essentially bounded.
	\item We use the notation $L^2(0,1)$ for the standard space of the equivalence class of square-integrable, measurable functions $f:[0,1]\rightarrow \mathbb{R}$, with $\| f \|^2:=\int_{0}^{1}f(x)^2dx<+\infty$ for $f\in L^2(0,1)$. For an integer $k \geq 1$, $H^k(0,1)$ denotes the Sobolev space of functions in $L^2(0,1)$ with all its weak derivatives up to order $k$ in $L^2(0,1)$, with $\| f\|^2_{H^k}=\int_{0}^{1}(f(x)^2+f^{(1)}(x)^2+\cdot\cdot\cdot+f^{(k)}(x)^2)dx$.
	\item The notation $|\cdot|$ denotes Euclidean norm. The notation $\dot{z}(t)$ denotes the time derivative of $z$. The notation $c^{(i)}(x)$ denote the i times derivatives of $c$.
	\item Suppose that $f(x)$, $F(x,y)$ are functions defined in domain $\mathcal{D}$ where $\mathcal{D}=\lbrace (x,y)\in \mathbb{R}^2 \vert 0 \leq y \leq x \leq 1 \rbrace$, then the norm $| \cdot |_{\infty}$ is defined by $| f(x) |_{\infty}=\sup\limits_{x\in\mathcal{D}}\lvert f(x) \lvert$ and the norm $\| \cdot\|_{\infty}$ is defined by $\| F(x,y)\|_{\infty}=\sup\limits_{x,y\in\mathcal{D}}|F(x,y)|$.
\end{itemize}
\section{Rotating Slender Timoshenko Beams in Thermo-Aerodynamic Environments}\label{problemf}
The flexible blade in the aero-engine is under extreme thermal and aerodynamic conditions that generate thermally-induced and flow-induced vibrations. The control objective is to quickly suppress the overall vibration energy of the flexible blade and make sure the blade tip moves as the given reference, i.e., the vibration displacement of the tip quickly converges to zero, which is crucial for maintaining smooth airflow and improving the engine's operational efficiency.

\begin{figure}
	\centerline{\includegraphics[width=\columnwidth]{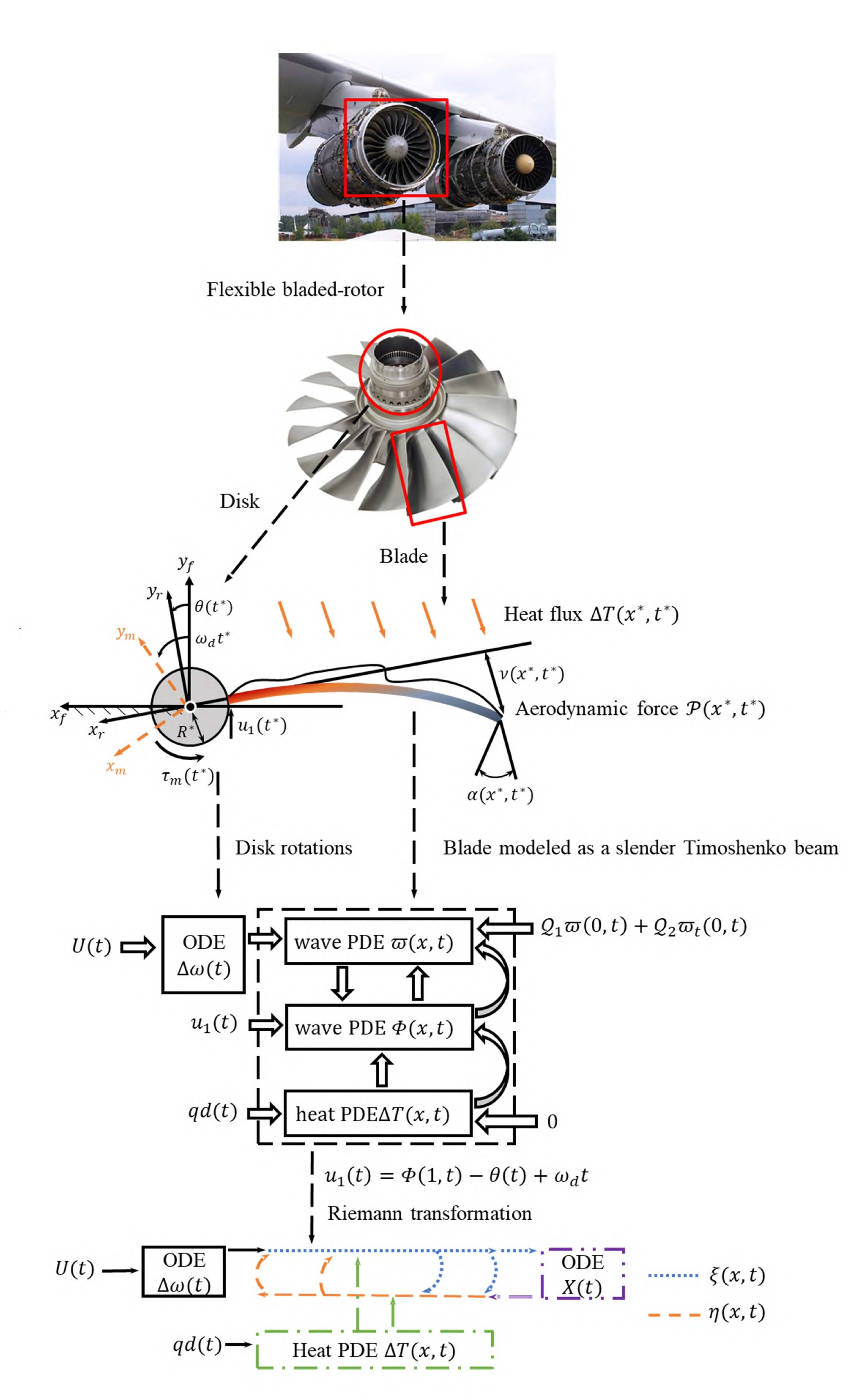}}
	\caption{Rotating turbine blade in aero-engine: from the physical model to the mathematical plant.}
	\label{Fg1}
\end{figure}
Fig.\ref{Fg1} illustrates a rotating turbine blade in thermo-aerodynamic environments. One end of the blade is fixed to the disk that is driven by the control input. Besides, the overall blade bears unstable aerodynamic forces and thermal moments. The $x_fy_f$ coordinate system is a fixed frame, the $x_my_m$ coordinate system is a moving frame rotating with the disk, and the $x_ry_r$ coordinate system is a reference frame that rotates as the reference trajectory whose angular velocity is $w_d$.
\subsection{Modeling}
Here, the model is derived using Hamilton's variational principle. The potential energy of the flexible blade due to bending and temperature distribution is expressed as follows
\begin{align}
	PE^\ast_{bending}=&\frac{1}{2} \int_{0}^{L^\ast} E^\ast I^\ast \bigg(\frac{\partial \alpha(x^{\ast},t^\ast)}{\partial x^{\ast}}\bigg)^2dx^\ast \notag \\
	&+\frac{1}{2}\int_{0}^{L^\ast}E^\ast M_T(x^\ast,t^\ast) \frac{\partial \alpha(x^\ast,t^\ast)}{\partial x^\ast}dx^\ast 	\label{Pbend}
\end{align}
where $L^\ast$ is the length of the blade, $E^\ast$ is the modulus of elasticity, $I^\ast$ is the area moment of inertia of the cross-section about the neutral axis $x$, $\alpha(x^{\ast},t^\ast)$ is the angle of rotation of the cross-section due to the bending moment. The function $M_T$ given by
$$	M_T(x^\ast,t^\ast)= \Delta T(x^\ast,t^\ast)\int_{0}^{L_w^\ast}dz\int_{0}^{L_h^\ast}\alpha_0 y dy$$
represents the thermal moment \cite{bib1}, \cite{bib6}, where \(\alpha_0\) is the thermal expansion coefficient, while \(L_w^\ast\) and \(L_h^\ast\) are the height and width of the blade, respectively. The function $\Delta T(x^\ast,t^\ast)$ is the temperature change.

The potential energy due to shear is given by
\begin{align}
	PE^\ast_{shear}=\frac{1}{2}\int_{0}^{L^\ast}k'G^\ast A^\ast\bigg(\frac{\partial \upsilon^\ast(x^{\ast},t^\ast)}{\partial x^\ast}-\alpha(x^{\ast},t^\ast)\bigg)^2 dx^\ast \label{Pshear}
\end{align}
where $\upsilon(x^{\ast},t^\ast)$ is the transverse displacement in the moving frame $x_my_m$, $A^\ast$ is the cross sectional area of the blade, $k'$ is the shear factor, $G^\ast$ is the shear modulus.

The kinetic energy due to the transverse deflection is expressed as follows
\begin{align}
	KE^*_{trans}=&\frac{1}{2} \int_{0}^{L^*} \rho^\ast A^\ast \bigg(\frac{\partial\theta(t^\ast)}{\partial t^\ast} (L^\ast+R^\ast-x^\ast)\notag \\
	&+\frac{\partial \upsilon^\ast(x^\ast,t^\ast)}{\partial t^\ast}\bigg)^2 dx^\ast\label{Ptrans}
\end{align}
where $\rho^*$ is the density of the beam, $\theta(t^\ast)$ and $R^\ast$ are the rotational angle of the blade and the radius of the rigid disk, respectively.

Additionally, the kinetic energy arising from the rotation of the cross-section is given by
\begin{align}
	KE^*_{rot}=\frac{1}{2} \int_{0}^{L^*} \rho^* I^*\bigg(-\frac{\partial\theta(t^\ast)}{\partial t^\ast} +\frac{\partial \alpha(x^\ast,t^\ast)}{\partial t^\ast}\bigg)^2 dx^\ast.\label{Pke}
\end{align}
The Lagrangian, defined by kinetic energy-potential energy, is obtained as follows
\begin{align}
	L=KE_{trans}^\ast
	+KE_{rot}^\ast-PE_{bending}^\ast-PE_{shear}^\ast+J^{\ast}\omega^2\label{eq:PL}
\end{align}
where $\omega$ and $J^\ast$ are the angular velocity of the moving reference $x_my_m$ and the inertia moment of the disk, respectively.

The virtual work is
\begin{align}
	\delta W=&u_1^{\ast}(t^\ast)\delta(\alpha(L^\ast,t^\ast))+(\tau_m^{\ast}(t^\ast)+c^{\ast}\dot{\theta}(t^\ast))\delta(\theta(t^\ast))\notag\\&+\int_{0}^{L^\ast} \mathcal{P}^{\ast}(x^\ast,t^\ast)\delta \upsilon^\ast(x^\ast,t^\ast) dx^\ast\label{eq:vw}
\end{align}
where $c^{\ast}$ is the damping coefficient, $\tau_m^{\ast}(t)$ is torque of the disk, $\mathcal{P}^{\ast}(x^\ast,t^\ast)$ is the fluid pressure acting on the blade when flow passing over it, and $u_1^{\ast}(t^\ast)$ is the input signal applied at the edge of the rigid disk, which is implemented with piezoelectric actuators \cite{bib43}.

Introducing the following dimensionless parameters
\begin{align}
	&x=\frac{x^\ast}{L^\ast},\quad I=\frac{I^\ast}{{L^\ast}^4},\quad t=t^\ast \omega_0^\ast,\quad A=\frac{A^\ast}{{L^\ast}^2},\quad  \upsilon=\frac{\upsilon^\ast}{L^\ast}, \notag \\
	&R=\frac{R^\ast}{L^\ast},\quad G=\frac{G^\ast {L^\ast}^4}{E^\ast I^\ast},\quad \rho=\frac{\rho^\ast {L^\ast}^6 {\omega_1^\ast}^2}{E^\ast I^\ast}, \quad I_0=\frac{I_0^\ast}{I^\ast}, \notag \\
	&u_1(t)=\frac{u_1^{\ast}L^{\ast}}{E^{\ast}I^{\ast}},\quad \tau_m=\frac{\tau_m^{\ast}L^{\ast}}{E^{\ast}I^{\ast}},\quad c=\frac{c^{\ast}L^{\ast}}{E^{\ast}I^{\ast}},\quad \mathcal{P}=\frac{\mathcal{P^\ast}{L^{\ast}}^3}{E^{\ast}I^{\ast}}
\end{align}
where $\omega_0^\ast$ is the first natural frequency and $I_0^\ast=\iint_{A^\ast}^{}\alpha_0 y dA^\ast$, applying Hamilton’s principle with the energy equations $L$ in \eqref{eq:PL} that consists of  \eqref{Pbend}--\eqref{Pke}, as well as the virtual work $\delta W$ in \eqref{eq:vw}, the dynamic model is derived as follows
\begin{align}
	&\rho A((1+R-x)\ddot{\theta}(t)+\upsilon_{tt}(x,t)) \notag \\
	&+k'GA(\alpha_x(x,t)-\upsilon_{xx}(x,t))=\mathcal{P}(x,t), \label{beam0} \\
	&\rho I(-\ddot{\theta}(t)+\alpha_{tt}(x,t))-\alpha_{xx}(x,t)\notag \\
	&+k'GA(\alpha(x,t)-\upsilon_x(x,t))=I_0 \Delta T_x(x,t),  \label{beam1} \\
	&J\dot{\omega}(t)=2c\omega(t)+2\tau_m(t) \label{beam2} 
\end{align}
with the boundary conditions to be satisfied 
\begin{align}
	&\alpha_x(1,t)=u_1(t),\quad \alpha_x(0,t)=I_0 \Delta T(0,t), \label{alp} \\
	&\upsilon(1,t)=0,\quad \upsilon_x(0,t)=\alpha(0,t). \label{htb} 
\end{align}
\subsection{Reformulation}
To rewrite the system \eqref{beam0}--\eqref{beam2}  into a form suitable for control design, we introduce new variables to place the model \eqref{beam0}--\eqref{htbd} in a reference frame $x_ry_r$ that aligns with the reference trajectory we want to track:  
\begin{align}
	\varpi(x,t)=&\upsilon(x,t)+(1+R-x)(\theta(t)-\omega_dt), \label{up0om} \\
	\Phi(x,t)=&\alpha(x,t)-(\theta(t)-\omega_dt), \label{up1om} \\
	\Delta \omega(t)=& \omega(t)-\omega_d, \label{up3om} \\
	\epsilon=&\frac{\rho}{k'G},\quad \mu=\rho I,\quad a=A\rho \label{up2om}
\end{align}
where the constant $\omega_d$ is the reference angle velocity. Then \eqref{beam0}--\eqref{beam2}  is rewritten as
\begin{align}
	\epsilon \varpi_{tt}(x,t)-\varpi_{xx}(x,t)=&-\Phi_x(x,t)+k_1\varpi_t(x,t)\notag \\
	&+k_2\varpi_x(x,t), \label{st0}\\
	\mu\Phi_{tt}(x,t)-\Phi_{xx}(x,t)=&-\frac{a}{\epsilon}(\Phi(x,t)-\varpi_x(x,t))\notag \\
	&+I_0 \Delta T_x(x,t), \label{st1}\\
	J\Delta\dot{\omega}(t)=&2c\Delta\omega(t)+U(t) \label{st3}  
\end{align}for $x\in(0,1)$, $t\geq0$, and $U(t)=2(\tau_m(t)+c\Delta\omega_d)$. Here, we use the popular "piston theory" \cite{bib49} to treat fluid-structure interaction, expressing the flow-induced pressure
\begin{align}\mathcal{P}=k_1^{\ast}\varpi_t(x,t)+k_2^{\ast}\varpi_x(x,t)\label{eq:pressure}\end{align}  where $k_1^{\ast}$, $k_2^{\ast}$ satisfy $k_1=\frac{k_1^{\ast}}{k'GA}$, $k_2=\frac{k_2^{\ast}}{k'GA}$, and are determined by Mach number, the free-stream density of the fluid and the velocity of the fluid \cite{bib47}, which presents the
anti-damping and anti-stiffness instability in the PDE domain.

 Moreover, tip-leakage flow in the blade-tip region of high-pressure turbines not only increases heat transfer but also creates a significant pressure difference between the aerodynamic surfaces of the blade. This is primarily due to the small clearance between the blade tip and the stationary outer \cite{bib51}. Therefore, it is necessary to deal with the heat difference caused by thermal forces, which are related to blade vibrations \cite{bib1}.
	The temperature difference along the axis $x$ is described by the following heat PDE
	\begin{align}
		&\Delta T_t(x,t)=\acute{\kappa}\Delta T_{xx}(x,t), \label{ht} \\
		& \Delta T(0,t)=qd(t)-\frac{S_0}{\beta^{\ast}}\varpi(0,t), \quad \Delta T(1,t)=0 \label{htbd}
	\end{align}    
	where $\acute{\kappa}$ is the thermal diffusivity, $qd(t)$ is the heat flux on the actuated side of the blade, $S_0$ is the average heat flux level and $\beta^{\ast}$ is a parameter determined from a heat flux versus distance test \cite{bib1}.
Additionally, the blade tip experiences a force from the working fluid (air in gas turbines) that is represented as $-\mathcal{Q}_1\varpi(0,t)-\mathcal{Q}_2\varpi_t(0,t)$, i.e., the
anti-damping and anti-stiffness effects at the uncontrolled boundary. Thus, the boundary conditions are 
\begin{align}
	\varpi_t(1,t)=&R\Delta\omega(t), \\
	\varpi_x(0,t)=&\Phi(0,t)-\mathcal{Q}_1\varpi(0,t)-\mathcal{Q}_2\varpi_t(0,t), \\
	\Phi_x(1,t)=&u_1(t), \label{eq:Phi1}\\
	\Phi_x(0,t)=&I_0 \Delta T(0,t) \label{eq:Phi0}
\end{align}
where $\mathcal{Q}_1$, $\mathcal{Q}_2$ $\in$ $\mathbb{R}$ and $\mathcal{Q}_2$ satisfies the Assumption 1 in \cite{bib23}, i,e., $\mathcal{Q}_2\neq \sqrt{\epsilon}$. 

Due to the turbine blade's thinness, we consider it a slender Timoshenko Beam, which implies that the parameter \(\mu\) is very small. Following the approximation approach in \cite{bib18}, we set \(\mu = 0\) in \eqref{st0}, \eqref{st1}. Consequently, we have 
\begin{align}
	\epsilon \varpi_{tt}-\varpi_{xx}+\Phi_x=&k_1\varpi_t(x,t)+k_2\varpi_x(x,t), \label{armw} \\
	\frac{a} {\epsilon}(\Phi-\varpi_x)-\Phi_{xx}=&I_0 \Delta T_x. \label{ODE}
\end{align}
Using the Laplace transformation for \eqref{ODE}, the solution  $\Phi(x,t)$ can be obtained as 
\begin{align}
	\Phi(x,t)=&\cosh(bx)\Phi(0,t)+\frac{1}{b}\sinh(bx)\Phi_x(0,t) \notag \\
	&-b\int_{0}^{x}\sinh(b(x-y))\varpi_y(y,t)dy \notag \\
	&-\frac{1}{b}\int_{0}^{x}\sinh(b(x-y))I_0 \Delta T_y(y,t)dy
\end{align}
where $b=\sqrt{\frac{a}{\epsilon}}$. The constant $\Phi(0,t)$ can be expressed in terms of $\Phi_x(1,t)$ in the following way
\begin{align}
		\Phi(0,t)=&\frac{1}{b\sinh(b)} \bigg[ \Phi_x(1,t)-\cosh(b)\Phi_x(0,t)\notag \\
		&+b^2\int_{0}^{1}\cosh(b(1-y))\varpi_y(y,t)dy \notag \\
		&+\int_{0}^{1}\cosh(b(1-y))I_0 \Delta T_y(y,t)dy \bigg]. \label{ps0}
\end{align}
Inserting \eqref{eq:Phi1} into \eqref{ps0}, choosing the control input $u_1(t)$ as
\begin{align}
		u_1(t)=&\cosh(b)\Phi_x(0,t)-b^2\int_{0}^{1}\cosh(b(1-y))\varpi_y(y,t)dy \notag \\
		&-\int_{0}^{1}\cosh(b(1-y))I_0 \Delta T_y(y,t)dy
\end{align}which leads to $\Phi(0)=0$. With the boundary condition \eqref{eq:Phi0}, we can get
\begin{align}
		\Phi(x,t)=&\frac{I_0}{b}\sinh(bx)\Delta T(0,t)-b\int_{0}^{x}\sinh(b(x-y))\varpi_y(y)dy\notag \\
		&-\frac{1}{b}\int_{0}^{x}\sinh(b(x-y))I_0 \Delta T_y(y,t)dy. \label{eq:phix}
\end{align}
Differentiating $\Phi(x)$ with respect $x$ and substituting the result into \eqref{armw}, we get
\begin{align}
		\epsilon\varpi_{tt}(x,t)=&\varpi_{xx}(x,t)-2I_0\cosh(bx)(qd(t)-\frac{S_0}{\beta^{\ast}}\varpi(0,t)) \notag \\
		&+k_1\varpi_t(x,t)+k_2\varpi_x(x,t)+I_0 \Delta T(x,t) \notag \\
		&+bI_0 \int_{0}^{x}\sinh(b(x-y))\Delta T(y,t)dy \notag \\
		&+b^2\int_{0}^{x}\cosh(b(x-y))\varpi_y(y,t)dy \label{shearb}
\end{align}
with boundary conditions to be satisfied
\begin{align}
	\varpi_t(1,t)=&R\Delta\omega(t), \\
	\varpi_x(0,t)=&-\mathcal{Q}_1\varpi(0,t)-\mathcal{Q}_2\varpi_t(0,t). \label{shearbon}
\end{align}
To facilitate the backstepping control design, we use the following Riemann transformation
\begin{align}
	\xi(x,t)&=\frac{\sqrt{\epsilon}}{\varphi_2(x)}\varpi_t(x,t)+\frac{1}{\varphi_2(x)}\varpi_x(x,t), \label{trm0} \\
	\eta(x,t)&=\frac{\sqrt{\epsilon}}{\varphi_1(x)}\varpi_t(x,t)-\frac{1}{\varphi_1(x)}\varpi_x(x,t), \label{trm1} \\
	X(t)&=\varpi(0,t) \label{trm3} 
\end{align}
where $\varphi_1(x)=e^{\int_{0}^{x}\frac{k_1-\sqrt{\epsilon}k_2}{2\sqrt{\epsilon}}ds}$, $\varphi_2(x)=e^{\int_{0}^{x}-\frac{k_1+\sqrt{\epsilon}k_2}{2\sqrt{\epsilon}}ds}$ \cite{bib48},  to convert the system   \eqref{shearb}--\eqref{shearbon} into a $2\times2$  coupled hyperbolic PDE-ODE system, 
\begin{align}
	\sqrt{\epsilon}\xi_t(x,t)=&\xi_x(x,t)+\frac{(k_1-\sqrt{\epsilon}k_2)\varphi_1(x)}{2\sqrt{\epsilon}\varphi_2(x)}\eta(x,t) \notag \\
	&+\frac{b^2}{2}\int_{0}^{x}\frac{\cosh(b(x-y))}{\varphi_2(x)}(\varphi_2(y)\xi(y,t) \notag \\ 
	&-\varphi_1(y)\eta(y,t))dy+\frac{I_0}{\varphi_2(x)} \Delta T(x,t) \notag \\
	&+bI_0 \int_{0}^{x}\frac{\sinh(b(x-y))}{\varphi_2(x)} \Delta T(y,t)dy\notag \\
	&-\frac{2I_0\cosh(bx)}{\varphi_2(x)}(qd(t)-\frac{S_0}{\beta^{\ast}}X(t)), \label{xi} \\
	\sqrt{\epsilon}\eta_t(x,t)=&-\eta_x(x,t)+\frac{(k_1+\sqrt{\epsilon}k_2)\varphi_2(x)}{2\sqrt{\epsilon}\varphi_1(x)}\xi(x,t) \notag \\
	&+\frac{b^2}{2}\int_{0}^{x}\frac{\cosh(b(x-y))}{\varphi_1(x)}(\varphi_2(y)\xi(y,t)\notag \\
	&-\varphi_1(y)\eta(y,t))dy+\frac{I_0}{\varphi_1(x)} \Delta T(x,t) \notag \\
	&+bI_0 \int_{0}^{x}\frac{\sinh(b(x-y))}{\varphi_1(x)} \Delta T(y,t)dy\notag \\
	&-\frac{2I_0\cosh(bx)}{\varphi_2(x)}(qd(t)-\frac{S_0}{\beta^{\ast}}X(t)),  \label{eta} \\
	\xi(1,t)=&-\frac{\varphi_1(1)}{\varphi_2(1)}\eta(1,t)+\frac{2\sqrt{\varepsilon}R}{\varphi_2(1)}\Delta\omega(t),\label{xib} \\
	\eta(0,t)=&\frac{\sqrt{\epsilon}+\mathcal{Q}_2}{\sqrt{\epsilon}-\mathcal{Q}_2}\xi(0,t)+\frac{2\sqrt{\epsilon}\mathcal{Q}_1}{\sqrt{\epsilon}-\mathcal{Q}_2}X(t), \label{etab} \\
	\dot{X}(t)=&\frac{\mathcal{Q}_1}{\sqrt{\epsilon}-\mathcal{Q}_2}X(t)+\frac{1}{\sqrt{\epsilon}-\mathcal{Q}_2}\xi(0,t). \label{Xt} 
\end{align}
Now, we obtain the model \eqref{ht}, \eqref{htbd}, \eqref{st3}, and \eqref{xi}--\eqref{Xt} that is ready for control design. The thermal influence on the blade dynamics, i.e., the above hyperbolic PDE-ODE \eqref{xi}--\eqref{Xt}, comes from the temperature gradient $\Delta T$ that is described by a heat PDE \eqref{ht}, \eqref{htbd}, which is driven by the output of the above hyperbolic PDE-ODE. The flow-induced pressure modeled by \eqref{eq:pressure}  represents the in-domain instabilities, i.e., coupling terms in the $2\times2$  coupled hyperbolic PDEs \eqref{xi}, \eqref{eta}. The model \eqref{ht}, \eqref{htbd}, \eqref{st3},  \eqref{xi}--\eqref{Xt} is indeed an ODE-PDE-ODE system, where the PDE in the middle is a coupled system of hyperbolic and parabolic PDEs, which is different from \cite{bib36,bib37,bib39,bib41,bib42}.
\subsection{Generalization}
To generalize the control design, we present the obtained model \eqref{ht}, \eqref{htbd}, \eqref{st3}, and \eqref{xi}--\eqref{Xt} in the general form:
\begin{align}
	\dot{z}(t)=&c_0z(t)+c_1\xi(1,t)+U(t), \label{gsz} \\
	\varepsilon_1\eta_t(x,t)=&-\eta_x(x,t)+c_1(x)\xi(x,t)+D_1(x)X(t) \notag \\
	&+\int_{0}^{x}f_{12}(x,y)\xi(y,t)dy+g_1(x)\xi(0,t) \notag \\
	&+\int_{0}^{x}f_{11}(x,y)\eta(y,t)dy+\mu_1(x)u(x,t) \notag \\
	&+\int_{0}^{x}f_{13}(x,y)u(y,t)dy+G_1(x)d(t),	\label{gseta} \\
	\varepsilon_2\xi_t(x,t)=&\xi_x(x,t)+c_2(x)\eta(x,t)+D_2(x)X(t)  \notag \\
	&+\int_{0}^{x}f_{22}(x,y)\xi(y,t)dy+g_2(x)\xi(0,t) \notag \\
	&+\int_{0}^{x}f_{21}(x,y)\eta(y,t)dy+\mu_2(x)u(x,t) \notag \\
	&+\int_{0}^{x}f_{23}(x,y)u(y,t)dy+G_2(x)d(t),  \label{gsxi} \\ 
	\xi(1,t)=&-q_1\eta(1,t)+q_0z(t), \label{gsxib} \\
	\eta(0,t)=&\xi(0,t)+CX(t),  \label{gsetab} \\
	\dot{X}(t)=&AX(t)+B\xi(0,t), \label{gsx} \\
	u_t(x,t)=&\kappa_0u_{xx}(x,t), \label{gxu} \\
	u(0,t)=&qd(t)+\acute{p}_2X(t), \quad u(1,t)=0, \label{gxub} \\
	\dot{d}(t)=&A_dd(t). \label{gxd}
\end{align}
$\forall(x,y)\in[0,1]\times[0,+\infty)$, where $X(t)$ $\in$ $\mathbb{R}^{n}$, $z(t)$ $\in$ $\mathbb{R}$ are ODE states, $\eta(x,t)$ $\in$ $\mathbb{R}$, $\xi(x,t)$ $\in$ $\mathbb{R}$ are hyperbolic PDE states, and $u(x,t)\in\mathbb{R}$ is the state of the heat PDE. The function $U(t)$ is the control input to be designed. The
plant initial conditions are taken as 
\begin{align}
	&	(z(0),\xi(x,0),\eta(x,0), u(x,0), X(0), d(0)) \notag \\
	&	\in \mathcal H:=\mathbb R\times H^{1}(0,1)^3  \times \mathbb R^n \times \mathbb R^m. \label{space:H}
\end{align}
The general function $c_i(x)$, $g_i(x)$, $\mu_i(x)\in C^1(0,1)$, $D_i(x)\in C^1(0,1)\to \mathbb R^{1\times n}$, $G_i(x)\in C^1(0,1)\to \mathbb R^{1\times m} $ $f_{i,j}(x,y)$ $C^1([0,1] \times [0,1])$, $i=1,2$, $j=1,2,3$. The vector  $\acute{p}_2$ $\in$ $\mathbb{R}^{1\times n}$ satisfies that the $\acute{p_2}$ is arbitrary. The parameters $q_1\neq0$, $c_0$, $c_1$, and without loss of generality, $\varepsilon_1$, $\varepsilon_2$, $\kappa_0$ are positive constants. 
The vector $d(t)\in\mathbb R^{m\times 1}$ satisfies \eqref{gxd} where
$A_d\in\mathbb R^{m\times m}$ satisfy the following assumption:
\begin{asm}\label{d(t)}
	The diagonalizable matrix $A_d\in\mathbb{R}^{n\times n}$ only contains eigenvalues on the imaginary axis.
\end{asm}

Assumption \ref{d(t)} implies that $d(t)$, which is produced by a neutrally stable system matrix $A_d$, is bounded. Denote the upper bound as the constant $\bar{D}_d$, which is arbitrary and unknown.
This assumption is practically reasonable. In practice, disturbances act only within a finite working horizon. By periodically extending such finite-horizon signals to 
	$[0,\infty)$, they can be regarded as periodic and thus represented by Fourier series as a combination of sinusoidal components, which can be generated by the system as Assumption \ref{d(t)}. Therefore, Assumption \ref{d(t)} covers a broad class of practically relevant disturbance signals.

 The matrix $A$ $\in$ $\mathbb{R}^{n\times n}$, $B$ $\in$ $\mathbb{R}^{n\times1}$, and $q$, $C$ $\in$ $\mathbb{R}^{1\times n}$ satisfy the following assumption:
	\begin{asm}\label{matrix}
		The pair $(A,B)$ is controllable, and the pairs $(A,C)$ as well as $(A_d,q)$ are observable.
\end{asm}
We give the solution notion below.
	\begin{Def}\label{def1}
		For a solution of \eqref{gsz}--\eqref{gxub}, we mean a tuple \begin{align}
			&(z(t),\xi(x,t),\eta(x,t), u(x,t), X(t))\in  L^{\infty}([0,\infty);\mathbb R)\notag\\&\times L^{\infty}([0,\infty);H^1(0,1))^3\times H^1([0,\infty);\mathbb R^n)\label{eq:space}
		\end{align}
		where the equations are satisfied in the weak sense.
\end{Def}
\section{State-Feedback Control}\label{staefd}
\subsection{State-Feedback Control Design}
We propose a backstepping transformation to map the original system \eqref{gsz}-\eqref{gxd} into a target system with the desired performance. Inspired by \cite{bib28}, we build the transformation as
\begin{align}
	&\beta(x,t)=\xi(x,t)+\gamma(x)X(t)-\int_{0}^{x}k(x,y)\xi(y,t)dy\notag \\
	&-\int_{0}^{x}l(x,y)\eta(y,t)dy-\int_{0}^{1}p(x,y)u(y,t)dy+\Upsilon(x)d(t) \label{cxi2bt}
\end{align}
whose inverse is
\begin{align}
	&\xi(x,t)=\beta(x,t)-\lambda(x)X(t)+\int_{0}^{x}\rho(x,y)\beta(y,t)dy\notag \\	&+\int_{0}^{x}\sigma(x,y)\eta(y,t)dy+\int_{0}^{1}\varrho(x,y)u(y,t)dy-\vartheta(x)d(t). \label{cbt2xi}
\end{align}
The kernels $k(x,y)$, $l(x,y)$ evolving in the domain $\mathcal{D}=\lbrace (x,y)\in \mathbb{R}^2 \vert 0 \leq y \leq x \leq 1 \rbrace$ and the kernel $\gamma(x)$ defined in $\lbrace 0\leq x\leq 1 \rbrace$ satisfy
\begin{align}
	k_x(x,y)&=-k_y(x,y)-f_{22}(x,y)+\int_{y}^{x}f_{22}(z,y)k(x,z)dz\notag \\
	&+\frac{\varepsilon_2}{\varepsilon_1}\int_{y}^{x}f_{12}(z,y)l(x,z)dz+\frac{\varepsilon_2}{\varepsilon_1}l(x,y)c_1(y), \label{kerK} \\
	l_x(x,y)&=\frac{\varepsilon_2}{\varepsilon_1}l_y(x,y)-f_{21}(x,y)+\int_{y}^{x}f_{21}(z,y)k(x,z)dz\notag \\
	&+\frac{\varepsilon_2}{\varepsilon_1}\int_{y}^{x}f_{11}(z,y)l(x,z)dz+k(x,y)c_2(y), \label{kerL} \\
	\gamma_x(x)&=\varepsilon_2\gamma(x)A-\frac{\varepsilon_2}{\varepsilon_1}\int_{0}^{x}l(x,y)D_1(y)dy-\frac{\varepsilon_2}{\varepsilon_1}l(x,0)C \notag \\
	&-\int_{0}^{x}k(x,y)D_2(y)dy+D_2(x)-\varepsilon_2\kappa_0p_y(x,0)\acute{p}_2, \notag  \\
	&-\frac{\varepsilon_2}{\varepsilon_1}l(x,0)C-\frac{\varepsilon_2}{\varepsilon_1}\int_{0}^{x}l(x,y)D_1(y)dy+D_2(x), \label{kerGma} \\
	k(x,0)&=\frac{\varepsilon_2}{\varepsilon_1}l(x,0)-\varepsilon_2\gamma(x)B+\int_{0}^{x}k(x,y)g_2(y)dy\notag \\
	&+\frac{\varepsilon_2}{\varepsilon_1}\int_{0}^{x}l(x,y)g_1(y)dy-g_2(x), \label{kerKb} \\
	l(x,x)&=-\frac{\varepsilon_1}{\varepsilon_1+\varepsilon_2}c_2(x), \quad \gamma(0)=-K \label{kerLGmab}
\end{align}
where $K\in\mathbb{R}^{1\times n}$ are control gains to be designed. The kernel $p(x,y)$ on $\lbrace 0\leq x,y\leq 1 \rbrace$ and $\Upsilon(x)$ on $\{ 0\leq x\leq 1\}$ here goes
\begin{align}
	p_x(x,y)=&\varepsilon_2 \kappa_0 p_{yy}(x,y)+h(x-y)H(x,y)\notag \\
	&-\delta(y-x)\mu_2(y), \label{kerP} \\
	\Upsilon_x(x)=&\varepsilon_2 \Upsilon(x)A_d-\varepsilon_2 \kappa_0 p_y(x,0)q+G_2(x), \label{kerUp} \\ 
	p(x,1)=&0,\quad p(x,0)=0,\quad p(0,y)=0, \label{kerPb} \\
	\Upsilon(0)=&0 \label{kerPUpb}
\end{align}
where
$H(x,y)=\frac{\varepsilon_2}{\varepsilon_1}l(x,y)\mu_1(y)+k(x,y)\mu_2(y)-f_{23}(x,y)+\int_{y}^{x}f_{23}(z,y)k(x,z)dz+\frac{\varepsilon_2}{\varepsilon_1}\int_{y}^{x}f_{13}(z,y)l(x,z)dz
,$
$h(x)$ is the step function satisfying $h(x)=1$, $x>0$ and $h(x)=0$, $x\leq0$, $\delta(x)$ is Dirac's function. Notice, the expressions of inverse kernels $\lambda(x)$, $\rho(x,y)$, $\sigma(x,y)$, $\varrho(x,y)$, $\vartheta(x)$ are similar to those of $\gamma(x)$, $k(x,y)$, $l(x,y)$, $p(x,y)$, $\Upsilon(x)$ and the well-posedness of them are equal to kernels in \eqref{cxi2bt}. Thus, both of them are omitted.

The well-posedness of \eqref{kerK}--\eqref{kerPUpb} is shown in the following lemma.
\begin{Lem}\label{kernel klp}
	The equation set \eqref{kerK}--\eqref{kerPUpb} has a weak solution $k, l\in H^1(\mathcal{D})$, $\gamma\in H^1(0,1)$, $p\in L^2(0,1;H^1(0,1))$ and $\Upsilon \in H^1(0,1)$ such that $\|k(x,\cdot)\|^2_{H^1(0,x)}+\|l(x,\cdot)\|^2_{H^1(0,x)}+\|p(x,\cdot)\|^2_{H^1(0,1)}+\|\gamma(\cdot)\|^2_{H^1(0,1)}+\|\Upsilon(\cdot)\|^2_{H^1(0,1)}\leq\bar{C}_2$ for some positive $\bar{C}_2$.
\end{Lem}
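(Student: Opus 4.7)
The plan is to recast \eqref{kerK}--\eqref{kerPUpb} as an equivalent system of integral equations and solve by a Banach fixed-point argument. The five unknowns $(k,l,\gamma,p,\Upsilon)$ exhibit a quasi-triangular coupling: the hyperbolic pair $(k,l)$ on $\mathcal D$ depends on $\gamma$ only through the boundary datum \eqref{kerKb}; the ODE \eqref{kerGma} for $\gamma$ (with $\gamma(0)=-K$) is driven by $(k,l)$ and by the boundary flux $p_y(x,0)$; the parabolic kernel $p$ depends on $(k,l)$ only through $H(x,y)$; and $\Upsilon$ satisfies the pure ODE \eqref{kerUp} with $\Upsilon(0)=0$. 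This structure suggests solving $(k,l,\gamma,p)$ simultaneously and recovering $\Upsilon$ at the end by quadrature.

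For the hyperbolic block I would integrate \eqref{kerK} along the characteristic $dy/dx=1$ starting from the boundary value $k(x,0)$, and integrate \eqref{kerL} along $dy/dx=-\varepsilon_2/\varepsilon_1$ starting from the diagonal value $l(x,x)$ given in \eqref{kerLGmab}. This produces a Volterra-type system for $(k,l)$ with linear, bounded operators on $H^1(\mathcal D)^2$ under the $C^1$-data assumptions, which is the standard backstepping-kernel device used in \cite{bib23,bib28}. The ODE \eqref{kerGma} is integrated directly in $x$, expressing $\gamma$ as an explicit affine functional of $(k,l,p_y(\cdot,0))$.

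The main technical step concerns the parabolic kernel \eqref{kerP}--\eqref{kerPb}, which I view as a one-dimensional heat equation in $y\in[0,1]$ evolving in the ``time'' $x$, with zero initial data at $x=0$, homogeneous Dirichlet data at $y=0,1$, and forcing $h(x-y)H(x,y)-\mu_2(y)\delta(y-x)$. Denoting by $S(x)$ the Dirichlet heat semigroup generated by $\varepsilon_2\kappa_0\partial_{yy}$ on $L^2(0,1)$, Duhamel's formula yields
\begin{equation*}
p(x,\cdot)=\int_{0}^{x}S(x-s)\bigl[h(s-\cdot)H(s,\cdot)-\mu_2(\cdot)\delta_{\cdot=s}\bigr]\,ds.
\end{equation*}
The regular source contributes an $H^1$-smooth term once $(k,l)$ are known; the moving Dirac source is handled by the pointwise heat-kernel estimate $\|S(t)\delta_z\|_{H^1(0,1)}\lesssim t^{-3/4}$, which integrates to $\|p(x,\cdot)\|_{H^1(0,1)}\lesssim x^{1/4}$, so that $p\in L^2(0,1;H^1(0,1))$. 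The boundary flux $p_y(x,0)$ then lies in $L^2(0,1)$ in $x$ by standard parabolic trace theory, which is exactly what is needed to close the feedback into $\gamma$ and $\Upsilon$.

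Putting the pieces together, I would set up a Banach contraction on the weighted product space
\begin{equation*}
\mathcal X=H^1(\mathcal D)^2\times H^1(0,1)\times L^2(0,1;H^1(0,1)),
\end{equation*}
weighted exponentially in $x$ to absorb the Volterra constants, yielding a unique $(k,l,\gamma,p)$; then $\Upsilon\in H^1(0,1)$ follows by integrating \eqref{kerUp}. The uniform bound $\bar C_2$ then comes from the contraction estimate together with the $C^1$-bounds on $c_i,D_i,f_{ij},g_i,\mu_i,G_i$. I expect the principal obstacle to be the rigorous weak-sense treatment of the Dirac source in \eqref{kerP} and the extraction of the boundary trace $p_y(x,0)\in L^2(0,1)$ with enough regularity to feed back into the ODE for $\gamma$, since it is this trace that glues the parabolic and ODE blocks together; the remaining hyperbolic and ODE components follow the classical backstepping pattern.
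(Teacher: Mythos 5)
Your overall architecture matches the paper's: the quasi-triangular coupling you identify is exactly how the paper organizes its argument, the hyperbolic pair $(k,l)$ is integrated along characteristics from the data \eqref{kerKb}, \eqref{kerLGmab} (compare \eqref{eq:l}, \eqref{eq:k}), and $\gamma$, $\Upsilon$ are recovered by quadrature once $(k,l,p)$ are in hand. Where you genuinely diverge is in the machinery. The paper does not set up a Banach contraction on a weighted product space; it uses successive approximations, writing $k=\sum_m k_m$, $l=\sum_m l_m$, $p=\sum_m p_m$ as in \eqref{eq:lkp}, solving the $m=0$ terms explicitly ($k_0,l_0$ by characteristics, $p_0$ by the Fourier sine expansion \eqref{eq:p}, which diagonalizes the Dirichlet Laplacian and turns the Dirac source into explicitly computable coefficients), and proving convergence of the series via Lyapunov-type functionals $\widetilde V_m(x)$ in the march variable $x$ plus a Gronwall inequality; the $H^1(\mathcal D)$ regularity of $k,l$ is then obtained by rerunning the whole scheme for $k_y,l_y$. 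Your Duhamel/semigroup formulation with $\|S(t)\delta_z\|_{H^1}\lesssim t^{-3/4}$ is a clean, coordinate-free substitute for the explicit series and yields the same $L^2(0,1;H^1(0,1))$ membership of $p$; the two convergence mechanisms (exponentially weighted contraction versus summable successive approximations) are interchangeable for this Volterra-in-$x$ structure, so the routes buy essentially the same conclusion with different bookkeeping.

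The one step where your plan asserts more than standard theory delivers is the boundary flux $p_y(x,0)\in L^2(0,1)$, which is precisely the quantity that closes the loop into \eqref{kerGma} and \eqref{kerUp}. Maximal $L^2$ parabolic regularity, and hence the usual $L^2$ normal-trace estimate, applies to the regular forcing $h(x-y)H(x,y)$ but not to the moving Dirac $-\mu_2(y)\delta(y-x)$: the corresponding solution is not in $L^2(0,1;H^2(0,1))$, since $p_y(x,\cdot)$ jumps by $\mu_2(x)/(\varepsilon_2\kappa_0)$ across $y=x$. What rescues the argument is that the Dirac sits on the diagonal, at distance $x$ from the boundary $y=0$, so its contribution to the flux there is an interior-regularity effect: the Gaussian factor in the $y$-derivative at $y=0$ of the heat kernel centered at $y=s$ tames the $(x-s)^{-3/2}$ singularity and yields a bounded flux. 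You correctly flag this as the principal obstacle, but it must actually be carried out, either by this explicit kernel computation or by the series representation \eqref{eq:p0y} used in the paper; invoking ``standard parabolic trace theory'' alone does not close it. With that step supplied, and with the boundedness of the characteristic integral operators on $H^1(\mathcal D)^2$ checked by differentiating the integral formulas (as the paper does for $k_y,l_y$), your route is a valid alternative proof of Lemma~\ref{kernel klp}.
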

\begin{proof}
	The proof of Lemma \ref{kernel klp} is shown in Appendix\ref{pfk1}.
\end{proof}
Using the transformations \eqref{cxi2bt}, \eqref{cbt2xi}, the original  \eqref{gsxi}--\eqref{gsx} is converted to the following equations
\begin{align}
	\varepsilon_2\beta_t(x,t)=&\beta_x(x,t), \label{gtsbt} \\
	\varepsilon_1\eta_t(x,t)=&-\eta_x(x,t)+c_1(x)\beta(x,t)+g_1(x)\beta(0,t)  \notag \\
	&+\int_{0}^{x}\mathcal{F}_{11}(x,y)\eta(y,t)dy+\mu_1(x)u(x,t)  \notag \\
	&+\int_{0}^{x}\mathcal{F}_{12}(x,y)\beta(y,t)dy+\mathcal{D}_1(x)X(t) \notag \\
	&+\int_{0}^{1}\mathcal{F}_{13}(x,y)u(y,t)dy+\mathcal{D}_2(x)d(t),  \label{gtset} \\
	\eta(0,t)=&\beta(0,t)+(C+K)X(t), \label{gtsalb} \\
	\dot{X}(t)=&(A+BK)X(t)+B\beta(0,t) \label{gstx} 
\end{align}
where the design parameters $K\in\mathbb{R}^{1\times n}$ are chosen such that the system $A+BK$ is Hurwitz, and where $\mathcal{F}_{11}(x,y)$, $\mathcal{F}_{12}(x,y)$, $\mathcal{F}_{13}(x,y)$, $\mathcal{D}_1(x)$, $\mathcal{D}_2(x)$ are shown in Appendix\ref{F}.
According to \cite{bib38}, recalling the equations \eqref{gsz}, \eqref{gsxib} and the transformations \eqref{cxi2bt}, \eqref{cbt2xi}, through a lengthy calculation involving a change in the order of integration within a double integral, the dynamic right boundary, originating from the ODE in the input channel, of \eqref{gtsbt}--\eqref{gstx} is obtained as 
\begin{align}
	\beta_t(1,t)&=-q_1\eta_t(1,t)+h_1\beta(1,t)+h_2\beta(0,t)+h_3\eta(1,t) \notag \\
	&+h_4\eta(0,t)+h_5X(t)+h_6d(t)+q_0U(t) \notag \\
	&+\int_{0}^{1}H_7(y)\beta(y,t)dy+\int_{0}^{1}H_8(y)\eta(y,t)dy \notag \\
	&+\int_{0}^{1}H_9(y)u(y,t)dy+\int_{0}^{1}H_{10}(y)u_y(y,t)dy \label{bet}
\end{align}
where the gains $h_1$, $h_2$, $h_3$, $h_4$, $h_5$, $h_6$, $H_7$, $H_8$, $H_9$, $H_{10}$ are shown in Appendix\ref{h}. 
Choosing
\begin{align}
	U(t)&=\frac{1}{q_0}\bigg[ -(\acute{c}_1+h_1)\beta(1,t)+q_1\eta_t(1,t)-h_2\beta(0,t) \notag \\
	&-h_3\eta(1,t)-h_4\eta(0,t)-h_5X(t)-h_6d(t) \notag \\
	&-\int_{0}^{1}H_7(y)\beta(y,t)dy-\int_{0}^{1}H_8(y)\eta(y,t)dy \notag \\
	&-\int_{0}^{1}H_9(y)u(y,t)dy-\int_{0}^{1}H_{10}(y)u_y(y,t)dy\bigg], \label{Utag}
\end{align}
recalling \eqref{bet}, we have the right (dynamic) boundary of the target system is 
\begin{align}
	\beta_t(1,t)=-\acute{c}_1\beta(1,t)\label{eq:beta1}
\end{align}
where $\acute{c}_1$ is the positive design parameter whose condition is given by \eqref{eq:conditionbarc1}. Now we arrive at the target system \eqref{gtsbt}--\eqref{gstx}, \eqref{eq:beta1} via the proposed transformations and by the choice of the control input \eqref{Utag}. 

\begin{Rem}
	Like \cite{bib38}, the right boundary of the hyperbolic PDEs and the input $z$-ODE are captured as a dynamic right boundary of the hyperbolic PDEs in the control design, and thus the target system \eqref{gtsbt}--\eqref{gstx} is achieved with a dynamic boundary \eqref{eq:beta1}, which encapsulates the dynamics of the input ODE. 
\end{Rem}

Substituting \eqref{cxi2bt}, \eqref{gseta} and \eqref{gsxi} into \eqref{Utag}, we get the controller expressed by the original states
\begin{align}
	U(t)=&n_1\xi(1,t)+n_2\eta(1,t)+n_3X(t)+n_4d(t) \notag \\	&+n_5\xi(0,t)+n_6\eta_x(1,t)+\int_{0}^{1}N_7(y)\xi(y,t)dy \notag \\
	&+\int_{0}^{1}N_8(y)\eta(y,t)dy+\int_{0}^{1}N_9(y)u(y,t)dy \notag \\
	&+\int_{0}^{1}N_{10}(y)u_y(y,t)dy \label{U(t)}
\end{align}
where $n_1$, $n_2$, $n_3$, $n_4$, $n_5$, $n_6$, $N_7$, $N_8$, $N_9$, $N_{10}$ are given in Appendix \ref{n}.
\subsection{Result of the state-feedback control}
The property of the state-feedback closed-loop system is presented as follows. 
\begin{Thm}\label{tosb}
	If  initial value $(z(0),\xi(x,0),\eta(x,0),u(x,0)$, $ X(0)) \in \mathcal H$, the closed-loop system consists of the plant \eqref{gsz}--\eqref{gxd} and the controller \eqref{U(t)} has the following properties:
	
	1) There exits a unique solution in the sense of in \eqref{eq:space} in Definition \ref{def1}.
	
	2) For the distal ODE, $|X(t)|$ is exponentially convergent to zero. The decay rate can be adjusted by the design parameters $K$ and $\acute{c}_1$.
	Besides, all states in the closed-loop system are uniformly ultimately bounded in the sense that there exist positive constants $\Upsilon_0,\acute{C}_0,\mathcal{C}_0$ given by \eqref{mathC0} such that
	\begin{align}
		\Omega_0(t)< \Upsilon_0\Omega_0(0)e^{-\acute{C}_0 t}+\mathcal{C}_0 \label{Omeg0}
	\end{align}
	where
	$$\Omega_0(t)= | z(t)|^2+\| \xi(\cdot,t)\|^2_{H^1}+\| \eta(\cdot,t)\|^2_{H^1}+| X(t)|^2+\|u(\cdot,t)\|^2_{H^1},$$ and where $\mathcal{C}_0$ is bounded by $\acute D_d$. If $\acute D_d=0$ (i.e., no external disturbance), the exponential stability is obtained in the sense of 
		\begin{align}
			\Omega_0(t)< \Upsilon_0\Omega_0(0)e^{-\acute{C}_0 t}.
	\end{align}
	3) The control input $U(t)$ \eqref{U(t)} is well-defined and  bounded.\end{Thm}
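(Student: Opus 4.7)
The plan is to verify the three assertions by first analyzing the target system \eqref{gtsbt}--\eqref{gstx} together with the dynamic right boundary \eqref{eq:beta1}, and then transferring the conclusions to the original states $(z,\xi,\eta,u,X)$ through the bounded, invertible backstepping transformations \eqref{cxi2bt}--\eqref{cbt2xi}, whose regularity is guaranteed by Lemma \ref{kernel klp}. The cascade structure is $d\to\beta\to X\to u\to\eta$, and I would exploit it step by step.

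For well-posedness (item 1), the dynamic boundary \eqref{eq:beta1} integrates to $\beta(1,t)=\beta(1,0)e^{-\acute c_1 t}$, and the transport equation $\varepsilon_2\beta_t=\beta_x$ is solved along characteristics (giving $\beta(x,t)=\beta(1,t-\varepsilon_2(1-x))$ for $t\geq\varepsilon_2(1-x)$, and from initial data otherwise), producing a unique $H^1$-valued trajectory. Feeding $\beta(0,\cdot)$ into \eqref{gstx} with $A+BK$ Hurwitz yields a unique $X\in H^1([0,\infty);\mathbb R^n)$ by variation of parameters. The heat PDE \eqref{gxu}--\eqref{gxub} with the nonhomogeneous trace $u(0,t)=qd(t)+\acute p_2 X(t)$ is solved via lifting to homogeneous boundary data plus analytic semigroup theory on $L^2(0,1)$; the transport equation \eqref{gtset} for $\eta$ with $L^2$ sources is then uniquely solvable by characteristics. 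Invertibility of the backstepping transformation transports well-posedness back to $\mathcal H$.

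For item 2, the characteristic representation gives $|\beta(0,t)|\leq|\beta(1,0)|e^{-\acute c_1(t-\varepsilon_2)}$ for $t\geq\varepsilon_2$, so input-to-state stability of \eqref{gstx} yields exponential decay of $|X(t)|$ at rate $\min\{\lambda_K,\acute c_1\}$, where $-\lambda_K$ is the spectral abscissa of $A+BK$. To establish the ultimate-boundedness estimate \eqref{Omeg0} I propose the composite Lyapunov functional
\begin{align*}
V(t)=&\,X^\top P X+r_1\beta(1,t)^2+A_1\!\int_0^1\! e^{-\delta x}\beta(x,t)^2\,dx\\
&+A_2\!\int_0^1\! e^{\delta x}\eta(x,t)^2\,dx+A_3\!\int_0^1\!(u^2+u_x^2)\,dx,
\end{align*}
augmented by analogous $H^1$-level contributions in $\beta,\eta$ (obtained by differentiating \eqref{gtsbt}--\eqref{gtset} in $x$ and repeating the procedure). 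Differentiating $V$ along the target system, integration by parts on the transport terms generates a diagonal decay $-\acute c_1\beta(1,t)^2$ from \eqref{eq:beta1} and negative interior contributions from the exponential weights; Young's inequality then absorbs the cross couplings $c_1\beta$, $g_1\beta(0,\cdot)$, $\mathcal F_{ij}$, $\mathcal D_i X$, $\mu_i u$ into these diagonal terms provided $\delta,r_1,A_i$ are small enough and $\acute c_1$ meets the threshold \eqref{eq:conditionbarc1}. The disturbance enters only via $u(0,t)=qd(t)+\acute p_2 X(t)$; a trace inequality combined with Young's inequality converts its contribution into an additive constant $\mathcal C_0\propto\bar D_d^{\,2}$, producing $\dot V\leq-\acute C_0 V+\mathcal C_0$. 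Gronwall's lemma, together with equivalence of $V$ and $\Omega_0$ (via Lemma \ref{kernel klp} and its inverse), yields \eqref{Omeg0}. Setting $\bar D_d=0$ gives $\mathcal C_0=0$, hence exponential stability.

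For item 3, the controller \eqref{U(t)} is a linear functional of boundary traces and integrals of the state over $[0,1]$; the Sobolev embedding $H^1(0,1)\hookrightarrow C([0,1])$ controls boundary values by the $H^1$ norm, and the kernel bounds in Lemma \ref{kernel klp}, the boundedness of $d(t)$, and the uniform bound on $\Omega_0(t)$ together guarantee that $U(t)$ is well-defined and uniformly bounded in $t$. The main obstacle will be the $H^1$-level Lyapunov closure in the presence of the hyperbolic-parabolic coupling: the nonhomogeneous trace of $u$ generates boundary terms when differentiating $\int_0^1 u_x^2$ in time, and the Dirac contribution in \eqref{kerP} feeds into $\mathcal F_{13}$ and hence into the $\eta$-equation. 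Delicately balancing $\delta,r_1,A_1,A_2,A_3$ and $\acute c_1$ through \eqref{eq:conditionbarc1} so that every indefinite cross term is dominated by diagonal dissipation is the technically demanding step; the remaining arguments follow standard cascade reasoning for hyperbolic-parabolic mixed systems.
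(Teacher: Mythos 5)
Your proposal follows essentially the same route as the paper: cascade well-posedness through the target system (explicit decay of $\beta(1,t)$, characteristics for the transport states, variation of parameters for $X$, lifting of the heat PDE to homogeneous boundary data), weighted Lyapunov functionals with the threshold condition \eqref{eq:conditionbarc1}, $H^1$-level estimates built from the $x$-differentiated transport equations, and transfer back to $\Omega_0$ through the invertible backstepping transformations of Lemma \ref{kernel klp}. One small correction: in the target system the disturbance also enters the $\eta$-equation in-domain through $\mathcal D_2(x)d(t)$ in \eqref{gtset}, not only through the heat-PDE trace $u(0,t)=qd(t)+\acute p_2X(t)$, but this extra channel is absorbed into the additive constant $\mathcal C_0\propto\bar D_d^2$ by Young's inequality exactly as you describe, so the argument is unaffected.
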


\begin{proof}
	The proof of Theorem $\ref{tosb}$ is shown in Appendix\ref{The of controller}.
\end{proof}

\section{State Observer}\label{observerds}
Since the distributed states required by controller \eqref{U(t)} are often inaccessible in practice, in this section, we design an observer for the unmeasured states only using the available measurements at PDE boundaries. 
\subsection{Observer design}
Relying on the measurements $\xi(0,t)$, $u(0,t)$ and $CX(t)$, like \cite{bib40} the observer is built as a copy of the plant \eqref{gsz}--\eqref{gxd} with output error injections:
\begin{align}
	\dot{\hat{z}}(t)=&c_0\hat{z}(t)+c_1\hat{\xi}(1,t)-\Gamma^z(\hat{\xi}(0,t)-\xi(0,t))\notag \\
	&+U(t), \label{obz} \\
	\varepsilon_1\hat{\eta}_t(x,t)=&-\hat{\eta}_x(x,t)+c_1(x)\hat{\xi}(x,t)+D_1(x)\hat{X}(t) \notag \\
	&+\int_{0}^{x}f_{12}(x,y)\hat{\xi}(y,t)dy+g_1(x)\hat{\xi}(0,t) \notag \\
	&+\int_{0}^{x}f_{11}(x,y)\hat{\eta}(y,t)dy+\mu_1(x)\hat{u}(x,t) \notag \\
	&+\int_{0}^{x}f_{13}(x,y)\hat{u}(y,t)dy+G_1(x)\hat{d}(t)  \notag \\
	&-\Gamma^\eta(x)(\hat{\xi}(0,t)-\xi(0,t)),	\label{obeta}  \\
	\varepsilon_2\hat{\xi}_t(x,t)=&\hat{\xi}_x(x,t)+c_2(x)\hat{\eta}(x,t)+D_2(x)\hat{X}(t) \notag \\
	&+\int_{0}^{x}f_{22}(x,y)\hat{\xi}(y,t)dy+g_2(x)\hat{\xi}(0,t) \notag \\
	&+\int_{0}^{x}f_{21}(x,y)\hat{\eta}(y,t)dy+\mu_2(x)\hat{u}(x,t) \notag \\
	&+\int_{0}^{x}f_{23}(x,y)\hat{u}(y,t)dy+G_2(x)\hat{d}(t) \notag \\
	&-\Gamma^\xi(x)(\hat{\xi}(0,t)-\xi(0,t)),  \label{obxi} \\
	\hat{\xi}(1,t)=&-q_1\hat\eta(1,t)+q_0\hat{z}(t),  \label{obxib}  \\
	\hat{\eta}(0,t)=&\xi(0,t)+CX(t),  \label{obetab} \\
	\dot{\hat{X}}(t)=&A\hat{X}(t)+B\xi(0,t)-L_xC(\hat{X}(t)-X(t)), \label{obx} \\
	\hat{u}_t(x,t)=&\kappa_0\hat{u}_{xx}(x,t), \\
	\hat{u}(0,t)=&q\hat{d}(t)+\acute{p}_2\hat{X}(t), \quad \hat{u}(1,t)=0, \label{obu} \\
	\dot{\hat{d}}(t)=&A_d\hat{d}(t)-L_d(\hat{u}(0,t)-u(0,t)) \label{obd}
\end{align}
where the observer gains \begin{align}
	L_z>c_0 \label{eq:Lz}
\end{align}
and $L_x$
is chosen such that $A-L_xC$, $A_d-L_dq$ are Hurwitz,
and where observer gains $\Gamma^z$, $\Gamma^\eta(x)$, $\Gamma^\xi(x)$ are  to be determined later.
\begin{Rem}
	In the practical application of the turbine blade, all the measurements used in the observer are available. The signal $\xi(0,t)$ can be obtained via \eqref{up0om}--\eqref{up2om}, \eqref{trm0}--\eqref{trm3}, where $\upsilon_t(0,t)$, $\upsilon_x(0,t)$, and $\theta(t)$, can be measured by an accelerometer, a strain gauge offset potentiometer, which are placed at the distal end, and an encoder associated with the motor, respectively. Also, $CX(t)$ can be derived via \eqref{up0om} and \eqref{trm3}, where $\upsilon(0,t)$ is obtained by integrating $\upsilon_t(0,t)$ obtained above.
	Temperature difference $u(1,t)$ is obtained by a temperature transducer. Using the observer built in this section, the distributed temperature gradients and structural vibrations in the Timoshenko beam can be estimated.
\end{Rem}

Define the observer error as
\begin{align}
		&[\widetilde{X}(t),\widetilde{d}(t),\widetilde{\xi}(x,t),\widetilde{\eta}(x,t),
		\widetilde{u}(x,t)]=[\hat{X}(t),\hat{d}(t),\hat{\xi}(x,t), \notag \\
		&\hat{\eta}(x,t),\hat{u}(x,t)]-[X(t),d(t),\xi(x,t),\eta(x,t),u(x,t)]. \label{widSt}
\end{align}
According to \eqref{obz}--\eqref{obd}, the observer error dynamics can be obtained as
\begin{align}
	\dot{\widetilde{z}}(t)=&c_0\widetilde{z}(t)+c_1\widetilde{\xi}(1,t)-\Gamma^z\widetilde{\xi}(0,t), \label{erz} \\
	\varepsilon_1\widetilde{\eta}_t(x,t)=&-\widetilde{\eta}_x(x,t)+c_1(x)\widetilde{\xi}(x,t)+D_1(x)\widetilde{X}(t)  \notag \\
	&+\int_{0}^{x}f_{12}(x,y)\widetilde{\xi}(y,t)dy+g_1(x)\widetilde{\xi}(0,t) \notag \\
	&+\int_{0}^{x}f_{11}(x,y)\widetilde{\eta}(y,t)dy+\mu_1(x)\widetilde{u}(x,t) \notag \\
	&+\int_{0}^{x}f_{13}(x,y)\widetilde{u}(y,t)dy+G_1(x)\widetilde{d}(t) \notag \\
	&-\Gamma^\eta(x)\widetilde{\xi}(0,t),	\label{ereta} \\ 	\varepsilon_2\widetilde{\xi}_t(x,t)=&\widetilde{\xi}_x(x,t)+c_2(x)\widetilde{\eta}(x,t)+D_2(x)\widetilde{X}(t)  \notag \\
	&+\int_{0}^{x}f_{22}(x,y)\widetilde{\xi}(y,t)dy+g_2(x)\widetilde{\xi}(0,t) \notag \\
	&+\int_{0}^{x}f_{21}(x,y)\widetilde{\eta}(y,t)dy+\mu_2(x)\widetilde{u}(x,t) \notag \\
	&+\int_{0}^{x}f_{23}(x,y)\widetilde{u}(y,t)dy+G_2(x)\widetilde{d}(t)\notag \\
	&-\Gamma^\xi(x)\widetilde{\xi}(0,t),  \label{erxi}  \\
	\widetilde{\xi}(1,t)=&-q_1\widetilde{\eta}(1,t)+q_0\widetilde{z}(t),\quad
	\widetilde{\eta}(0,t)=0, \label{eretab} \\
	\dot{\widetilde{X}}(t)=&(A-L_xC)\widetilde{X}(t), \label{erx} \\
	\widetilde{u}_t(x,t)=&\kappa_0\widetilde{u}_{xx}(x,t), \\ 
	\widetilde{u}(0,t)=&q\widetilde{d}(t)+\acute{p}_2\widetilde{X}(t), \quad \widetilde{u}(1,t)=0, \label{eru}  \\
	\dot{\widetilde{d}}(t)=&(A_d-L_dq)\widetilde{d}(t)-L_d\acute{p}_2\widetilde{X}(t). \label{erd}
\end{align}
To decouple the PDE $\widetilde{\xi}(x,t)$ \eqref{erxi} with the ODE state $\widetilde{z}(t)$ \eqref{erz} and eliminate the in-domain coupling between PDEs in \eqref{ereta}, \eqref{erxi}, we introduce the invertible backsteppting transformations:
\begin{align}
	\widetilde{z}(t)=&\widetilde{Y}(t)+\int_{0}^{1}M(x)\widetilde{\beta}(x,t)dx, \label{z2Y} \\
	\widetilde{\xi}(x,t)=&\widetilde{\beta}(x,t)+\int_{0}^{x}\psi(x,y)\widetilde{\beta}(y,t)dy, \label{xitob} \\
	\widetilde{\eta}(x,t)=&\widetilde{\alpha}(x,t)+\int_{0}^{x}\phi(x,y)\widetilde{\beta}(y,t)dy \label{ettoa}
\end{align}
where the kernel functions $\psi(x,y),\phi(x,y)$ on $\acute{\mathcal{D}}=\{0 \leq x \leq y \leq 1 \}$ satisfy
\begin{align}
	&-\phi_x(x,y)+\frac{\varepsilon_1}{\varepsilon_2}\phi_y(x,y)+f_{12}(x,y)-\int_{x}^{y}\phi(z,y)f_{11}(x,z)dz\notag \\
	&-\int_{x}^{y}\psi(z,y)f_{12}(x,z)dz+c_1(x)\psi(x,y)=0, \label{phi} \\
	&\phi(x,x)=\frac{\varepsilon_1}{\varepsilon_1+\varepsilon_2}c_1(x), \label{phib} \\
	&\psi_x(x,y)+\psi_y(x,y)+f_{22}(x,y)-\int_{x}^{y}\phi(z,y)f_{21}(x,z)dz\notag \\
	&-\int_{x}^{y}\psi(z,y)f_{22}(x,z)dz+c_2(x)\phi(x,y)=0 . \label{psi}
\end{align}
with the boundary condition 
\begin{align}
	\psi(1,y)&=-q_1\phi(1,y)+q_0M(y) \label{psib}
\end{align}
and where the function $M(x)$ on $\{0\leq x\leq 1\}$ satisfies
\begin{align}
	M_x(x)&+\varepsilon_2 c_0M(x)+\varepsilon_2 c_1\psi(1,x)=0, \label{K0(x)} \\
	M(1)&=\varepsilon_2( \frac{L_z}{q_0}+c_1). \label{K0b}
\end{align}
The well-posedness of the above kernel conditions is given as follows.
\begin{Lem}\label{obkr}
	The equation set \eqref{phi}--\eqref{K0b} has a unique solution $\phi\in C^1(\acute{\mathcal{D}})$, $\psi\in C^1(\acute{\mathcal{D}})$, and $M\in C^1(0,1)$.
\end{Lem}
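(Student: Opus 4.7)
The plan is to reformulate \eqref{phi}--\eqref{K0b} as a coupled system of Volterra-type integral equations via the method of characteristics, and then close the argument by successive approximations in $C^{0}(\acute{\mathcal D})$.

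First I would integrate each relation along its characteristics. For the $\psi$-equation \eqref{psi}, whose characteristics are lines of slope one, I integrate back from the boundary $x=1$ where $\psi(1,y)$ is specified by \eqref{psib}, yielding a Volterra representation of $\psi(x,y)$ as $\psi(1,y-x+1)$ plus integrals of $\phi$ and $\psi$ along the characteristic. For the $\phi$-equation \eqref{phi}, the characteristics have slope $-\varepsilon_1/\varepsilon_2$, and an elementary computation shows that each $(x_0,y_0)\in\acute{\mathcal D}$ traces back uniquely to the diagonal point $(x_\ast,x_\ast)$ with $x_\ast=x_0+\varepsilon_2(y_0-x_0)/(\varepsilon_1+\varepsilon_2)$, so integration from the datum \eqref{phib} along that segment produces a Volterra representation of $\phi(x_0,y_0)$ in terms of $\phi$ and $\psi$. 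For $M$, the linear ODE \eqref{K0(x)} is integrated backward from $x=1$ using \eqref{K0b} to give the explicit formula
\[
M(x)=e^{\varepsilon_2 c_0(1-x)}M(1)+\varepsilon_2 c_1\int_{x}^{1}e^{\varepsilon_2 c_0(z-x)}\psi(1,z)\,dz,
\]
so $M$ is a bounded linear functional of $\psi(1,\cdot)$ alone.

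Substituting this formula for $M$ into the boundary datum \eqref{psib} closes the loop and reduces the coupled problem to a Volterra system $(\phi,\psi)\mapsto(\mathcal T_1(\phi,\psi),\mathcal T_2(\phi,\psi))$ on $C^0(\acute{\mathcal D})^2$ whose kernels are bounded and continuous thanks to the assumed $C^1$-regularity of $c_i$ and $f_{ij}$. I would then run standard successive approximations, initialized at the zero pair, and bound the $n$-th iterate difference in the sup-norm by $(CR)^n/n!$ for appropriate constants $C,R$; geometric summability gives a unique continuous fixed point, from which $M$ is recovered uniquely via its explicit formula. Differentiating the integral representations and using the $C^1$-smoothness of the data then upgrades $\phi,\psi,M$ to the regularity classes stated in the lemma; uniqueness in $C^1$ follows from the same Gronwall-type estimate applied to the difference of any two putative solutions.

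The main obstacle is the tri-directional coupling at the boundary $x=1$: $\psi(1,y)$ depends on $\phi(1,y)$ and $M(y)$, while $M$ itself depends on $\psi(1,\cdot)$. I resolve this by the algebraic substitution just described, which converts the $\psi(1,\cdot)$-trace into a one-dimensional Volterra equation (with $\phi(1,\cdot)$ held fixed) that is trivially solvable by Picard iteration; once this elimination is carried out, the remaining two-dimensional fixed-point iteration is routine and mirrors the classical Volterra treatment of backstepping kernels in, for example, \cite{bib28,bib38}.
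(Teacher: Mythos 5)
Your proof is correct and reaches the stated conclusion, but it takes a more self-contained route than the paper. The paper disposes of Lemma~\ref{obkr} in two sentences: a change of variables following \cite{bib44} maps $\acute{\mathcal{D}}$ onto $\mathcal{D}$ and puts \eqref{phi}--\eqref{psib} into the form of the control kernel system \eqref{kerK}--\eqref{kerLGmab}, so that well-posedness is inherited from Lemma~\ref{kernel klp}, while $M$ is read off from the scalar problem \eqref{K0(x)}, \eqref{K0b}. You instead run the classical characteristics-plus-Picard argument directly: Volterra representations along the slope-one and slope-$(-\varepsilon_1/\varepsilon_2)$ characteristics (your tracing of $(x_0,y_0)$ back to the diagonal point $x_\ast=(\varepsilon_1x_0+\varepsilon_2y_0)/(\varepsilon_1+\varepsilon_2)$ and your variation-of-constants formula for $M$ both check out), a sup-norm iteration with the $(CR)^n/n!$ bound, and a bootstrap to $C^1$. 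Your route buys two things the paper's does not. First, you explicitly close the circular coupling $\psi(1,\cdot)\leftrightarrow M$ created by \eqref{psib} and \eqref{K0(x)}: substituting the explicit formula for $M$ into \eqref{psib} yields a scalar Volterra equation of the second kind for the trace $\psi(1,\cdot)$, whereas the paper's remark that $M$ ``can be easily obtained from the initial value problem'' silently treats the forcing $\psi(1,x)$ as known. Second, your sup-norm iteration delivers the $C^1$ regularity actually claimed in the lemma, while Lemma~\ref{kernel klp}, to which the paper defers, only furnishes $H^1$ weak solutions, so your argument is better matched to the statement. The one caveat---shared with the paper, so not a gap relative to it---is that genuine $C^1$ smoothness across the characteristic emanating from the corner $(1,1)$ requires the usual compatibility between the diagonal datum \eqref{phib} and the boundary datum \eqref{psib}; absent that, one obtains continuous, piecewise-$C^1$ kernels, which is all the subsequent Lyapunov analysis actually uses.
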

\begin{proof}
	By considering the alternate variables $\bar{M}(\chi,y)$ and $\bar{N}(\chi,y)$ introduced in \cite{bib44}, which transform the domain from $\acute{\mathcal{D}}$ to $\mathcal{D}$, the kernel conditions of  $\phi(x,y), \psi(x,y)$ have the simplified structure as the control kernel system \eqref{kerK}--\eqref{kerLGmab}. Moreover, the explicit solution of $M(x)$ can be easily obtained from the initial value problem \eqref{K0(x)}, \eqref{K0b}.
\end{proof}

Applying the transformations \eqref{z2Y}--\eqref{ettoa}, choosing the observer gain $\Gamma^z(t)$, $\Gamma^\eta(x)$ and $\Gamma^\xi(x)$  as 
\begin{align}
	\Gamma^z &= \frac{M(0)}{\varepsilon_2},  \label{Gamz}\\
	\Gamma^\eta(x)&=\frac{\varepsilon_1}{\varepsilon_2}\phi(x,0)+g_1(x),\label{Gamet} \\
	\Gamma^\xi(x)&=\psi(x,0)+g_2(x), \label{Gamxi}
\end{align} 
we convert \eqref{erz}--\eqref{eretab} to the following system:
\begin{align}
	\dot{\widetilde{Y}}(t)=&(c_0-L_z)\widetilde{Y}(t)-\int_{0}^{1}\frac{M(x)}{\varepsilon_2}N_4(x)dx\widetilde{d}(t) \notag \\
	&+\int_{0}^{1}\mathcal{G}_{3}(x)\widetilde{u}(x,t)dx+\int_{0}^{1}\mathcal{G}_{1}(x)\widetilde{\alpha}(x,t)dx \notag \\
	&-\int_{0}^{1}\frac{M(x)}{\varepsilon_2}N_2(x)dx\widetilde{X}(t)+\frac{L_z}{q_0}\widetilde{\alpha}(1,t), \label{tgY} \\
	\varepsilon_1\widetilde{\alpha}_t(x,t)=&-\widetilde{\alpha}_x(x,t)+\int_{0}^{x}S_{11}(x,y)\widetilde{\alpha}(y,t)dy \notag \\
	&+\int_{0}^{x}S_{13}(x,y)\widetilde{u}(y,t)dy+\mu_1(x)\widetilde{u}(x,t) \notag \\
	&+N_1(x)\widetilde{X}(t)+N_3(x)\widetilde{d}(t), \label{tgal}  \\
	\varepsilon_2\widetilde{\beta}_t(x,t)=&\widetilde{\beta}_x(x,t)+\int_{0}^{x}S_{21}(x,y)\widetilde{\alpha}(y,t)dy+c_2(x)\widetilde{\alpha}(x,t) \notag \\
	&+\int_{0}^{x}S_{23}(x,y)\widetilde{u}(y,t)dy+\mu_2(x)\widetilde{u}(x,t) \notag \\
	&+N_2(x)\widetilde{X}(t)+N_4(x)\widetilde{d}(t), \label{tgbt} \\
	\widetilde{\beta}(1,t)=&-q_1\widetilde{\alpha}(1,t)+q_0\widetilde{Y}(t), \quad \widetilde{\alpha}(0,t)=0 \label{tgalbtb} 
\end{align}
where $\mathcal{G}_{1}(x)$, $\mathcal{G}_{3}(x)$, $S_{11}(x,y)$, $S_{13}(x,y)$, $S_{21}(x,y)$, $S_{23}(x,y)$, $N_1(x)$, $N_2(x)$, $N_3(x)$, $N_4(x)$ are shown in Appendix\ref{S11}.

Finally, the target observer error system is \eqref{erx}--\eqref{eru}, \eqref{tgY}--\eqref{tgalbtb}.
\subsection{Stability of the Observer Error System}
\begin{Thm}\label{ertg}
	Considering the observer system \eqref{obz}--\eqref{obd} with observer gains \eqref{Gamz}--\eqref{Gamxi} and initial values $(\hat{z}(0),\hat{\xi}(x,0),\hat{\eta}(x,0)$, $\hat{u}(x,0), \hat{X}(0), \hat{d}(0)) \in \mathcal H$, the observer error system \eqref{erz}--\eqref{eru} has the following properties:
	
	1) There exits a unique solution in the sense of $(\widetilde{z},\widetilde{\xi},\widetilde{\eta}, \widetilde{u}, \widetilde{X}, \widetilde{d}) \in H^1([0,\infty);\mathbb R)\times H^{1}([0,1]\times [0,\infty))^3 \times H^1([0,\infty);\mathbb R^n)\times H^1([0,\infty);\mathbb R^m)$.
	
	2) The observer error system are exponentially stable in the sense that there exist positive constants $\varUpsilon_e,\acute{C}_e$ \eqref{Upe} such that
	\begin{align}
		\Omega_e(t)\leq \varUpsilon_e\Omega_e(0)e^{-\acute{C}_et} \label{Ome}
	\end{align}
	where $\Omega_e(t)=| \widetilde{z}(t)|^2+\| \widetilde{\eta}(\cdot,t)\|^2_{H^1}+\| \widetilde{\xi}(\cdot,t)\|^2_{H^1}+| \widetilde{X}(t)|^2$ $ +$$\| \widetilde{u}(\cdot,t)\|^2_{H^1}+|\widetilde{d}(t)|^2$ and $\acute{C}_e$ can be adjusted by the design parameters $L_z$, $L_d$ and $L_x$.
\end{Thm}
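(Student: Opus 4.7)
The plan is to exploit the triangular cascade structure of the target observer error system \eqref{erx}--\eqref{eru}, \eqref{tgY}--\eqref{tgalbtb}: the finite-dimensional error $\widetilde X$ is autonomously Hurwitz via $A-L_xC$; $\widetilde d$ is a cascade driven only by $\widetilde X$ through the Hurwitz matrix $A_d-L_dq$; the heat error $\widetilde u$ is a homogeneous diffusion equation with boundary data $q\widetilde d+\acute p_2\widetilde X$ at $x=0$ and zero at $x=1$; the transport state $\widetilde\alpha$ has a homogeneous left boundary and is driven in-domain by $\widetilde X$, $\widetilde d$ and $\widetilde u$; $\widetilde Y$ is a scalar Hurwitz ODE (since $L_z>c_0$) driven by $\widetilde\alpha$, $\widetilde u$, $\widetilde X$, $\widetilde d$; finally $\widetilde\beta$ is transport to the left whose right boundary is $-q_1\widetilde\alpha(1,t)+q_0\widetilde Y(t)$. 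So well-posedness (part~1) follows from sequentially solving each subsystem in this order by standard semigroup/weak-solution arguments and inverting the bounded backstepping transformations \eqref{z2Y}--\eqref{ettoa} together with Lemma~\ref{obkr}.

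For the exponential stability (part~2) I would build a weighted Lyapunov functional in the target coordinates,
\begin{align}
V(t)=&\,a_Y|\widetilde Y(t)|^2+a_X\widetilde X(t)^\top P_X\widetilde X(t)+a_d\widetilde d(t)^\top P_d\widetilde d(t)\notag\\
&+a_\alpha\!\int_0^1\!e^{-\delta x}\widetilde\alpha(x,t)^2dx+a_\beta\!\int_0^1\!e^{\delta x}\widetilde\beta(x,t)^2dx\notag\\
&+a_u\!\int_0^1\!\widetilde u(x,t)^2dx,
\end{align}
where $P_X,P_d$ are positive definite solutions of the Lyapunov equations for $A-L_xC$ and $A_d-L_dq$, and $\delta>0$ is chosen so that the transport-in-domain boundary terms $-e^{-\delta}\widetilde\alpha(1,t)^2$ and $-\widetilde\beta(0,t)^2$ dominate (after applying the right-boundary coupling $\widetilde\beta(1,t)=-q_1\widetilde\alpha(1,t)+q_0\widetilde Y(t)$). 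Cross couplings (the $N_i$, $S_{ij}$, $\mathcal G_i$ in \eqref{tgY}--\eqref{tgbt}) are absorbed by Young's inequality and the uniform bounds on the kernels from Lemma~\ref{obkr}. By picking the weights in the order $a_X\gg a_d\gg a_u\gg a_\alpha\gg a_Y\gg a_\beta$ (each small relative to the previous), every indefinite cross term becomes a small fraction of a negative definite quadratic in a previously handled variable, yielding $\dot V\le -2\acute C_e V$ for some $\acute C_e>0$. Integration gives an $L^2$-type exponential decay in the target variables.

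To upgrade to the $H^1$ norm required by $\Omega_e(t)$, I would differentiate each equation once in time and observe that the time derivatives satisfy a system of exactly the same cascade structure (the PDEs are linear, the boundary conditions are either zero or themselves exponentially decaying by the already-proved $L^2$ estimate, plus a compatible initial condition from the equations themselves evaluated at $t=0$). Applying the same Lyapunov construction to the time-differentiated system yields exponential decay of $\|\widetilde\alpha_t\|$, $\|\widetilde\beta_t\|$, $\|\widetilde u_t\|$. Combining with the PDEs themselves, which express $\widetilde\alpha_x,\widetilde\beta_x$ algebraically in terms of $\widetilde\alpha_t,\widetilde\beta_t$ and lower-order terms, and using the heat equation $\kappa_0\widetilde u_{xx}=\widetilde u_t$ together with elliptic regularity to recover $\|\widetilde u_{xx}\|$ (hence $\|\widetilde u_x\|$ via interpolation and the decaying boundary traces), we obtain exponential decay of all $H^1$ norms in the target coordinates. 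The final step is to transfer the estimate to the original observer-error coordinates via the bounded, boundedly invertible transformations \eqref{z2Y}--\eqref{ettoa}, producing the bound \eqref{Ome} with $\varUpsilon_e$ and $\acute C_e$ depending on $L_z$, $L_x$, $L_d$ through $P_X,P_d$ and the kernel bounds.

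The main obstacle I anticipate is calibrating the exponential weight $\delta$ and the multipliers $a_\bullet$ so that the boundary contribution $-e^{-\delta}\widetilde\alpha(1,t)^2$ of the $\widetilde\alpha$-equation cancels the term $q_1^2\widetilde\alpha(1,t)^2$ generated by the right boundary of $\widetilde\beta$, while simultaneously controlling the in-domain couplings through $\widetilde X,\widetilde d,\widetilde u$; this forces a nontrivial ordering of smallness for the weights. A secondary technicality is the non-homogeneous boundary condition of $\widetilde u$ at $x=0$, which I would handle by a standard lifting $\widetilde u=\widetilde v+(1-x)(q\widetilde d+\acute p_2\widetilde X)$ so that $\widetilde v$ satisfies homogeneous Dirichlet data with an exponentially decaying forcing, making the Poincaré-based $H^1$ estimate straightforward.
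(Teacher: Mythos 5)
Your overall strategy coincides with the paper's: work in the target observer-error coordinates \eqref{tgY}--\eqref{tgalbtb}, build a weighted Lyapunov functional with exponential weights $e^{\mp\delta x}$ on the two transport states, Lyapunov matrices for $A-L_xC$ and $A_d-L_dq$, absorb the $S_{ij}$, $N_i$, $\mathcal G_i$ couplings by Young/Cauchy--Schwarz with a careful ordering of the multipliers, and transfer back through the bounded invertible transformations \eqref{z2Y}--\eqref{ettoa}. Where you genuinely diverge is the $H^1$ upgrade. The paper augments the base functional $V_3$ (which already contains $\int_0^1\widetilde u_x^2\,dx$) with a second functional $V_5$ built from the \emph{spatial} derivatives $\int_0^1 e^{\mp\delta_3 x}\widetilde\alpha_x^2\,dx$, $\int_0^1 e^{\pm\delta_3x}\widetilde\beta_x^2\,dx$, differentiating the transport equations in $x$ and handling the resulting boundary traces via the inequality $\widetilde\beta_x(1,t)^2\lesssim \widetilde\alpha_x(1,t)^2+\cdots$. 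You instead differentiate in \emph{time} and recover $\widetilde\alpha_x,\widetilde\beta_x$ algebraically from the equations. For the hyperbolic states these two routes are essentially equivalent (the equations exchange $\partial_t$ and $\partial_x$ up to bounded lower-order terms), and your version arguably requires slightly less bookkeeping of boundary terms.

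One concrete caveat: for the heat component your route needs $\widetilde u_t(\cdot,0)=\kappa_0\widetilde u_{xx}(\cdot,0)\in L^2(0,1)$, i.e.\ $H^2$ initial data, whereas the theorem only assumes initial data in $\mathcal H$, which gives $\widetilde u(\cdot,0)\in H^1(0,1)$. Without extra compatibility assumptions, $\|\widetilde u_t(\cdot,t)\|$ may be singular as $t\to 0^+$, so "applying the same Lyapunov construction to the time-differentiated system" is not directly licensed for the parabolic subsystem. The fix is exactly what the paper does: keep $\frac{\acute a_2}{2\kappa_0}\int_0^1\widetilde u_x^2\,dx$ in the base functional and use the standard parabolic $H^1$ energy identity $\frac{d}{dt}\int_0^1\widetilde u_x^2\,dx=2[\widetilde u_x\widetilde u_t]_0^1-2\kappa_0\int_0^1\widetilde u_{xx}^2\,dx$ (after your lifting to homogeneous Dirichlet data), which works for $H^1$ initial data and supplies the good $-\int\widetilde u_{xx}^2$ term needed to dominate the trace contributions. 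With that adjustment, restricting the time-differentiation to the transport/ODE part only, your argument closes and yields \eqref{Ome} with the same dependence of $\acute C_e$ on $L_z$, $L_x$, $L_d$.
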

\begin{proof}
	The proof is shown in Appendix\ref{The 3}.
\end{proof}
\section{Output-Feedback Closed-Loop System}\label{outputfd}
\begin{figure}[!t]
	\centerline{\includegraphics[width=\columnwidth]{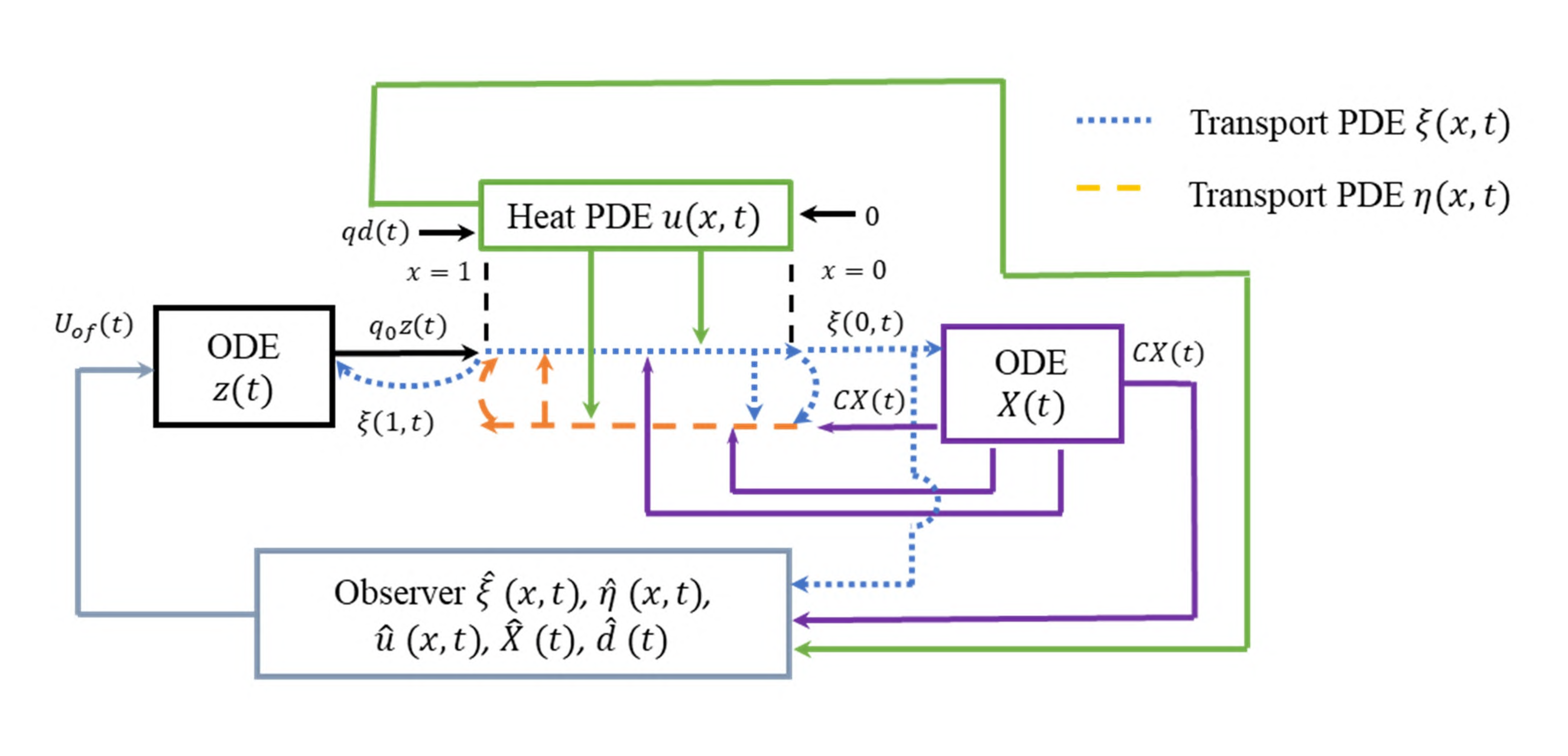}}
	\caption{The diagram of closed-loop system.}
	\label{Fg2}
\end{figure}
Substituting all the unmeasurable states in the state feedback control law \eqref{U(t)} with the observer states, the output-feedback controller can be obtained as
\begin{align}
	U_{of}(t)=&n_1\hat{\xi}(1,t)+n_2\hat{\eta}(1,t)+n_3\hat{X}(t)+n_4\hat{d}(t) \notag \\
	&+n_5\xi(0,t)+n_6\hat{\eta}_x(1,t)+\int_{0}^{1}N_7(y)\hat{\xi}(y,t)dy \notag \\
	&+\int_{0}^{1}N_8(y)\hat{\eta}(y,t)dy+\int_{0}^{1}N_9(y)\hat{u}(y,t)dy \notag \\
	&+\int_{0}^{1}N_{10}(y)\hat{u}_y(y,t)dy. \label{Uof(t)}
\end{align}
The diagram of the closed-loop system is shown in Fig. \ref{Fg2}, whose properties are given as follows.
\begin{Thm}\label{uof}
	For all initial data  $(z(0),\xi(x,0),\eta(x,0),$ $u(x,0), X(0)) \in \mathcal H$
	and $(\hat{z}(0),\hat{\xi}(x,0),\hat{\eta}(x,0),\hat{u}(x,0),$ $\hat{X}(0),\hat{d}(0)) \in \mathcal H$, the output-feedback closed-loop system consisting of the plant \eqref{gsz}--\eqref{gxd}, the observer \eqref{obz}--\eqref{obd}, and the controller \eqref{Uof(t)},  has the following properties:
	
	1)There exits a week unique solution in the sense of $(z,\hat{z},\xi,\hat{\xi},\eta,\hat{\eta},u,\hat{u}, X,\hat{X},\hat{d}) \in L^{\infty}([0,\infty);\mathbb R)^2\times L^{\infty}([0,\infty);H^1(0,1))^6 \times H^1([0,\infty);$ $\mathbb R^n)^2 \times L^{\infty}([0,\infty);$ $\mathbb R^m)$.
	
	2) For the distal ODE, $| X(t)|$ is exponentially convergent to zero, at a rate determined by the design parameters $\acute{c}_1$, $K$, $L_x$, $L_d$ and $L_z$.
	Moreover, there exist a positive constant $\Upsilon_{a}$ such that
	\begin{align}
		\Omega_{a}(t)\leq& \Upsilon_{a}{\Omega_{a}(0)}e^{-\acute{C}_{a}t}+\mathcal{C}_{a}
	\end{align}
	where $	\Omega_{a}(t)$ $=\| \eta(\cdot,t)\|^2_{H^1}+\| \hat{\eta}(\cdot,t)\|^2_{H^1}+\| \xi(\cdot,t)\|^2_{H^1}+\| \hat{\xi}(\cdot,t)\|^2_{H^1}+| z(t)|^2+| \hat{z}(t)|^2+\lvert X(t)\lvert^2+| \hat{X}(t)|^2+| \hat{d}(t)|^2$. The delay rate $\acute{C}_{a}=\min\{ \acute{C}_{0}, \acute{C}_{e} \}$ depends on the decay rate $\acute{C}_{0}$ of the state-feedback closed-loop system and that $\acute{C}_{e}$ of the observer error system. The positive constant $\mathcal{C}_{a}=2\mathcal{C}_0+{(1+\Upsilon_{a})\bar{D}_d^2}$ is bounded by $\bar{D}_d$, where $\mathcal C_0$ is given by \eqref{mathC0}.
		If $\bar{D}_d=0$ (i.e., no external disturbance), the exponential stability is obtained in the sense of
		\begin{align}
			\Omega_{a}(t)\leq& \Upsilon_{a}\Omega_{a}(0)e^{-\acute{C}_{a}t}.
	\end{align}
	
	3) The output-feedback control law \eqref{Uof(t)} is well-defined and bounded.\end{Thm}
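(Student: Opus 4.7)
The plan is to exploit the cascade structure between the observer error system and the plant driven by $U_{of}$: by Theorem \ref{ertg}, the observer error dynamics \eqref{erz}--\eqref{erd} is autonomous (it does not depend on the plant state) and $\Omega_e(t)$ decays exponentially, so the output-feedback closed-loop plant can be viewed as the state-feedback closed-loop system perturbed by exponentially vanishing signals, on top of the bounded external disturbance $d(t)$.

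First, I would write $U_{of}(t)=U(t)+\Delta U(t)$ by subtracting \eqref{U(t)} from \eqref{Uof(t)}. Using the observer-error definitions \eqref{widSt}, the perturbation $\Delta U(t)$ is a linear functional of $\widetilde\xi,\widetilde\eta,\widetilde X,\widetilde d,\widetilde u$, their traces at $x=0,1$, and $\widetilde\eta_x(1,t)$. Since the initial data lie in $\mathcal H$ and the observer error system enjoys $H^1$-regularity by Theorem \ref{ertg}, each of these terms is well-defined and inherits the exponential decay of $\Omega_e$. Substituting $U_{of}=U+\Delta U$ into the plant and applying the state-feedback backstepping transformations \eqref{cxi2bt}, \eqref{cbt2xi} to the plant states yields the target system \eqref{gtsbt}--\eqref{gstx} together with a \emph{perturbed} dynamic right boundary $\beta_t(1,t)=-\acute{c}_1\beta(1,t)+q_0\Delta U(t)$. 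Existence and uniqueness of a weak solution in the sense of Definition \ref{def1} then follows by the same semigroup/fixed-point argument as in Theorem \ref{tosb}, treating $\Delta U(t)$ as an exogenous $H^1$-bounded input whose bound is supplied by Theorem \ref{ertg}.

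Second, I would construct a composite Lyapunov functional $V(t)=V_{sf}(t)+\lambda V_e(t)$, where $V_{sf}$ is the functional used in the proof of Theorem \ref{tosb} applied to the perturbed target system, $V_e$ is the one from the proof of Theorem \ref{ertg}, and $\lambda>0$ is to be chosen. Along trajectories, the contribution of $\Delta U(t)$ to $\dot V_{sf}$ appears only as cross terms between the target states and the observer errors; by Young's inequality these split into a small multiple of $V_{sf}$ (absorbed in its nominal decay $-\acute C_0 V_{sf}$) and a multiple of $V_e$. Choosing $\lambda$ large enough absorbs the latter into the negative-definite part $-\acute C_e V_e$ supplied by Theorem \ref{ertg}, leaving the bounded disturbance contribution $\bar c\,\bar D_d^2$ already analysed in Theorem \ref{tosb}. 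This yields $\dot V\le -\min\{\acute C_0,\acute C_e\}V+\bar c\,\bar D_d^2$, from which $\Omega_a(t)\le \Upsilon_a\Omega_a(0)e^{-\acute C_a t}+\mathcal C_a$ follows with $\mathcal C_a=2\mathcal C_0+(1+\Upsilon_a)\bar D_d^2$ and, when $\bar D_d=0$, exponential decay. For $|X(t)|$ itself, $X$ is governed in the target system by $\dot X=(A+BK)X+B\beta(0,t)$ with $A+BK$ Hurwitz, while $\beta$ satisfies the transport equation \eqref{gtsbt} driven at $x=1$ by the exponentially contracting ODE $\beta_t(1,t)=-\acute c_1\beta(1,t)+q_0\Delta U(t)$; hence $\beta(0,t)$, and thereby $|X(t)|$, decays exponentially at a rate tuned by $\acute c_1,K,L_x,L_d,L_z$.

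The main obstacle is the presence of $\hat\eta_x(1,t)$ in $U_{of}$: this boundary-derivative term is not controlled by an $L^2$-norm, so the Lyapunov computation must proceed at the $H^1$-level for $\hat\eta$, which is equivalent to controlling $\eta$ and $\widetilde\eta$ in $H^1$. This requires differentiating \eqref{gseta} and \eqref{ereta} in $x$, performing integration by parts along the characteristic $x=1$ on the resulting $\eta_x^2$ terms, and exploiting compatibility of the initial data with the boundary conditions -- exactly the tactics already employed in the proofs of Theorems \ref{tosb} and \ref{ertg}. Once this $H^1$-bound is in place, well-definedness and boundedness of $U_{of}(t)$ follow immediately from the $H^1$-boundedness of $\hat\xi,\hat\eta,\hat u$ and the finite-dimensional bounds on $\hat X,\hat d$, completing the proof.
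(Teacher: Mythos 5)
Your proposal is correct and follows essentially the same route as the paper: the paper likewise writes $U_{of}(t)=U(t)+\delta(t)$ with $\delta(t)$ a linear functional of the observer errors, exploits the autonomy and exponential stability of the error cascade (Theorem \ref{ertg}) to show $\delta(t)$ decays exponentially, and then reuses the state-feedback analysis of Theorem \ref{tosb} together with \eqref{widSt} to conclude. Your composite Lyapunov functional $V_{sf}+\lambda V_e$ merely makes explicit the cascade argument the paper leaves implicit; the one point to handle with care is that the boundary trace $\hat\eta_x(1,t)$ is not controlled by an $H^1$ bound alone, and the paper resolves it via the explicit characteristics representation \eqref{eq:etax} rather than by integration by parts.
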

\begin{proof}
	Inserting the output-feedback control law \eqref{Uof(t)} into \eqref{gsz}, we have
	\begin{align}
		\dot{z}(t)=c_0z(t)+c_1\xi(1,t)+U(t)+\delta(t)
	\end{align}		
	where
	\begin{align}
		\delta(t)=&n_1\widetilde{\xi}(1,t)+n_2\widetilde{\eta}(1,t)+n_3\widetilde{X}(t)+n_4\widetilde{d}(t)+n_6\widetilde{\eta}_x(1,t) \notag \\	
		&+\int_{0}^{1}N_7(y)\widetilde{\xi}(y,t)dy+\int_{0}^{1}N_8(y)\widetilde{\eta}(y,t)dy \notag \\
		&+\int_{0}^{1}N_9(y)\widetilde{u}(y,t)dy+\int_{0}^{1}N_{10}(y)\widetilde{u}_y(y,t)dy \label{det(t)}
	\end{align}
	with $n_1$, $n_2$, $n_3$, $n_4$, $n_6$, $N_7$, $N_8$, $N_9$, $N_{10}$ defined in Appendix\ref{n}.
	
	\textit{1):} Following the proof of Theorem \ref{tosb}, we have $\|\widetilde{\eta}_x(\cdot,t)\|^2$ is bounded by $\|\widetilde{\alpha}(\cdot,t)\|^2+\|\widetilde{u}(\cdot,t)\|^2_{H^1}+|\widetilde{X}(t)|^2+|\widetilde{d}(t)|^2$. Therefore, using Cauchy-Schwarz inequality and Young's inequality, $\delta(t)\in H^1([0,\infty);R)$ is obtained from Property 2 of Theorem \ref{ertg} and \eqref{det(t)}. Then, recalling Property 1 of Theorem \ref{tosb}, it is straightforward to obtain Property 1.
	
	\textit{2):} Following the proof of Theorem \ref{tosb}, we obtain the exponential convergence to zero of $|X(t)|$. Recalling \eqref{det(t)}, \eqref{ereta}, \eqref{eretab}, and the exponential convergence in terms of the norm $\|\widetilde{\xi}(\cdot,t)\|^2_{H^1}+\|\widetilde{\eta}(\cdot,t)\|^2_{H^1}+\|\widetilde{u}(\cdot,t)\|^2_{H^1}+|\widetilde{X}(t)|^2+|\widetilde{d}(t)|^2$ proved in Theorem \ref{ertg}, using Cauchy-Schwarz inequality, Young's inequality and Sobolev inequality, we can obtain the exponential convergence of $\delta(t)$. Then, applying Theorem \ref{tosb}, \ref{ertg} and \eqref{widSt},
	property 2 is obtained for some positive $\Upsilon_{a}$.
	
	\textit{3):}  From properties 1, 2, It is clear that the output-feedback control input $U_{of}(t)$ is well-defined and bounded.
	
	The proof of Theorem $\ref{uof}$ is complete.
\end{proof}

\begin{Rem}
	According to \eqref{shearbon}--\eqref{trm3}, the vibration displacement $\varpi(x,t)$ can be expressed as $\varpi(x,t)=\int_{0}^{x}(\varphi_2(y)\xi(y,t)-\varphi_1(y)\eta(y,t))dx-\frac{(\sqrt{\epsilon}+\mathcal{Q}_2)}{2\sqrt{\epsilon}\mathcal{Q}_1}\xi(0,t)+$ $\frac{(\sqrt{\epsilon}-\mathcal{Q}_2)}{2\sqrt{\epsilon}\mathcal{Q}_1}\eta(0,t)$.  Recalling \eqref{trm0}--\eqref{trm3}, using Young's inequality, Theorem \ref{tosb} physically implicates that 1) the vibration displacement of the uncontrolled tip of the Timoshenko Beam $\varpi(0,t)$ converges to zero; 2) The angular velocity tracking error $\Delta\omega(t)$, the vibration displacement $\varpi(x,t)$, and the angle of rotation of the cross-section $\Phi(x,t)$ in \eqref{up0om}--\eqref{up3om}, and vibration energy of the overall system \eqref{st0}--\eqref{st3} exponentially decay to a bound that depends on the size $\bar{D}_d$ of the heat flux and is adjustable via the design parameters $\acute c_1$.
\end{Rem}
\section{Application}\label{simulation}
We apply our control design to a flexible blade rotor subjected to extreme thermal loads and unstable aerodynamic pressure, with the aim of suppressing the vibration energy of the overall plant and keeping the tip of the blade (the farthest point from the control) tracking the target motion, i.e., the elastic displacement converges to zero. 

\subsection{Model}
The simulation is conducted on the plant \eqref{ht}, \eqref{htbd}, \eqref{st3}, \eqref{xi}--\eqref{Xt}, where, according to \cite{bib9}, \cite{bib5}, \cite{bib1}, the physical parameters of the flexible blade rotor are given in Tab. \ref{smp}. We set the damping of the rotation disk as anti-damping $c^{\ast}=5$  here to evaluate the controller in a challenging scenario, where not only the plant PDE domain and the uncontrolled boundary but also the controlled (dynamic) boundary experience instabilities. Other parameters about the thermal difference and the aerodynamics instabilities are $\acute{\kappa}=1$, $q=[1$ $1]$, $\mathcal{Q}_1=1$, $\mathcal{Q}_2=1$, $k_1=1$, $k_2=1$, $\omega_d=0$, $A_d=[0, \pi;
-\pi, 0 ]$. The initial values are set as $d(0)=[S_0, S_0]'$, $z(0)=1$, $X(0)=2$, $\xi(x,0)=2\sin(2\pi x)$, $\eta(x,0)=2\sin(2\pi x)$.  
\begin{table}
	\centering
	\caption{Physical parameters of the blade-rotor.}
	\begin{tabular}{lll}
		\toprule
		Name & Value  & Unit   \\
		\midrule
		Young's modulus $E^\ast$    & 200  & $Gpa$ \\
		Modulus of rigidity $G^\ast$ & 77.5   & $Gpa$  \\
		Density $\rho^\ast$   & 7833 & $kg/m^3$  \\
		Cross-sectional area $A^\ast$ & 0.005 & $m^2$ \\
		Area moment of inertia $I^\ast$ & 0.00013876 &  $m^4$ \\
		Shear coefficient $k'$ & 0.53066 & $-$ \\
		Blade length $L^\ast$ & 1 & $m$ \\
		Disk radius $R^\ast$ & 0.5 & $m$ \\
		Inertia moment of the disk $J^{\ast}$ & 23.4375 & $kg\cdot m^2$ \\
		Thermal expansion $\alpha_0$ & 0.0000118 & $1$ $K^{-1}$ \\
		Average heat flux level $S_0$ & 7400 & $W/m^2$ \\
		Heat parameter $\beta^{\ast}$ & 0.0897 & $m$ \\
		\bottomrule
	\end{tabular}
	\label{smp}
\end{table}
\subsection{Controller}
The controller is \eqref{Uof(t)}, where the design parameter is $\acute{c}_1=10$, and the initial values of observer are set as $\hat{d}(0)=d(0)+[S_0,S_0]'$, $\hat{X}(0)=X(0)+1$, $\hat{\xi}(x,0)=\xi(x,0)+\sin(2\pi x)$, $\hat{\eta}(x,0)=\eta(x,0)+\sin(2\pi x)$. The simulation is performed using the finite-difference method for spatial and temporal discretization, with a time step and space step set at 0.001 and 0.05. Kernels defined by \eqref{kerK}--\eqref{kerPUpb}, \eqref{phi}--\eqref{K0b} are solved by power series \cite{bib45}, \cite{bib46} in the sense of 
\begin{align}
	k(x,y)=\sum_{i=0}^{N}\sum_{j=0}^{i}k_{ij}x^{i-j}y^j
\end{align}
where $N$ is chosen as 10.
\subsection{Results}
\begin{figure}[htbp]
	\centering
	\begin{subfigure}[b]{.45\linewidth}
		\centering
		\includegraphics[width=1.05\linewidth]{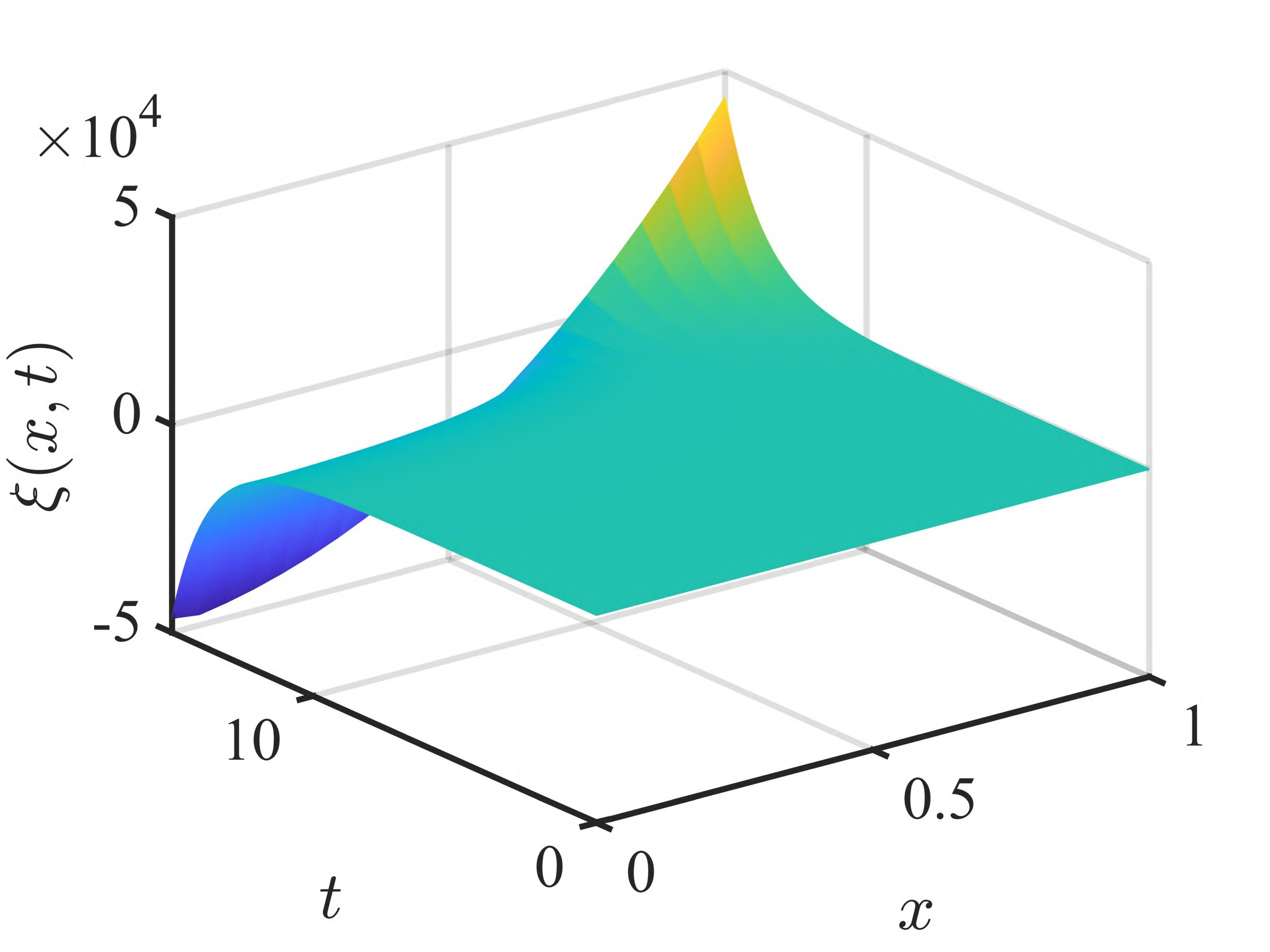}
	\end{subfigure}
	\begin{subfigure}[b]{.45\linewidth}
		\centering
		\includegraphics[width=1.05\linewidth]{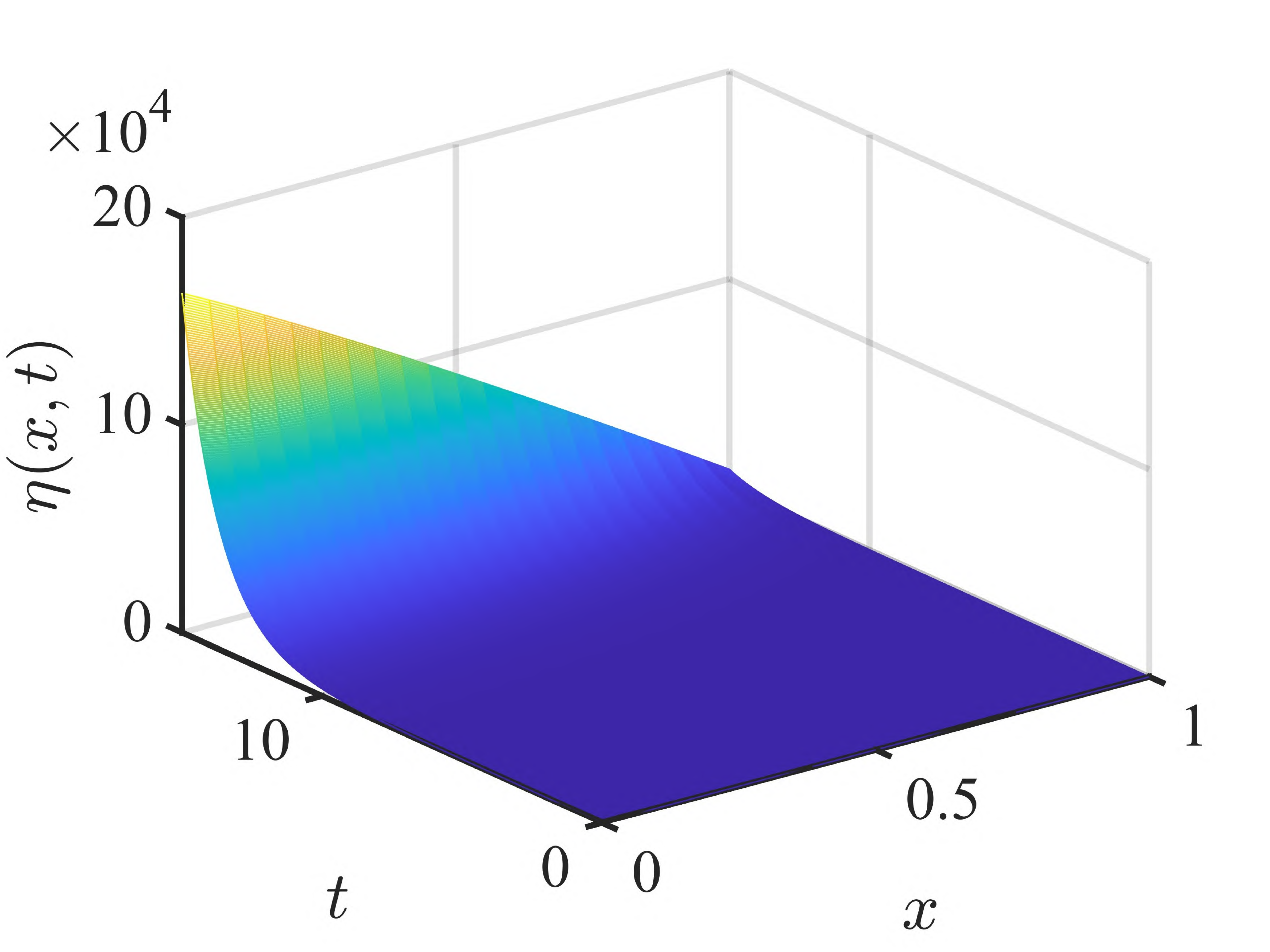}
	\end{subfigure}
	\caption{Open-loop responses of $\xi(x,t),\eta(x,t)$.}
	\label{openlp}
\end{figure}
\begin{figure}[htbp]
	\centering
	\begin{subfigure}[b]{.45\linewidth}
		\centering
		\includegraphics[width=1.05\linewidth]{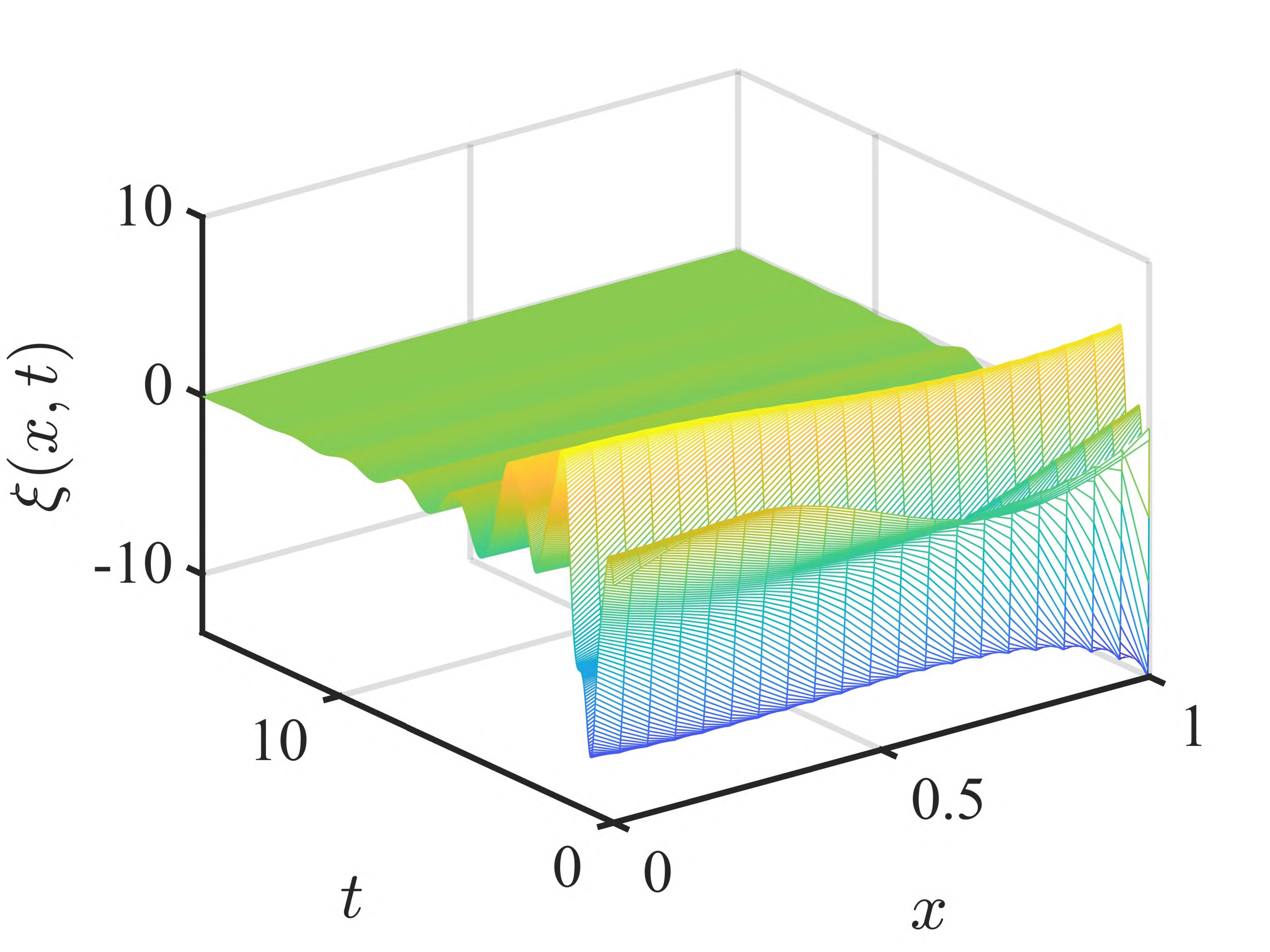}
	\end{subfigure}
	\begin{subfigure}[b]{.45\linewidth}
		\centering
		\includegraphics[width=1.05\linewidth]{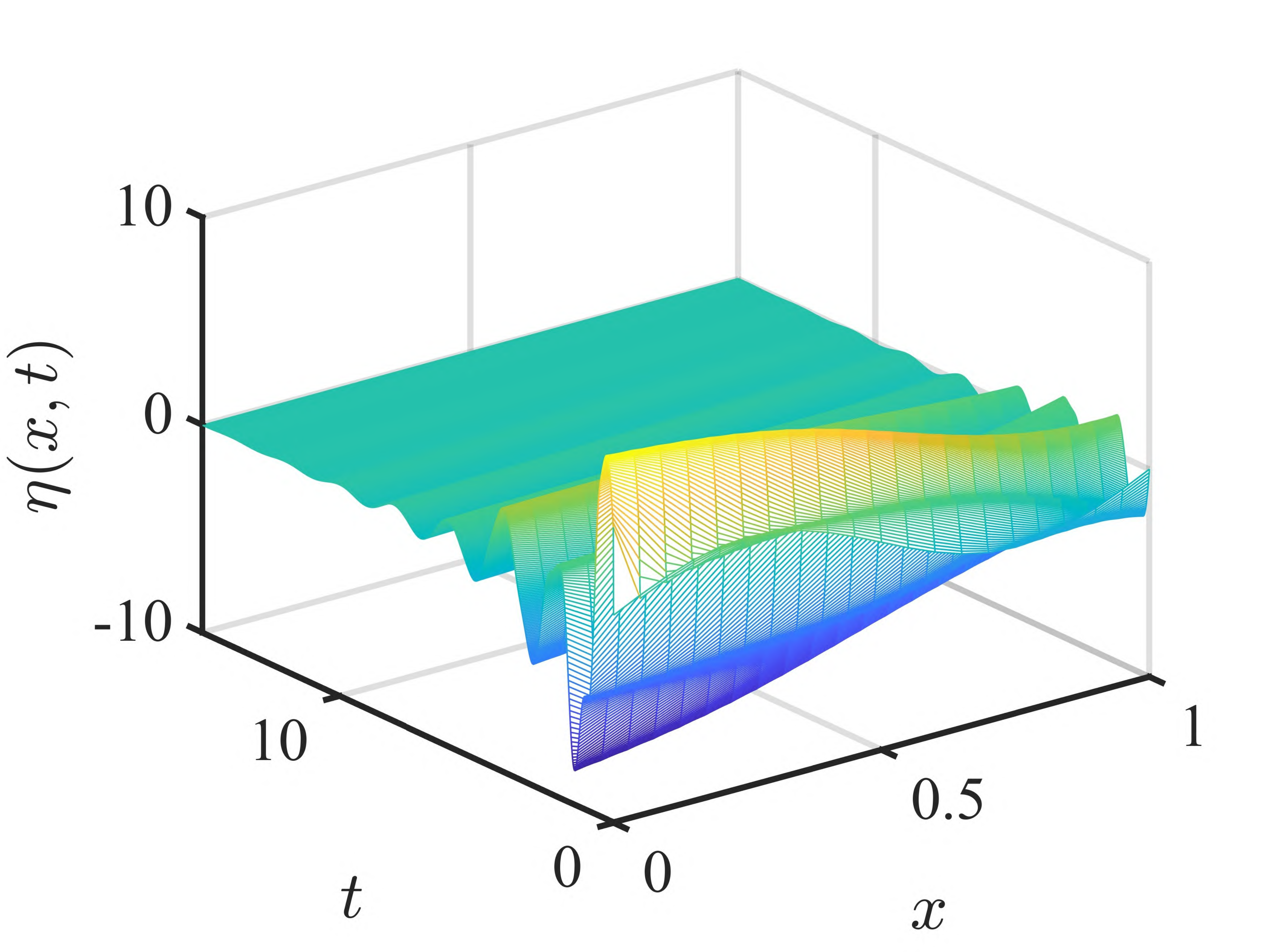}
	\end{subfigure}
	\caption{Responses of $\xi(x,t),\eta(x,t)$ under the proposed output-feedback controller.}
	\label{outptfk}
\end{figure} 
\begin{figure}[htbp]
	\centering
	\begin{subfigure}[b]{.45\linewidth}
		\centering
		\includegraphics[width=1.05\linewidth]{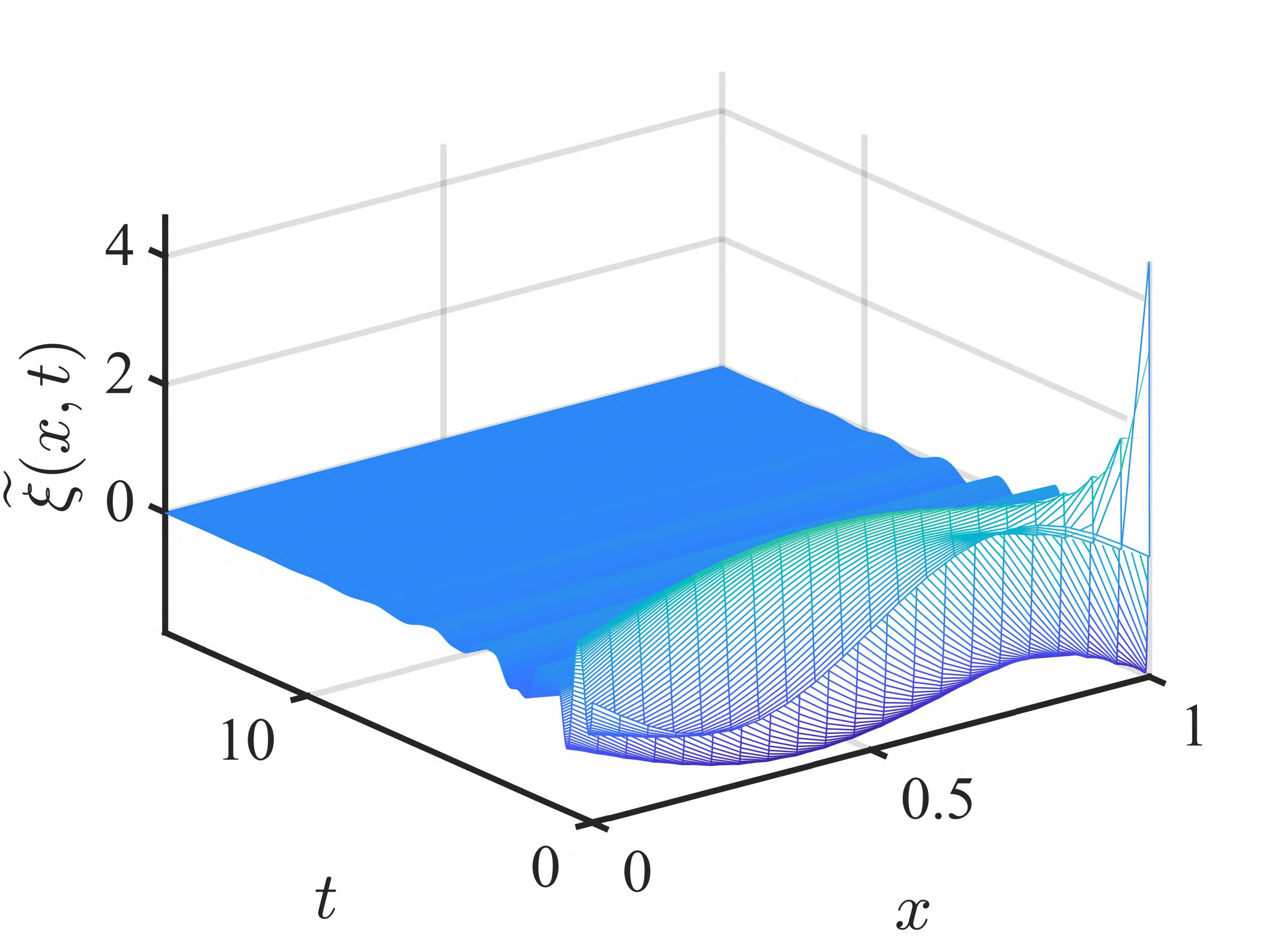}
	\end{subfigure}
	\begin{subfigure}[b]{.45\linewidth}
		\centering
		\includegraphics[width=1.05\linewidth]{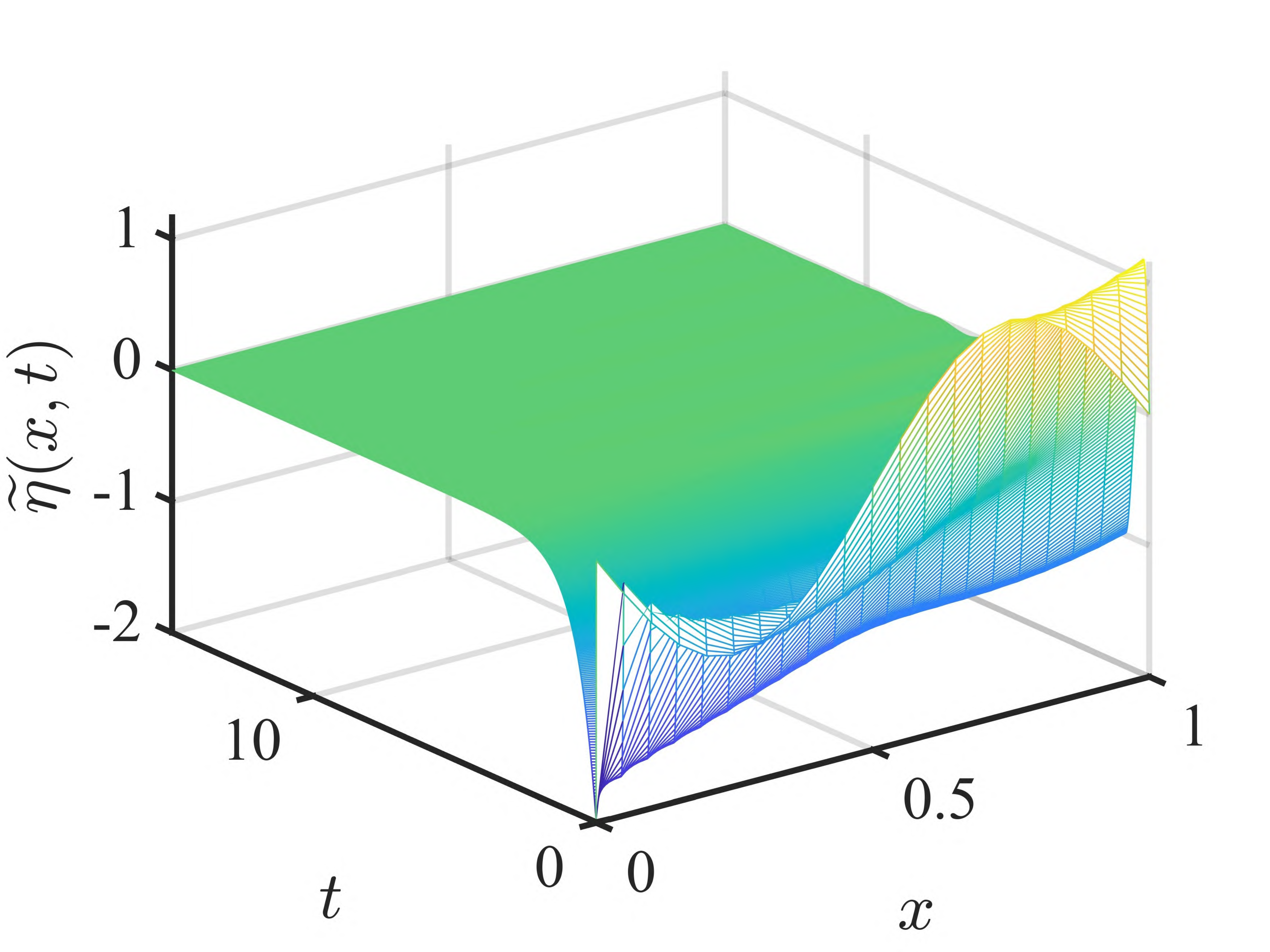}
	\end{subfigure}
	\caption{Observer error of $\xi(x,t),\eta(x,t)$.}
	\label{observer}
\end{figure} 
\begin{figure}[htbp]
	\centering
	\begin{subfigure}[b]{.45\linewidth}
		\centering
		\includegraphics[width=1.05\linewidth]{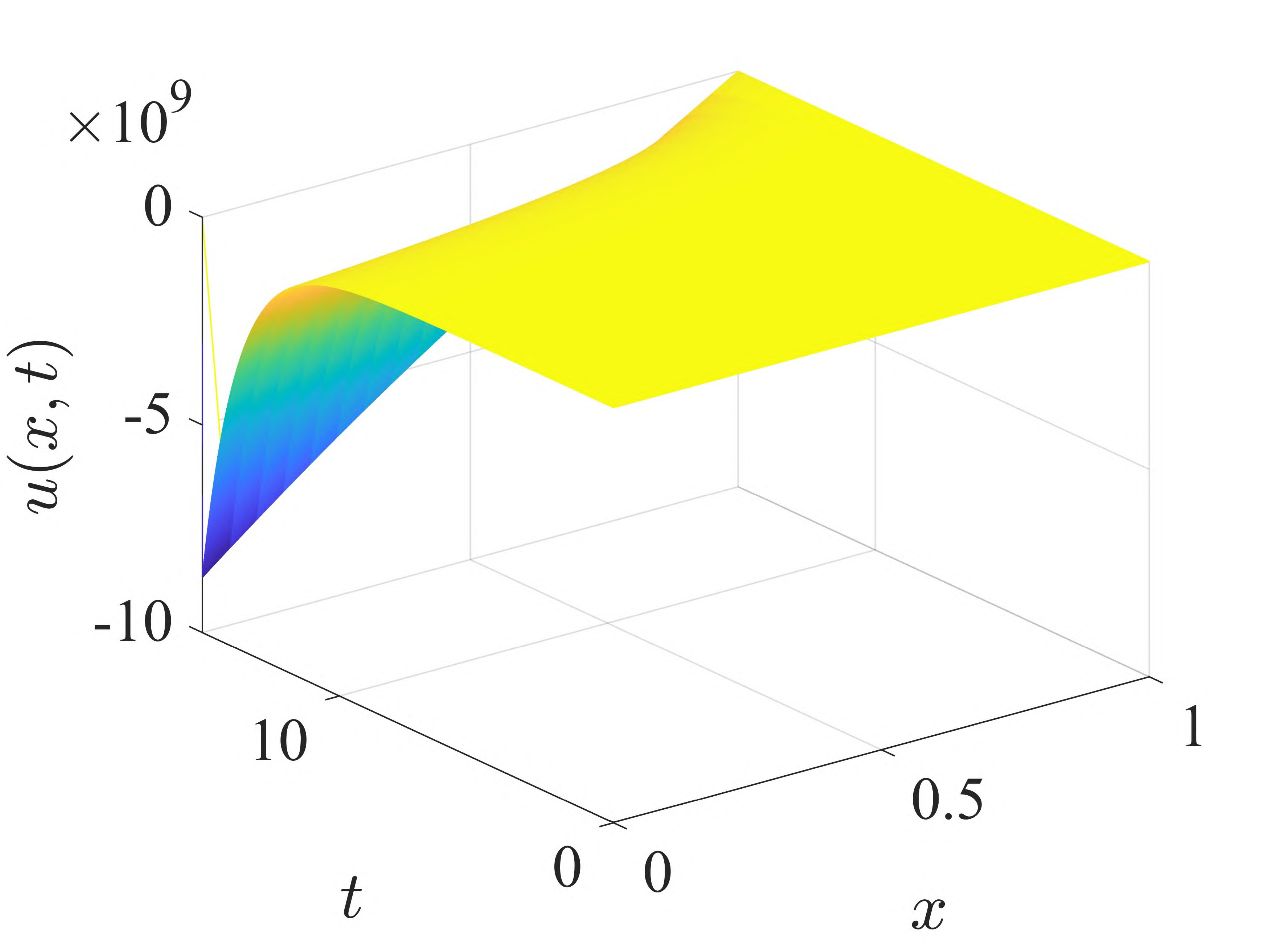}
	\end{subfigure}
	\begin{subfigure}[b]{.45\linewidth}
		\centering
		\includegraphics[width=1.05\linewidth]{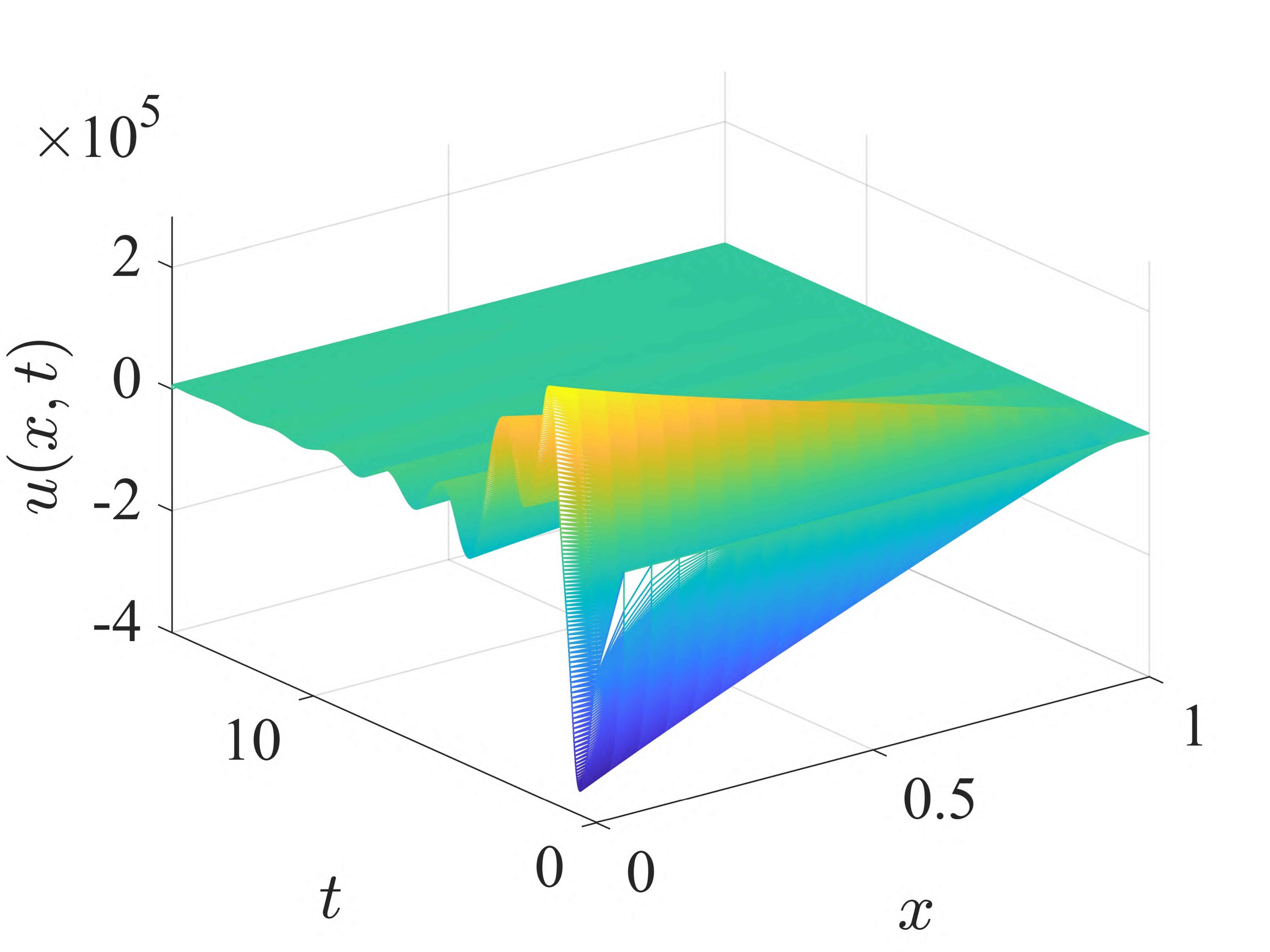}
	\end{subfigure}
	\caption{Responses of $u(x,t)$ under open loop control and output-feedback control.}
	\label{openoutu}
\end{figure} 
\begin{figure}[!ht]
	\centering
	\begin{subfigure}[b]{.45\linewidth}
		\centering
		\includegraphics[width=1.05\linewidth]{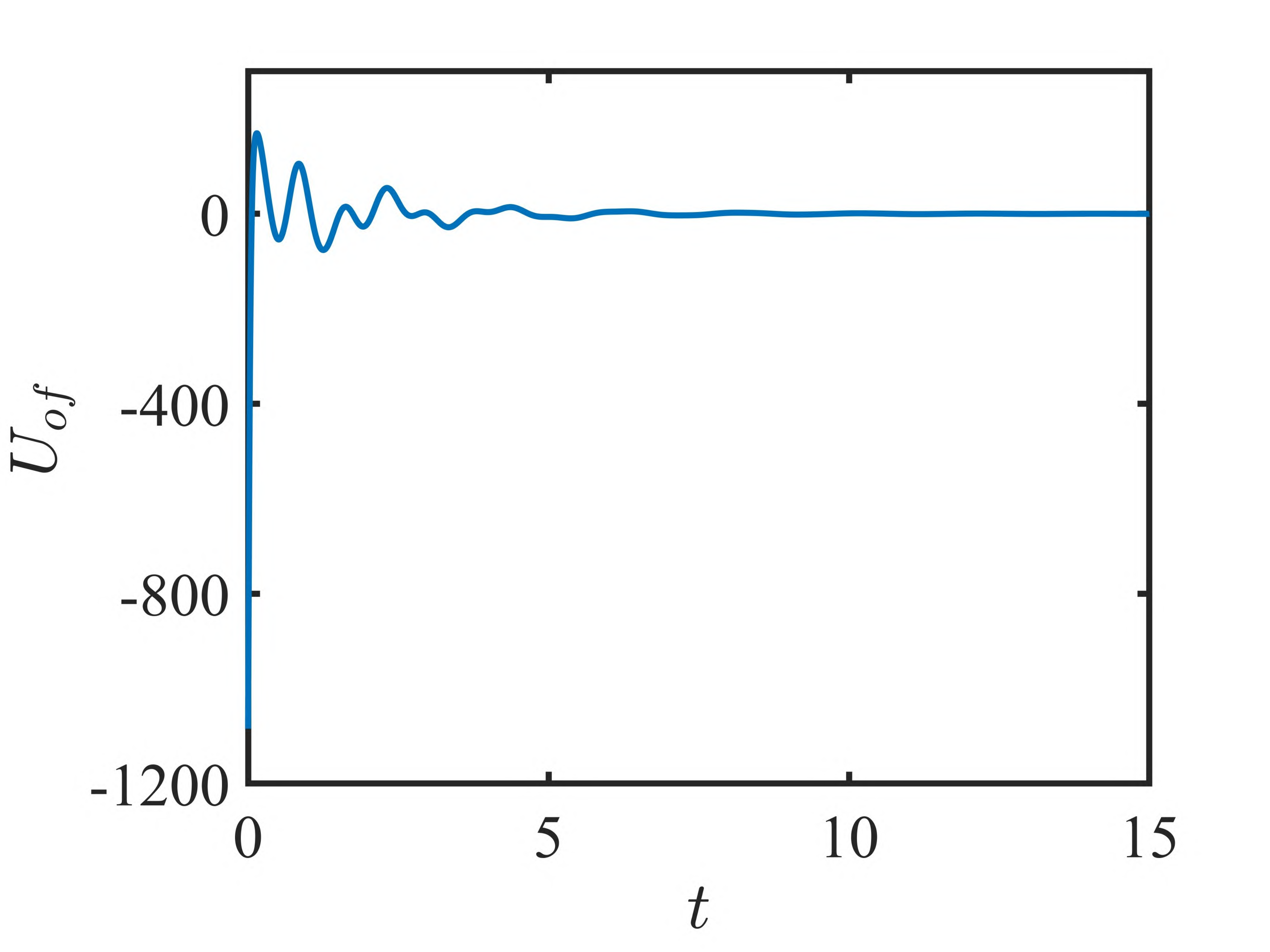}
	\end{subfigure}
	\begin{subfigure}[b]{.45\linewidth}
		\centering
		\includegraphics[width=1.05\linewidth]{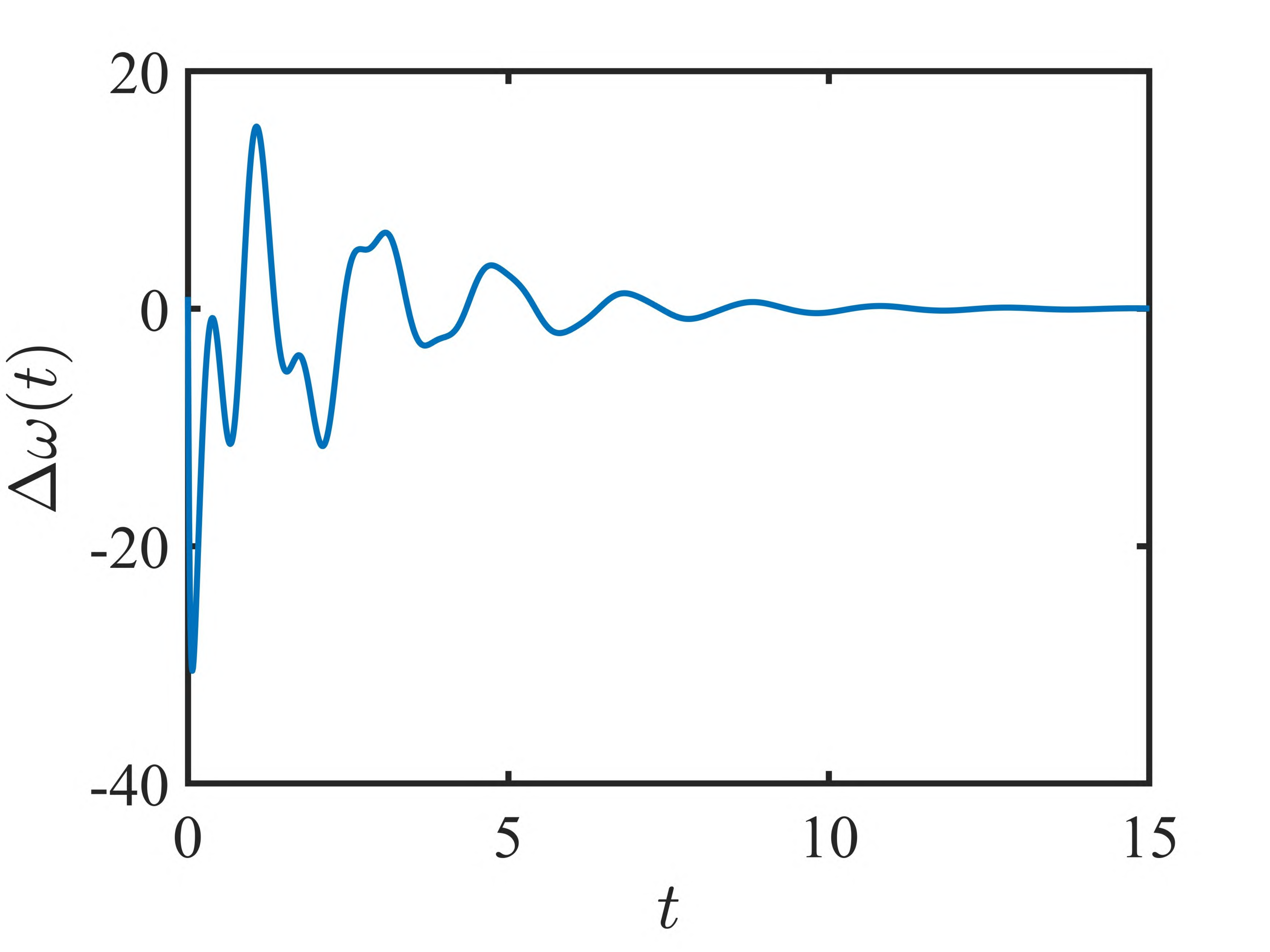}
	\end{subfigure}
	\caption{Output-feeedback control input $U_{of}$ and the disk angular velocity tracking error $\Delta\omega(t)$.}
	\label{outptU}
\end{figure}
\begin{figure}[!ht]
	\centering
	\includegraphics[width=0.8\linewidth]{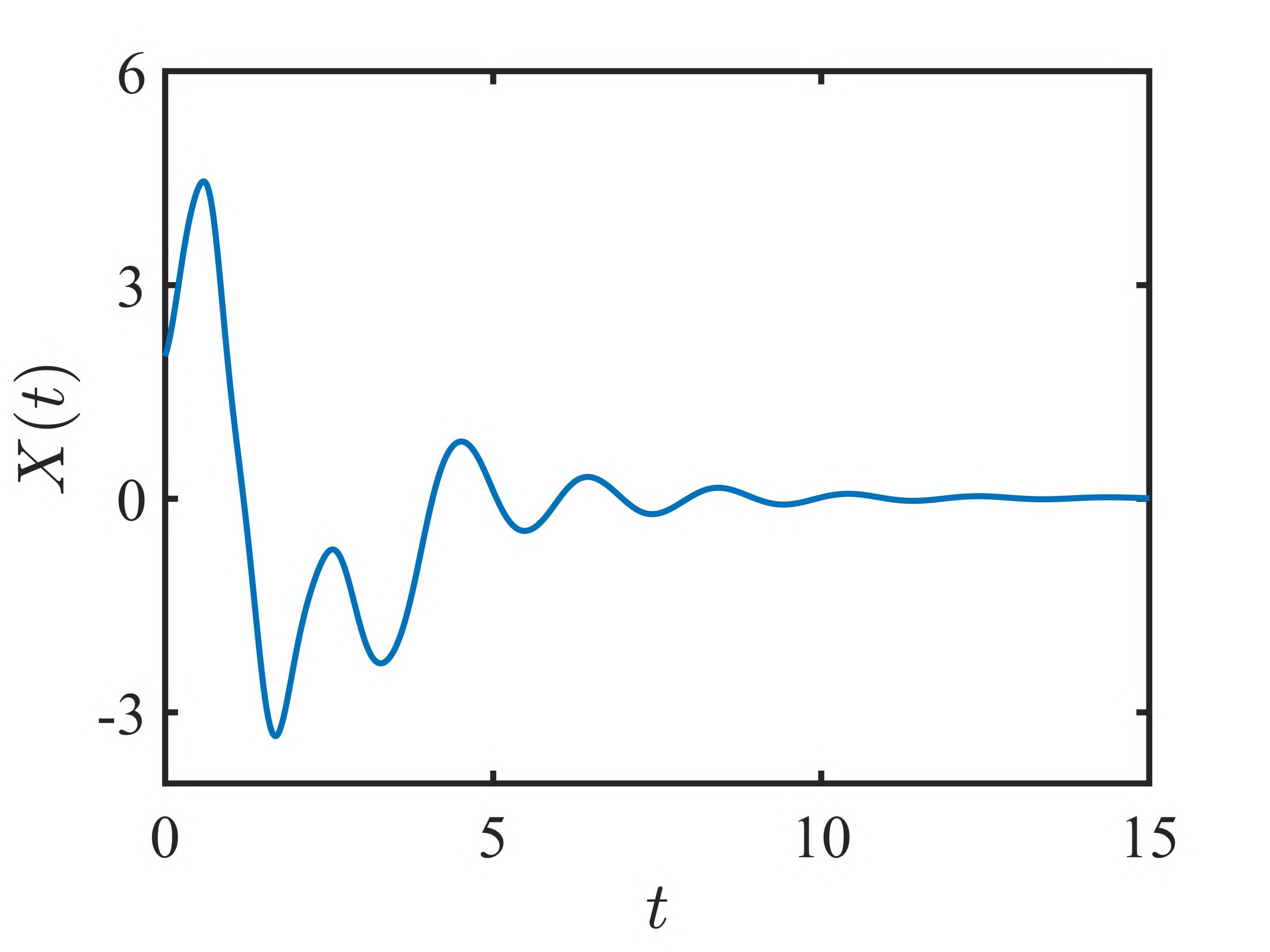}
	\caption{The tip vibration displacement $X(t)$ under the proposed output-feedback controller.}
	\label{outptxi}
\end{figure}
\begin{figure}[!ht]
	\centering
	\includegraphics[width=0.8\linewidth]{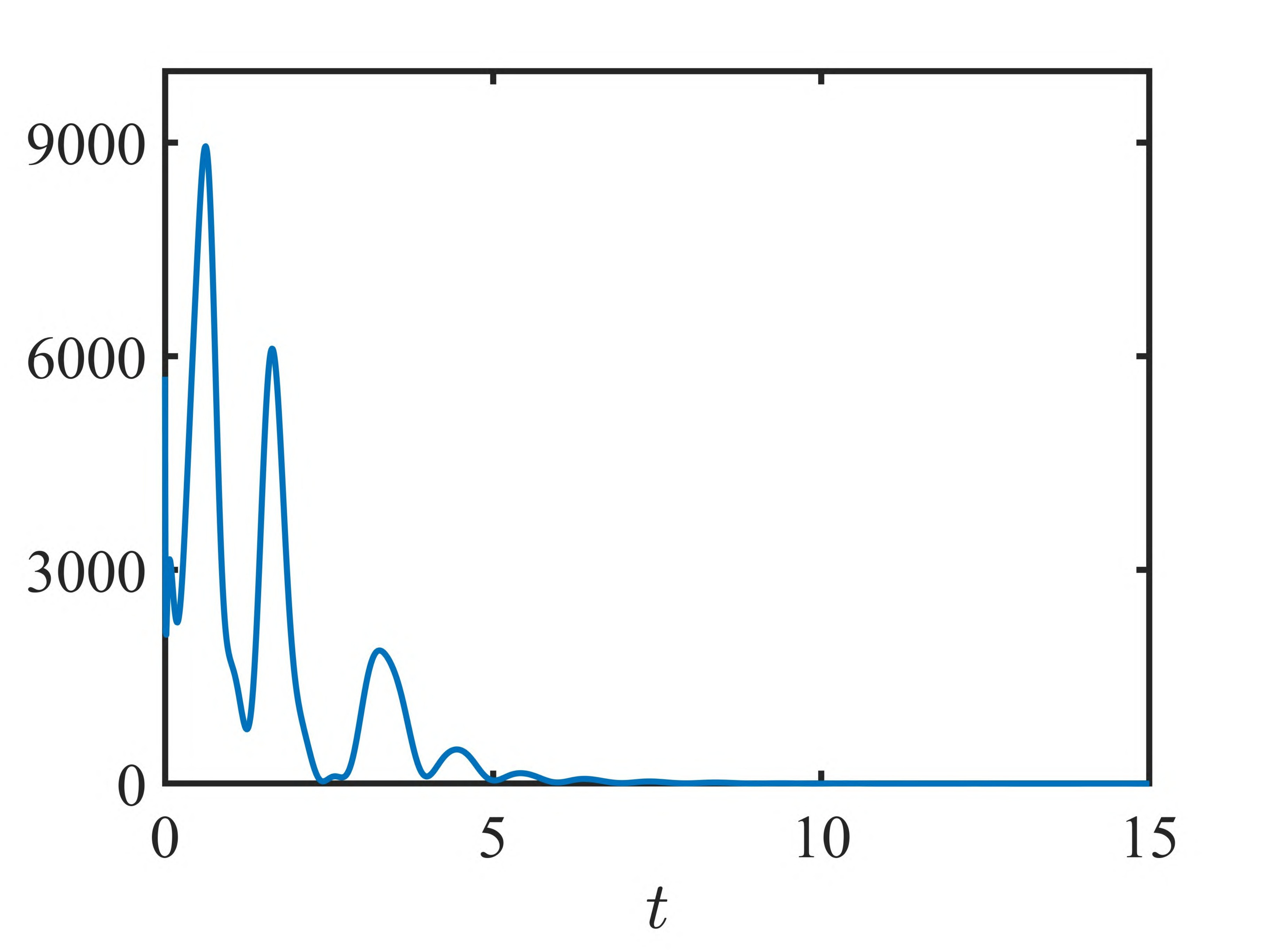}
	\caption{The sum of potential energy and kinetic energy of the overall physical model.}
	\label{Energy}
\end{figure}

Comparing Fig. $\ref{openlp}$ which illustrates the open-loop responses of $\xi(x,t),\eta(x,t)$ with Fig. $\ref{outptfk}$ which gives the responses of $\xi(x,t),\eta(x,t)$ under the proposed output-feedback controller, as one can be observed, in the latter case the fast decay is achieved, whereas the states grow unbounded in the former case. Additionally, Fig. $\ref{observer}$ demonstrates that the observer errors of $\xi(x,t)$, $\eta(x,t)$ are convergent to zero, i.e., the proposed observer converges to the actual plant for PDE states, and the responses of $u(x,t)$ under open loop control and output-feedback control are shown in Fig. \ref{openoutu}. Moreover, Fig. \ref{outptU} and Fig.\ref{outptxi} show the observer-based output-feedback control law $U_{of}$ applied in the simulation, under which the tip vibration displacement $X(t)$ and the angular velocity tracking error of the disk $\Delta \omega(t)$, i.e., the states of the two ODEs sandwiching the PDE,
where the fast decay is observed in both results as well. Fig. \ref{Energy} shows that the total energy of the overall physical model, including the potential energy from bending and shear as well as the kinetic energy from transverse deflection, the rotation of the cross-section, and the rotating disk, decays fast under the proposed controller.
\section{Conclusion}\label{conclusion}
In this article, we modeled the dynamics of thermally and flow-induced vibrations in a slender Timoshenko beam as a coupled system of hyperbolic and parabolic PDEs, using Hamilton's variational principle. A generalized system formulation is then adopted for the control design.  The state-feedback controller is designed by using the backstepping method. Relying solely on available boundary measurements, the observer for the hyperbolic-parabolic PDEs is constructed, and the exponential stability of the observer error system is shown. We then establish the observer-based output-feedback controller, under which we show that the distal ODE state, i.e., the furthest state from the control input, converges exponentially to zero, and all signals are uniformly ultimately bounded. Simulation results demonstrate that the proposed controller effectively and rapidly suppresses the vibration energy of an aero-engine flexible blade operating under extreme thermal loads and unstable aerodynamic pressure.

Future work will incorporate adaptive control design to address the thermally-induced and
flow-induced vibrations in the Timoshenko beam under uncertainties.

\section*{Appendix}\label{appendix}
\setcounter{subsection}{0}
\setcounter{section}{0}
\subsection{Expression of $\mathcal{F}_{11}\thicksim \mathcal{D}_2(x)$}\label{F}
\setcounter{equation}{0}
\renewcommand{\theequation}{A.\arabic{equation}}
The functions $\mathcal{F}_{11}(x,y)$, $\mathcal{F}_{12}(x,y)$, $\mathcal{F}_{13}(x,y)$, $\mathcal{D}_1(x)$, $\mathcal{D}_2(x)$ in \eqref{gtsbt}--\eqref{gstx} are shown as follows
\begin{align}
	\mathcal{F}_{11}(x,y)=&f_{11}(x,y)+\int_{y}^{x}f_{12}(x,z)\sigma(z,y)dz\notag \\
	&+c_1(x)\sigma(x,y), \label{mF11} \\
	\mathcal{F}_{12}(x,y)=&f_{12}(x,y)+\int_{y}^{x}f_{12}(x,z)\rho(z,y)dz\notag \\
	&+c_1(x)\rho(x,y), \label{mF12} \\
	\mathcal{F}_{13}(x,y)=&h(x-y)f_{13}(x,y)+\int_{0}^{x}f_{12}(x,z)\varrho(z,y)dz\notag \\
	&+c_1(x)\varrho(x,y), \label{mF13} \\
	\mathcal{D}_1(x)=&D_1(x)-\int_{0}^{x}f_{12}(x,y)\lambda(y)dy+g_1(x)K\notag \\
	&-c_1(x)\lambda(x), \label{mD1} \\
	\mathcal{D}_2(x)=&G_1(x)-c_1(x)\vartheta(x)-\int_{0}^{x}f_{12}(x,y)\vartheta(y)dy. \label{mD2}
\end{align}
\subsection{Expression of $h_1\thicksim H_{10}$}\label{h}
\setcounter{equation}{0}
\renewcommand{\theequation}{B.\arabic{equation}}
Expressions of  $h_1$, $h_2$, $h_3$, $h_4$, $h_5$, $h_6$, $H_7$, $H_8$, $H_9$, $H_{10}$ in \eqref{bet} are shown as follows
\begin{align}
	h_1=&c_0+q_0c_1-\frac{\rho(1,1)}{\varepsilon_2}, \label{h1} \\
	h_2=&\frac{1}{\varepsilon_2}\rho(1,0)+\lambda(1)B-\frac{1}{\varepsilon_1}\int_{0}^{1}\sigma(1,y)g_1(y)dy, \\
	h_3=&\frac{1}{\varepsilon_1}\sigma(1,1)+c_0, \quad h_4=-\frac{1}{\varepsilon_1}\sigma(1,0), \\
	h_5=&\lambda(1)(A+BK)-\frac{1}{\varepsilon_1}\int_{0}^{1}\sigma(1,y)\mathcal{D}_1(y)dy \notag \\
	&-(q_0c_1+c_0)\lambda(1), \\
	h_6=&-\frac{1}{\varepsilon_1}\int_{0}^{1}\sigma(1,y)\mathcal{D}_2(y)dy-(q_0c_1+c_0)\vartheta(1) \notag \\
	&+\vartheta(1)A_d \\
	H_7(y)=&(q_0c_1+c_0)\rho(1,y)+\frac{1}{\varepsilon_2}\rho_y(1,y)-\frac{1}{\varepsilon_1}c_1(y)\sigma(1,y)\notag \\
	&-\frac{1}{\varepsilon_1 }\int_{0}^{1}\mathcal{F}_{12}(z,y)\sigma(1,z)dz ,  \\
	H_8(y)=&-\frac{1}{\varepsilon_1}\sigma_y(1,y)-\frac{1}{\varepsilon_1}\int_{y}^{1}\mathcal{F}_{11}(z,y)\sigma(1,z)dz \notag \\
	&+(q_0c_1+c_0)\sigma(1,y), \\
	H_9(y)=&-\frac{1}{\varepsilon_1}\int_{0}^{1}\mathcal{F}_{13}(z,y)\sigma(1,z)dz-\frac{1}{\varepsilon_1}\sigma(1,y)\mu_1(y) \notag \\
	&+(q_0c_1+c_0)\varrho(1,y), \\
	H_{10}(y)=&\varrho_y(1,y)\kappa_0.
\end{align}
\subsection{Expression of $n_1\thicksim N_{10}$}\label{n}
\setcounter{equation}{0}
\renewcommand{\theequation}{C.\arabic{equation}}
Expressions of  $n_1$, $n_2$, $n_3$, $n_4$, $n_5$, $n_6$, $N_7$, $N_8$, $N_9$ in \eqref{U(t)} are given as follows
\begin{align}
	n_1=&\frac{1}{q_0}(-(\acute{c}_1+h_1)+\frac{q_1}{\varepsilon_1}c_1(1)),\quad n_2=-\frac{1}{q_0}h_3, \label{eq:n1}\\
	n_3=&\frac{1}{q_0}\bigg(-(\acute{c}_1+h_1)\gamma(1)+\frac{q_1}{\varepsilon_1}D_1(1)-h_2\gamma(0)-h_4C\notag \\
	&-h_5-\int_{0}^{1}H_7(y)\gamma(y)dy\bigg), \\
	n_4=&\frac{1}{q_0}\bigg(-(\acute{c}_1+h_1)\varUpsilon(1)-h_6-\int_{0}^{1}H_7(y)\varUpsilon(y)dy \notag \\
	&-h_2\varUpsilon(0)\bigg), \\
	n_5=&\frac{q_1}{\varepsilon_1q_0}(g_1(1)-h_2-h_4),\quad n_6=-\frac{q_1}{\varepsilon_1q_0}, \\
	N_7(y)=&\frac{1}{q_0}\bigg((\acute{c}_1+h_1)k(1,y)+\frac{q_1}{\varepsilon_1}f_{12}(1,y)-H_7(y)\notag \\
	&+\int_{y}^{1}k(z,y)H_7(z)dz\bigg), \\
	N_8(y)=&\frac{1}{q_0}\bigg((\acute{c}_1+h_1)l(1,y)+\frac{q_1}{\varepsilon_1}f_{11}(1,y)-H_8(y)\notag \\
	&+\int_{y}^{1}l(z,y)H_7(z)dz\bigg), \\
	N_9(y)=&\frac{1}{q_0}\bigg((\acute{c}_1+h_1)p(1,y)+\frac{q_1}{\varepsilon_1}f_{13}(1,y)\notag \\
	&+\int_{0}^{1}p(z,y)H_7(z)dz-H_9(y)\bigg), \\
	N_{10}(y)=&\frac{1}{q_0}H_{10}(y). \label{eq:n10}
\end{align}
\subsection{Proof of Lemma \ref{kernel klp}}\label{pfk1}
\setcounter{equation}{0}
\renewcommand{\theequation}{D.\arabic{equation}}
 In this section, we will prove the well-posedness of \eqref{kerK}--\eqref{kerPUpb} in a weak sense. 
	Recalling \eqref{kerGma}, \eqref{kerLGmab}, $\gamma(x)$ can be expressed as  
	\begin{align}
		&\gamma(x)=-Ke^{\varepsilon_2Ax}-\int_{0}^{x}\varepsilon_2\kappa_0\acute{p}_2p_y(s,0)e^{\varepsilon_2A(x-s)}ds \notag \\
		&-\int_{0}^{x}\int_{0}^{s}\bigg(k(s,y)D_2(y)+l(s,y)\frac{\varepsilon_2}{\varepsilon_1}D_1(s)\bigg)e^{\varepsilon_2A(x-s)}dyds \notag \\
		&-\frac{\varepsilon_2}{\varepsilon_1}C\int_{0}^{x}e^{\varepsilon_2A(x-s)}l(s,0)ds+\int_{0}^{x}D_2(s)e^{\varepsilon_2A(x-s)}ds. \label{eq:gamma}
\end{align}

Then we consider the following sequence of functions
	\begin{align}
		l(x,y)&=\sum_{m=0}^{\infty}l_m(x,y),\quad k(x,y)=\sum_{n=0}^{\infty}k_m(x,y),\notag \\
		p(x,y)&=\sum_{m=0}^{\infty}p_m(x,y). \label{eq:lkp}
\end{align}

Recalling \eqref{kerK}--\eqref{kerLGmab} and \eqref{eq:gamma}, for m=0, we get the PDEs
	\begin{align}
		k_{0,x}(x,y)&=-k_{0,y}(x,y)-f_{22}(x,y), \label{eq:k0} \\
		l_{0,x}(x,y)&=\frac{\varepsilon_2}{\varepsilon_1}l_{0,y}(x,y)-f_{21}(x,y), \label{eq:l0} \\
		k_0(x,0)&=-g_2(x)-\varepsilon_2\int_{0}^{x}D_2(s)e^{\varepsilon_2A(x-s)}ds B \notag \\
		&+\varepsilon_2Ke^{\varepsilon_2Ax}B, \label{eq:k0b} \\
		l_0(x,x)&=-\frac{\varepsilon_1}{\varepsilon_1+\varepsilon_2}c_2(x), \label{eq:l0b}  \\
		p_{0,x}(x,y)&=\varepsilon_2 \kappa_0 p_{0,yy}(x,y)+h(x-y)f_{23}(x,y)\notag \\
		&-\delta(y-x)\mu_2(y), \label{eq:p0} \\ 
		p_0(x,1)&=0,\quad p_0(x,0)=0,\quad p_0(0,y)=0, \label{eq:p0b}
	\end{align}
	and for $m=1,2,\dots,\infty$, the following functions are defined in the terms of $k_{m-1},l_{m-1},p_{m-1},$
	\begin{align}
		&k_{m,x}(x,y)=-k_{m,y}(x,y)+\int_{y}^{x}f_{22}(z,y)k_{m-1}(x,z)dz\notag \\
		&+\frac{\varepsilon_2}{\varepsilon_1}\int_{y}^{x}f_{12}(z,y)l_{m-1}(x,z)dz+\frac{\varepsilon_2}{\varepsilon_1}l_{m-1}(x,y)c_1(y), \label{eq:km} \\
		&l_{m,x}(x,y)=\frac{\varepsilon_2}{\varepsilon_1}l_{m,y}(x,y)+\int_{y}^{x}f_{21}(z,y)k_{m-1}(x,z)dz\notag \\
		&+\frac{\varepsilon_2}{\varepsilon_1}\int_{y}^{x}f_{11}(z,y)l_{m-1}(x,z)dz+k_{m-1}(x,y)c_2(y), \label{eq:lm} \\
		&k_m(x,0)=\int_{0}^{x}(\frac{\varepsilon_2}{\varepsilon_1}l_{m-1}(x,y)g_1(y)+k_{m-1}(x,y)g_2(y))dy \notag \\
		&+\frac{\varepsilon_2}{\varepsilon_1}l_{m-1}(x,0)+\frac{\varepsilon_2^2}{\varepsilon_1}C\int_{0}^{x}e^{\varepsilon_2A(x-s)}l_{m-1}(s,0)Bds \notag \\
		&+\int_{0}^{x}\int_{0}^{s}(\varepsilon_2k_{m-1}(s,y)D_2(y)+\frac{\varepsilon_2^2}{\varepsilon_1}l_{m-1}(s,y)D_1(y))\notag \\
		&e^{\varepsilon_2A(x-s)}Bdyds+\varepsilon_2^2\kappa_0\acute{p}_2\int_{0}^{x}e^{\varepsilon_2A(x-s)}p_{m-1,y}(s,0)Bds, \label{eq:kmb} \\
		&l_m(x,x)=0, \label{eq:lmb}  \\
		&p_{m,x}(x,y)=\varepsilon_2 \kappa_0 p_{m,yy}(x,y)+h(x-y)( k_{m-1}(x,y)\mu_2(y) \notag \\
		&\frac{\varepsilon_2}{\varepsilon_1}l_{m-1}(x,y)\mu_1(y)+\frac{\varepsilon_2}{\varepsilon_1}\int_{y}^{x}f_{13}(z,y)l_{m-1}(x,z)dz \notag \\
		&+\int_{y}^{x}f_{23}(z,y)k_{m-1}(x,z)dz ), \label{eq:pm} \\ 
		&p_m(x,1)=0,\quad p_m(x,0)=0,\quad p_m(0,y)=0. \label{eq:pmb}
\end{align}
The explicit solution of $p_0(x,y)$ is found based on the method of separation of variables in \cite{bib28} by using a Fourier sine series $p_0(x,y)=\sum_{n=0}^{\infty}A_n(x)\sin(n\pi y)$. Recalling \eqref{kerP}, \eqref{kerPb}, $p_0(x,y)$ is obtained as
	\begin{align}
		&p_0(x,y)=2\sum_{n=0}^{\infty}e^{-\varepsilon_2\kappa_0n^2\pi^2x}\sin(n\pi y) \notag \\
		&\bigg( 
		\int_{0}^{x}\frac{\int_{0}^{s}f_{23}(s,\tau)\sin(n\pi\tau)d\tau-\mu_2(s)\sin(n\pi s)}{e^{-\varepsilon_2\kappa_0n^2\pi^2s}} ds\bigg) \label{eq:p}
	\end{align}
	which is convergent since $|p_0(x,y)|\leq \frac{2(\bar{f}+\bar{\mu})}{\varepsilon_2\kappa_0\pi^2}\sum_{n=0}^{\infty}\frac{1}{n^2}$ where
	$\bar{f}=\max \{ |\frac{\varepsilon_2}{\varepsilon_1}\mu_2(x)|_{\infty}, \|f_{23}(x,y)\|_{\infty}, \|\frac{\varepsilon_2}{\varepsilon_1}f_{13}(x,y)\|_{\infty},$ $|\frac{\varepsilon_2}{\varepsilon_1}\mu_1(x)|_{\infty}, \}$ and $\bar{\mu}=\max\{ |c_1(x)|_{\infty}, |c_2(x)|_{\infty}, |\mu_2(x)|_{\infty},$ $|\frac{\varepsilon_2}{\varepsilon_1}c_1(x)|_{\infty},\| f_{22}(x,y)\|_{\infty},\|f_{21}(x,y)\|_{\infty},\|\frac{\varepsilon_2}{\varepsilon_1}f_{12}(x,y)\|_{\infty},$ $\|\frac{\varepsilon_2}{\varepsilon_1}f_{11}(x,y)\|_{\infty} \}.$
	Therefore, the expression $p_0(x,y)$ in \eqref{eq:p} make sense. Similarly, $p_{0,y}(x,y)$ is expressed as
	\begin{align}
		&p_{0,y}(x,y)=2\sum_{n=0}^{\infty}n\pi e^{-\varepsilon_2\kappa_0n^2\pi^2x}\cos(n\pi y) \notag \\
		&\bigg( 
		\int_{0}^{x}\frac{\int_{0}^{s}f_{23}(s,\tau)\sin(n\pi\tau)d\tau-\mu_2(s)\sin(n\pi s)}{e^{-\varepsilon_2\kappa_0n^2\pi^2s}} ds\bigg) \label{eq:p0y}
	\end{align}
	and we have $p_{0,y}\in L^2([0,1]\times[0,1])$ since $\int_{0}^{1}p_{0,y}^2(x,y)dy \leq \frac{4(\bar{f}+\bar{\mu})^2}{\varepsilon_2^2\kappa_0^2\pi^2}\sum_{n=0}^{\infty}\frac{1}{n^2}$ by using Parseval inequality.

According to \cite{bib24} and using the method of characteristic line, we get the explicit solutions of $l_0(x,y),k_0(x,y)$. Along the line $x=-\frac{\varepsilon_1}{\varepsilon_2}y+\tau_1$, we have
	\begin{align}
		l_0(x,y)=&-\frac{\varepsilon_1}{\varepsilon_1+\varepsilon_2}c_2(\frac{\varepsilon_2}{\varepsilon_1+\varepsilon_2}(x+\frac{\varepsilon_1}{\varepsilon_2}y))\notag \\
		&-\int_{\frac{\varepsilon_2}{\varepsilon_1+\varepsilon_2}}^{x}f_{21}(s,\frac{\varepsilon_2}{\varepsilon_1}(x+\frac{\varepsilon_2}{\varepsilon_1}y-s)) \label{eq:l}
	\end{align}
	and along the line $x=y+\tau_2$, we have
	\begin{align}
		k_0(x,y)=&-g_2(x-y)-\varepsilon_2\int_{0}^{x-y}D_2(s)e^{\varepsilon_2A(x-y-s)}Bds \notag \\
		&+\varepsilon_2Ke^{\varepsilon_2A(x-y)}B-\int_{x-y}^{x}f_{22}(s,x-y+s)ds. \label{eq:k}
\end{align}

Recalling \eqref{eq:p}--\eqref{eq:k}, we get that $l_0\in L^2(\mathcal{D})$, $k_0\in L^2(\mathcal{D})$, $p_0\in L^2(0,1,H^1(0,1))$. According to \eqref{eq:km}--\eqref{eq:pmb}, the explicit solutions of $l_m(x,y)$, $k_m(x,y)$, $p_m(x,y)$, $m=1,2,\dots,\infty$ which are determined by $l_{m-1}(x,y)$, $k_{m-1}(x,y)$, $p_{m-1}(x,y)$, $p_{m-1,y}(x,0)$ are obtained in the same way as for $l_0(x,y)$, $k_0(x,y)$, $p_0(x,y)$ and thus are omitted. Moreover, we can also iteratively infer that $l_m\in L^2(\mathcal{D})$, $k_m\in L^2(\mathcal{D})$, $p_m\in L^2(0,1;H^1(0,1))$, $m=1,2,\dots,\infty$.

In the following part of the section, the convergence of the series of \eqref{eq:lkp} in the same space of  $l(x,y),k(x,y),p(x,y)$ is proved. 
	Considering a functional as
	\begin{align}
		\widetilde{V}_1(x)=&\frac{1}{2}\int_{0}^{1}(p_1^2(x,y)+p_{1,y}^2(x,y))dy+\int_{0}^{x}(\acute{\theta}_1k_1^2(x,y) \notag \\
		&+\acute{\theta}_2l_1^2(x,y))dy \label{eq:bV1}
	\end{align}
	where $\acute{\theta}_1$, $\acute{\theta}_2$ are positive parameters. Differentiating \eqref{eq:bV1} with respect to $x$, we have
	\begin{align}
		\dot{\widetilde{V}}_1(x)=&\int_{0}^{1}(p_1(x,y)p_{1,x}(x,y)+p_{1,y}(x,y)p_{1,yx}(x,y))dy \notag \\
		&+\acute{\theta}_1k_1^2(x,x)+2\acute{\theta}_1\int_{0}^{x}k_1(x,y)k_{1,x}(x,y)dy \notag \\
		&+2\acute{\theta}_2\int_{0}^{x}l_1(x,y)l_{1,x}(x,y)dy.
\end{align}

Recalling \eqref{eq:p}--\eqref{eq:k} and using Young's inequality, we obtain
	\begin{align}
		\dot{\widetilde{V}}_1(x)	\leq&-\int_{0}^{1}\varepsilon_2\kappa_0p_{1,y}^2(x,y)dy-\int_{0}^{1}\varepsilon_2\kappa_0p_{1,yy}^2(x,y)dy \notag \\
		&+\acute{\theta}_1k_1^2(x,0)-\acute{\theta}_2\frac{\varepsilon_2}{\varepsilon_1} l_1^2(x,0)+2\int_{0}^{x}p_1^2(x,y)dy \notag \\
		&+(\bar{f}^2+\frac{\bar{f}^2}{\acute{\lambda}_0})\int_{0}^{x}(l_0^2(x,y)+k_0^2(x,y))dy\notag \\
		&+2\acute{\lambda}_0\int_{0}^{x}p_{1,yy}^2(x,y)dy+3\acute{\theta}_1\bar{\mu}\int_{0}^{x}k_1^2(x,y)dy \notag \\
		&+3\acute{\theta}_2\bar{\mu}\int_{0}^{x}l_1^2(x,y)dy+\bar{\mu}(\acute{\theta}_1+\acute{\theta}_2)\int_{0}^{x}k_0^2(x,y)dy \notag \\
		&+2\bar{\mu}(\acute{\theta}_1+\acute{\theta}_2)\int_{0}^{x}l_0^2(x,y)dy \notag \\
		\leq& 2\int_{0}^{x}p_1^2(x,y)dy-\varepsilon_2\kappa_0\int_{0}^{1}p_{1,y}^2(x,y)dy-(\varepsilon_2\kappa_0\notag \\
		&-2\acute{\lambda}_0)\int_{0}^{1}p_{1,yy}^2(x,y)dy+3\acute{\theta}_1\bar{\mu}\int_{0}^{x}k_1^2(x,y)dy\notag \\
		&+3\acute{\theta}_2\bar{\mu}\int_{0}^{x}l_1^2(x,y)dy-\acute{\theta}_2\frac{\varepsilon_2}{\varepsilon_1}l_1^2(x,0)+\acute{\theta}_1\acute{\kappa}_1\bar{p}_{0,y} \notag \\
		&+(\bar{f}^2+\frac{\bar{f}^2}{\acute{\lambda}_0}+2(\acute{\theta}_1+\acute{\theta}_2)\bar{\mu}+\acute{\kappa}_1\acute{\theta}_1)\bar{l}_0+(\bar{f}^2+\frac{\bar{f}^2}{\acute{\lambda}_0} \notag \\
		&+(\acute{\theta}_1+\acute{\theta}_2)\bar{\mu}+\acute{\kappa}_1\acute{\theta}_1)\bar{k}_0+\acute{\kappa}_1\acute{\theta}_1l_0^2(x,0) \label{dbV1}
	\end{align}
	where $\acute{\lambda}_0$ is positive, $\bar{k}_0=\|k_0(x,\cdot)\|^2$, $\bar{l}_0=\|l_0(x,\cdot)\|^2$, $\bar{p}_0=\|p_{0,y}(x,\cdot)\|^2$,
	and $k_1^2(x,0)\leq \acute{\kappa}_1(l_0^2(x,0)+\bar{k}_0+\bar{l}_0+\bar{p}_{0,y})$ for some positive $\acute{\kappa}_1$ according to \eqref{eq:p0y}--\eqref{eq:k}. We then choose $\acute{\lambda}<\frac{\varepsilon_2\kappa_0}{2}$ so that we get $\dot{\widetilde{V}}_1(x)\leq \bar{r}_1\widetilde{V}_1(x)+\acute{m}_1$ where $\bar{r}_1=\max\{ 2, \varepsilon_2\kappa_0, 3\acute{\theta}_1\bar{\mu}, 3\acute{\theta}_2\bar{\mu} \}$ and $\acute{m}_1=\acute{\kappa}_1\acute{\theta}_1\bar{p}_{0,y}+(\bar{f}^2+\frac{\bar{f}^2}{\acute{\lambda}_0}+2(\acute{\theta}_1+\acute{\theta}_2)\bar{\mu}+\acute{\kappa}_1\acute{\theta}_1)\bar{l}_0+(\bar{f}^2+\frac{\bar{f}^2}{\acute{\lambda}_0}+(\acute{\theta}_1+\acute{\theta}_2)\bar{\mu}+\acute{\kappa}_1\acute{\theta}_1)\bar{k}_0+\acute{\kappa}_1\acute{\theta}_1l_0^2(x,0)$.
	Next, we consider the following sequence of functionals
	\begin{align}
		\widetilde{V}_m(x)=&\widetilde{V}_{m-1}(x)+\frac{1}{2}\int_{0}^{1}(p_{m-1}^2(x,y)+p_{m-1,y}^2(x,y))dy \notag \\
		&+\int_{0}^{x}(\acute{\theta}_1k_{m-1}^2(x,y)+\acute{\theta}_2l_{m-1}^2(x,y))dy \label{bVm}
	\end{align}
	where $m=2,3,\dots,\infty$. For $m=2$, recalling \eqref{dbV1} and using Poincar$\acute{e}$ inequality, we have
	\begin{align}
		&\dot{\widetilde{V}}_2(x)\leq  2\int_{0}^{x}p_2^2(x,y)dy-\varepsilon_2\kappa_0\int_{0}^{1}p_{2,y}^2(x,y)dy \notag \\
		&+3\acute{\theta}_1\bar{\mu}\int_{0}^{x}k_2^2(x,y)dy+3\acute{\theta}_2\bar{\mu}\int_{0}^{x}l_2^2(x,y)dy+\acute{\theta}_1\acute{\kappa}_1l_0^2(x,0) \notag \\
		&+2\int_{0}^{x}p_1^2(x,y)dy-(\varepsilon_2\kappa_0-2\acute{\theta}_1\acute{\kappa}_1)\int_{0}^{1}p_{1,y}^2(x,y)dy \notag \\
		&+(\bar{f}^2+\frac{\bar{f}^2}{\acute{\lambda}_0}+2(\acute{\theta}_1+\acute{\theta}_2)\bar{\mu}+\acute{\theta}_1\acute{\kappa}_1+3\acute{\theta}_2\bar{\mu})\int_{0}^{x}l_1^2(x,y)dy \notag \\
		&+(\bar{f}^2+\frac{\bar{f}^2}{\acute{\lambda}_0}+(\acute{\theta}_1+\acute{\theta}_2)\bar{\mu}+\acute{\theta}_1\acute{\kappa}_1+3\acute{\theta}_1\bar{\mu})\int_{0}^{x}k_1^2(x,y)dy \notag \\
		&+(\bar{f}^2+\frac{\bar{f}^2}{\acute{\lambda}_0}+2(\acute{\theta}_1+\acute{\theta}_2)\bar{\mu}+\acute{\theta}_1\acute{\kappa}_1)\bar{l}_0+\acute{\theta}_1\acute{\kappa}_1\bar{p}_{0,y}+(\bar{f}^2 \notag \\
		&+\frac{\bar{f}^2}{\acute{\lambda}_0}+(\acute{\theta}_1+\acute{\theta}_2)\bar{\mu}+\acute{\theta}_1\acute{\kappa}_1)\bar{k}_0+(\acute{\theta}_1\acute{\kappa}_1-\acute{\theta}_2\frac{\varepsilon_2}{\varepsilon_1})l_1^2(x,0) \notag \\
		&-(\varepsilon_2\kappa_0-2\acute{\lambda}_0-2\acute{\theta}_1\acute{\kappa}_1)\int_{0}^{1}p_{1,yy}^2(x,y)dy . \label{dbV2} 
\end{align}

 We set $\acute{\theta}_1\leq \frac{\varepsilon_2\kappa_0-2\acute{\lambda}_0}{2\acute{\kappa}_1}$ and $\acute{\theta}_2 \geq \frac{\varepsilon_1}{\varepsilon_2}\acute{\kappa}_1$. Then we obtain $\dot{\widetilde{V}}_2(x)\leq \bar{r}_2 \widetilde{V}_2(x)+\acute{m}_1$ where $\bar{r}_2=\max\{ \bar{f}^2+\frac{\bar{f}^2}{\acute{\lambda}_0}+(\acute{\theta}_1+\acute{\theta}_2)\bar{\mu}+\acute{\theta}_1\acute{\kappa}_1+3\acute{\theta}_1\bar{\mu},\bar{f}^2+\frac{\bar{f}^2}{\acute{\lambda}_0}+2(\acute{\theta}_1+\acute{\theta}_2)\bar{\mu}+\acute{\theta}_1\acute{\kappa}_1+3\acute{\theta}_2\bar{\mu},2,\varepsilon_2\kappa_0 \}$.

 Following the same steps mentioned above and iterating for all values of m, we get $\dot{\widetilde{V}}_m(x)\leq \bar{r}_2 \widetilde{V}_m(x)+\acute{m}_1$ which means $\widetilde{V}_m(x)\leq \widetilde{V}_m(0)e^{\bar{r}_2x}+\frac{\acute{m}_1}{\bar{r}_2}(e^{\bar{r}_2x}-1)$. Recalling \eqref{eq:p0b}, \eqref{eq:pmb}, \eqref{bVm}, it is clear that $\lim\limits_{m\to\infty}\widetilde{V}_m(x) \leq \frac{\acute{m}_1}{\bar{r}_2}(e^{\bar{r}_2}-1)$ for all $x\in[0,1]$ since $V_m(0)=0$. Therefore, the series of \eqref{eq:lkp} is convergent and define valid functions $l\in L^2(\mathcal{D})$, $k\in L^2(\mathcal{D})$ and $p\in L^2([0,1],H^1([0,1]))$.

 Once solutions of $p(x,y)$, $l(x,y)$, $k(x,y)$ are obtained, according to \eqref{kerUp}, \eqref{kerPUpb} and \eqref{eq:gamma}, the explicit solution of $\Upsilon(x)$ is then obtained as
	\begin{align}
		\Upsilon(x)=\int_{0}^{x}(-\varepsilon_2\kappa_0p_y(s,0)q+G_2(s))e^{\varepsilon_2A_d(x-s)}ds
	\end{align}
	and we directly get $\Upsilon\in H^1(0,1)$, $\gamma \in H^1(0,1)$,

	Define $k_y=\acute k$ and $l_y=\acute l$. Then we consider the sequences of functions
	\begin{align}
		\acute l(x,y)&=\sum_{m=0}^{\infty}\acute l_{m}(x,y),\quad \acute k(x,y)=\sum_{m=0}^{\infty}\acute k_{m}(x,y). \label{eq:lky}
	\end{align}

Differentiating \eqref{kerK}, \eqref{kerL} with respect to $y$, differentiating \eqref{kerKb}, \eqref{kerLGmab}  with respect to $x$, we have
	\begin{align}
		\acute k_{x}(x,y)&=-\acute k_{x}(x,y)+\frac{\varepsilon_2}{\varepsilon_1}c_1(y)\acute l(x,y)-f_{22}(y,y)k(x,y) \notag \\
		&-\int_{y}^{x}(f_{22y}(z,y)k(x,z)+\frac{\varepsilon_2}{\varepsilon_1}f_{12y}(z,y)l(x,z))dz \notag \\
		&-\frac{\varepsilon_2}{\varepsilon_1}f_{12}(y,y)l(x,y)+\frac{\varepsilon_2}{\varepsilon_1}c_1^{(1)}(y)l(x,y)-f_{22y}(x,y), \label{eq:ky} \\
		k_x(x,0)&=\frac{\varepsilon_2}{\varepsilon_1}l_x(x,0)-\varepsilon_2\gamma_x(x)B+\frac{\varepsilon_2}{\varepsilon_1}l(x,x)g_1(x)-g_2^{(1)}(x) \notag \\
		&+\int_{0}^{x}(\frac{\varepsilon_2}{\varepsilon_1}l_x(x,y)g_1(y)+k_x(x,y)g_2(y))dy\notag \\
		&+k(x,x)g_2(x), \label{eq:kyb} \\
		\acute l_{x}(x,y)&=\frac{\varepsilon_2}{\varepsilon_1}\acute l_{y}(x,y)+c_2(y)\acute k(x,y)-f_{21}(y,y)k(x,y)\notag \\
		&-\int_{y}^{x}(f_{21y}(z,y)k(x,z)+\frac{\varepsilon_2}{\varepsilon_1}f_{11y}(z,y)l(x,z))dz \notag \\
		&-\frac{\varepsilon_2}{\varepsilon_1}f_{11}(y,y)l(x,y) +c_2^{(1)}(y)k(x,y)-f_{21y}(x,y),  \label{eq:ly}\\
		l_x(x,x)&=-\acute l(x,x)-+\frac{\varepsilon_1}{\varepsilon_1+\varepsilon_2}c_2^{(1)}(x) \label{eq:lyb}
	\end{align}
	and define functionals as
	\begin{align}
		\acute{V}_1(x)=&\int_{0}^{x}\acute k_1^2(x,y)dy+2\acute{\theta}_2\int_{0}^{x}\acute l_1^2(x,y)dy, \notag \\
		\acute{V}_m(x)=&\int_{0}^{x}\acute k_{m-1}^2(x,y)dy+2\acute{\theta}_2\int_{0}^{x}\acute l_{m-1}^2(x,y)dy \notag \\
		&+\acute{V}_{m-1}(x)
	\end{align}where $m=2,3,\dots,\infty$. 

For $m=2$, following the according proof for $l(x,y)$ and $k(x,y)$, we obtain
	\begin{align}
		&\dot{\acute{V}}_2(x)\leq \int_{0}^{x}\acute k_{2}^2(x,y)dy+\acute{\theta}_4\int_{0}^{x}\acute l_{2}^2(x,y)dy+\acute{\kappa_2}\acute l_{0}^2(x,0) \notag \\
		&+(\bar{f}^2+\acute{\kappa}_2+\acute{\theta}_4)\int_{0}^{x}\acute l_{1}^2(x,y)dy+(\acute{\kappa}_2-\frac{\varepsilon_2}{\varepsilon_1}\acute{\theta}_4)\acute l_{1}^2(x,0)+ \notag \\
		&(\bar{f}^2+\acute{\kappa}_2+\acute{\theta}_4)\int_{0}^{x}\acute k_{1}^2(x,y)dy+(\bar{f}^2+\acute{\kappa}_2)\int_{0}^{x}\acute l_{0}^2(x,y)dy \notag \\
		&+(\bar{f}^2+\acute{\kappa}_2)\int_{0}^{x}\acute k_{0}^2(x,y)dy
	\end{align}
	where $\acute k_{1}^2(x,0)\leq \acute{\kappa}_2(\acute l_{0}^2(x,0)+\|\acute k_{0}(x,\cdot)\|^2+\|\acute l_{0}(x,\cdot)\|^2)$ for some positive $\acute{\kappa}_2$. Notice, from the explicit solutions of $\acute l_0$, $k$ and $l$, we get $\acute l_{0}(x,0)\in L^{\infty}(0,1)$. Choosing $\acute{\theta}_4\geq \frac{\varepsilon_1}{\varepsilon_2}\acute{\kappa_2}$, we have $\dot{\acute{V}}_2(x)\leq \bar{r}_3\acute{V}_2(x)+\acute{m}_3$ where $\bar{r}_3=\max\{ 1,\bar{f}^2+\acute{\kappa}_2+\acute{\theta}_4 \}$ and $\acute{m}_3=\acute{\kappa}_2\acute l_{0}^2(x,0)+(\bar{f}^2+\acute{\kappa}_2)(\|\acute l_{0}(x,\cdot)\|^2+\|\acute k_{0}(x,\cdot)\|^2)$. Similarly, it is straightforward to get $\lim\limits_{m\to\infty}\acute{V}_m(x) \leq \frac{\acute{m}_3}{\bar{r}_3}(e^{\bar{r}_3}-1)$. Therefore, we obtain $l\in H^1(\mathcal{D})$, $k\in H^1(\mathcal{D})$

The proof of Lemma \ref{kernel klp} is complete.

\subsection{Proof of Theorem $\ref{tosb}$}\label{The of controller}
\setcounter{equation}{0}
\renewcommand{\theequation}{E.\arabic{equation}}
We start by investigating the stability of the target system. The equivalent stability property between the target system and the original system is guaranteed by the invertibility of the transformations  \eqref{cxi2bt}, \eqref{cbt2xi}.

\textit{1):}From $(z(0),\xi(x,0),\eta(x,0), u(x,0), $ $X(0))\in \mathcal H$, we obtain $(z(0),\beta(x,0),\eta(x,0), u(x,0), $ $X(0))\in \mathcal H$ recalling the transformation \eqref{cxi2bt}, Assumption \ref{d(t)} and Lemma \ref{kernel klp}.

Timing $e^{\acute{c}_1t}$ into both side of \eqref{eq:beta1} and integrating from $0$ to $t$, we have the solution $\beta(1,t)=e^{-\acute{c}_1t}\beta(1,0)$ on $t\in [0,\infty)$, and thus $\beta(1,t)\in H^1([0,\infty))$. Recalling initial data $\beta(x,0)\in H^1(0,1)$ of the $\beta$-PDE \eqref{gtsbt}, we then obtain $\beta(x,t)\in L^{\infty}([0,\infty);H^1(0,1))$ and $\beta(0,t)\in H^1([0,\infty))$. Then, we time $e^{-(A+BK)t}$ into both sides of \eqref{gstx} and integrating it from $0$ to $t$, we get $X(t)=X(0)e^{(A+BK)t}+\int_{0}^{t}e^{(A+BK)(t-s)}B\beta(0,s)ds$. Because of $\beta(0,t)\in H^1([0,\infty))$, we have  $X(t)\in H^1([0,\infty);\mathbb R^n)$. Similarly, we have $d(t)\in H^1([0,\infty);\mathbb R^m)$ according to \eqref{gxd}.

Next, we introduce a transformation
	\begin{align}
		\acute{u}(x,t)=u(x,t)-(1-x)(qd(t)+\acute{p}_2X(t))\label{eq:tu1} 
	\end{align}
	to convert \eqref{gxu}, \eqref{gxub} as
	\begin{align}
		\acute{u}_t(x,t)=&\kappa_0\acute{u}_{xx}(x,t)+(x-1)\acute{p}_2(A+BK)X(t)\notag \\
		&+(x-1)qA_dd(t)+(x-1)\acute{p}_2B\beta(0,t), \label{eq:bu} \\
		\acute{u}(0,t)=&0,\quad \acute{u}(1,t)=0. \label{eq:bub}
\end{align}We thus obtain $\acute{u}(x,t)\in  H^{1}([0,1]\times[0,\infty))$ and $u(x,t)\in L^{\infty}([0,\infty);H^1(0,1))$ recalling \eqref{eq:tu1}. Then, in the same way as for $\beta(x,t)$, it is clear that we get $\eta(x,t)\in L^{\infty}([0,\infty);H^1(0,1))$ from \eqref{gtset}, \eqref{gtsalb}. Recalling the transformation \eqref{cbt2xi}, we have $(\xi(x,t),\eta(x,t), u(x,t), X) \in L^{\infty}([0,\infty);H^1(0,1))^3 \times H^1([0,\infty);\mathbb R^n)$. Moreover, we get $z(t)\in L^{\infty}([0,\infty);\mathbb R)$ according to \eqref{gsxib}. The proof of Property 1 is complete.

\textit{2):}
We define a Lyapunov function $V_1(t)$ as
\begin{align}
	V_1(t)=&X(t)^TP_1X(t)+\frac{\varepsilon_2}{2}\int_{0}^{1}he^{x}\beta^2(x,t)dx+\frac{1}{2}\beta(1,t)^2 \label{lyfV1}
\end{align} 
where the positive parameter $h$ is to be chosen later. Recalling $A+BK$ is Hurwitz, there exists a matrix $P_1=P_1^T>0$ being the solution to the Lyapunov equation
$P_1(A+BK)+(A+BK)^TP_1=-Q_1$
for some $Q_1=Q_1^T>0$.
Defining 
\begin{align}
	\Omega_1(t)&=| \beta(1,t)|^2+| X(t) |^2+\| \beta(\cdot,t) \|^2, \label{Ome1} 
\end{align}
we also have
$
\theta_{1,1}\Omega_1(t)\leq V_1(t)\leq \theta_{1,2}\Omega_1(t) 
$
where
\begin{align}
	\theta_{1,1}&=\min\{ \frac{1}{2},\lambda_{\min}(P_1),\frac{\varepsilon_2}{2}h \}, \\
	\theta_{1,2}&=\max\{ \frac{1}{2},\lambda_{\max}(P_1),\frac{\varepsilon_2}{2}he \}.
\end{align}
Taking the time derivative of $V_1(t)$ along \eqref{gtsbt}--\eqref{gstx}, \eqref{eq:beta1}, 
applying integral by parts, using Cauchy-Schwartz inequality, we get
\begin{align}
	\dot{V}_1(t)&\leq-\frac{\lambda_{\min}(Q_1)}{2}\lvert X(t) \lvert^2+\frac{2\lvert P_1B \lvert^2}{\lambda_{\min}(Q_1)}\beta(0,t)^2\notag \\
	&+\frac{he}{2}\beta(1,t)^2-\frac{h}{2}\beta(0,t)^2-\frac{h}{2}\int_{0}^{1}e^{x}\beta^2(x,t)dx \notag \\
	&-\acute{c}_1\beta(1,t)^2. \label{dV1}
\end{align}
Choosing $h,\acute{c}_1$ to satisfy
\begin{align}
		h&> \frac{4| P_1B|^2}{\lambda_{\min}(Q_1)}+\lambda_{\min}(Q_1),\quad
		\acute{c}_1>\frac{he}{2}++\frac{\lambda_{\min}(Q_1)}{2},\label{eq:conditionbarc1}
	\end{align}
	we obtain
	\begin{align}
		\dot{V_1}(t)&\leq  -\frac{\lambda_{\min}(Q_1)}{2\theta_{1,2}}\theta_{1,2}\Omega_1(t)-\acute{\eta}_0\beta(0,t)^2 \notag \\
		&\leq -\frac{\lambda_{\min}(Q_1)}{2\theta_{1,2}} V_1(t)
	\end{align}
	and $\acute \eta_0=\frac{h}{2}-\frac{2| P_1B|^2}{\lambda_{\min}(Q_1)}>0$.	
	From the comparison principle, we can conclude $V_1(t)< V_1(0)e^{-\frac{\lambda_{\min}(Q_1)}{2\theta_{1,2}} t}$.
	It implies that $\Omega_1(t)< \frac{\theta_{1,2}}{\theta_{1,1}}\Omega_1(0)e^{-\frac{\lambda_{\min}(Q_1)}{2\theta_{1,2}} t}$ with $\lambda_{\min}(Q_1)$ influenced by $K$ since $P_1(A+BK)+(A+BK)^TP_1=-Q_1$.

Next, the well-posedness of system \eqref{gsz}--\eqref{gxd} is obtained, and Lyapunov analysis for the overall system is provided. We choose a Lyapunov function $V_2(t)$ as
\begin{align}
	V_2(t)&=\frac{\varepsilon_1}{2}\int_{0}^{1}e^{-a_0x}\eta^2(x,t)dx+\frac{a_1}{2\kappa_0}\int_{0}^{1}u^2(x,t)dx \notag \\
	&+\frac{a_2}{2\kappa_0}\int_{0}^{1}u_x^2(x,t)dx+\mathcal{R}_1V_1(t) \label{V2(t)}
\end{align}
where $a_0,a_1,a_2,\mathcal{R}_1$ are positive. We have $\theta_{2,1}\Omega_2(t)\leq V_2(t)\leq \theta_{2,2}\Omega_2(t)$
where
\begin{align}
	\Omega_2(t)&=| \beta(1,t)|^2+| X(t) |^2+\| \beta(\cdot,t) \|^2+\| \eta(\cdot,t) \|^2 \notag \\
	&+\| u(\cdot,t) \|^2+\| u_x(\cdot,t) \|^2, \label{Ome2} \\
	\theta_{2,1}&=\min\{ \mathcal{R}_1\theta_{1,1},\frac{\varepsilon_1}{2}e^{-a_0},\frac{a_1}{2\kappa_0},\frac{a_2}{2\kappa_0} \}, \label{eq:theta21} \\ 
	\theta_{2,2}&=\max\{ \mathcal{R}_1\theta_{1,2},\frac{\varepsilon_1}{2},\frac{a_1}{2\kappa_0},\frac{a_2}{2\kappa_0} \}.
\end{align}
Taking the derivative of \eqref{V2(t)}, recalling \eqref{dV1}, applying Young's inequality and Cauchy-Schwarz inequality, we have
\begin{align}
	&\dot{V}_2(t)\leq -(\mathcal{R}_1\acute{\eta}_1-| a_1\acute{p}_2+\frac{a_2\acute{p}_2}{\kappa_0}(A+BK) |^2\frac{\ell_2}{4} -\lvert C+K\lvert^2 \notag \\
	&-\frac{L_2}{2})\lvert X(t)\lvert^2-(\mathcal{R}_1\acute{\eta}_0-|\frac{a_2\acute{p}_2}{\kappa_0}B|^2\frac{\ell_3}{2}-\frac{L_2}{2}-1 ) \beta(0,t)^2 \notag \\
	&-\mathcal{R}_1(\acute{c}_1-\frac{he}{2}) \beta(1,t)^2-(\mathcal{R}_1\acute{\eta}_4-\frac{L_2}{2a_0}-\frac{L_2}{2})\int_{0}^{1}\beta^2(y,t)dy \notag \\
	&-(\frac{a_0}{2}-\frac{L_2}{2a_0}-4L_2)\int_{0}^{1}e^{-a_0x}\eta^2(x,t)dx+\frac{L_2}{2}\int_{0}^{1}u^2(x,t)dx \notag \\
	&-(a_1-2(\frac{1}{\ell_1}+\frac{1}{\ell_2}+\frac{1}{\ell_3}))\int_{0}^{1}u_x^2(x,t)dx-(a_2-2(\frac{1}{\ell_1}+\frac{1}{\ell_2} \notag \\
	&+\frac{1}{\ell_3})))\int_{0}^{1}u_{xx}^2(x,t)dx+(|a_1q+\frac{a_2}{\kappa_0}qA_d|^2\frac{\ell_1}{4}+\frac{L_2}{2})|d(t)|^2 . 
\end{align}
According to Poincar$\acute{e}$ inequality and Property 1, we get
\begin{align}
	&\dot{V}_2(t)\leq -(\mathcal{R}_1\acute{\eta}_1-| a_1\acute{p}_2+\frac{a_2\acute{p}_2}{\kappa_0}(A+BK) |^2\frac{\ell_2}{4} -| C+K|^2 \notag \\
	&-\frac{L_2}{2})\lvert X(t)\lvert^2-(\mathcal{R}_1\acute{\eta}_0-|\frac{a_2\acute{p}_2}{\kappa_0}B|^2\frac{\ell_3}{2}-\frac{L_2}{2}-1 ) \beta(0,t)^2 \notag \\
	&-\mathcal{R}_1(\acute{c}_1-\frac{he}{2}) \beta(1,t)^2-(\mathcal{R}_1\acute{\eta}_4-\frac{L_2}{2a_0}-\frac{L_2}{2})\int_{0}^{1}\beta^2(y,t)dy \notag \\
	&-(\frac{a_0}{2}-\frac{L_2}{2a_0}-4L_2)\int_{0}^{1}e^{-a_0x}\eta^2(x,t)dx+\frac{L_2}{2}\int_{0}^{1}u^2(x,t)dx \notag \\
	&-\frac{1}{5}(a_1-2(\frac{1}{\ell_1}+\frac{1}{\ell_2}+\frac{1}{\ell_3})+\frac{5}{2}L_2)\int_{0}^{1}u^2(x,t)dx-\frac{1}{5}(a_1 \notag \\
	&-2(\frac{1}{\ell_1}+\frac{1}{\ell_2}+\frac{1}{\ell_3}))\int_{0}^{1}u_x^2(x,t)dx -(a_2-2(\frac{1}{\ell_1}+\frac{1}{\ell_2} \notag \\
	&+\frac{1}{\ell_3}))\int_{0}^{1}u_{xx}^2(x,t)dx+(|a_1q+\frac{a_2}{\kappa_0}qA_d|^2\frac{\ell_1}{4}+\frac{L_2}{2})|d(t)|^2 
	\label{dV2}		\end{align}
where $\acute{\eta}_1=\frac{\lambda_{\min}(Q_1)}{2}$, $\acute{\eta}_4=\frac{h}{2}$ and
$L_2=\max\{ $ $\| \mathcal{F}_{12}(x,y)\|_\infty,$ $\| \mathcal{F}_{13}(x,y)\|_\infty,\lvert \mathcal{D}_1(x)\lvert_\infty,
| \mathcal{D}_2(x)|_\infty,| c_1(x)|_\infty,| c_2(x)|_\infty,$ $| \mu_1(x)|_\infty,\lvert g_1(x)\lvert_\infty \}$. Please note that $-(a_1-2(\frac{1}{\ell_1}+\frac{1}{\ell_2}+{\frac{1}{\ell_3}}))\int_{0}^{1}u_x^2(x,t)dx\leq-\frac{1}{5}(a_1-2(\frac{1}{\ell_1}+\frac{1}{\ell_2}+{\frac{1}{\ell_3}}))\int_{0}^{1}(u^2(x,t)+u_x^2(x,t))dx $ according to  Poincar$\acute{e}$ inequality, \eqref{gxub}.
Choosing $a_0>4L_2$ $+\sqrt{16L_2^2+L_2}$ to make  $-\frac{a_0}{2}+\frac{L_2}{2a_0}+4L_2<0$, and 
selecting positive parameter $R_1,a_1,a_2$ to satisfy
\begin{align}
		\mathcal{R}_1>&\max \biggl\{ \frac{| a_1\acute{p}_2+\frac{a_2\acute{p}_2}{\kappa_0}(A+BK) |^2\frac{\ell_2}{4} -| C+K|^2-\frac{L_2}{2} }{\acute{\eta}_1}, \notag \\
		&\frac{|\frac{a_2}{\kappa_0}B|^2\frac{\ell_3}{2}+\frac{L_2}{2}+1}{\acute{\eta}_0}, \frac{\frac{L_2}{2a}+\frac{L_2}{2}}{\acute{\eta}_4} \biggr\}, \\
		a_1>&2(\frac{1}{\ell_1}+\frac{1}{\ell_2}+\frac{1}{\ell_3})+\frac{5}{2}L_2, \quad a_2>2(\frac{1}{\ell_1}+\frac{1}{\ell_2}+\frac{1}{\ell_3}),
\end{align}
we get $
\dot{V_2}(t)\leq -\lambda_2 \theta_{2,2}\Omega_2(t)+\acute{\eta}_5 $
for some positive $\lambda_2$ and $\acute{\eta}_5=(|a_1q+\frac{a_2}{\kappa_0}qA_d|^2\frac{\ell_1}{4}+\frac{L_2}{2})|d(t)|^2$. Thus, we get
\begin{align}
	V_2(t)\leq V_2(0)e^{-\lambda_2t}+\frac{\acute{\eta}_5}{\lambda_2}. \label{V2}
\end{align}

Next, we conduct the analysis on the higher-order terms $\|\eta(x,t)\|_{H^1}$, $\|\xi(x,t)\|_{H^1}$, which will also be required in analyzing the boundedness of the control input. 

Differentiating \eqref{gtsbt}, \eqref{gtset} with respect to $x$, differentiating \eqref{gtsalb}, \eqref{eq:beta1} with respect to $t$, we have
\begin{align}
	&\varepsilon_2\beta_{xt}(x,t)=\beta_{xx}(x,t), \label{eq:btx} \\
	&\varepsilon_1\eta_{xt}(x,t)=-\eta_{xx}(x,t)+(c_1^{(1)}(x)+\mathcal{F}_{12}(x,x))\beta(x,t) \notag \\
	&+c_1(x)\beta_x(x,t)+g_1^{(1)}(x)\beta(0,t)+\mathcal{D}_1^{(1)}(x)X(t) \notag \\
	&+\mu_1^{(1)}(x)u(x,t)+\mu_1(x)u_x(x,t)+\mathcal{D}_2^{(1)}(x)d(t) \notag \\
	&+\mathcal{F}_{11}(x,x)\eta(x,t)+\int_{0}^{x}\mathcal{F}_{11x}(x,y)\eta(y,t)dy\notag \\
	&+\int_{0}^{x}\mathcal{F}_{12x}(x,y)\beta(y,t)dy+\int_{0}^{1}\mathcal{F}_{13x}(x,y)u(y,t)dy, \label{eq:etx} \\
	&\beta_{tt}(1,t)=-\acute{c}_1\beta_t(1,t), \label{eq:btxb} \\
	&\eta_t(0,t)=\beta_t(0,t)+(c+K)\dot{X}(t). \label{eq:etxb}
\end{align}
Please note that 
	$(\beta(x,t),\eta(x,t))\in  L^{\infty}([0,\infty);H^1(0,1))^2$ obtained in the proof of Property 1 leads to $\beta_x,\eta_x\in L^{\infty}([0,\infty);L^2(0,1))^2$, which justifies $x$-differentiation of \eqref{gtsbt}, \eqref{gtset} in the weak sense.
	Besides, the fact that $\beta(1,t)\in H^1([0,\infty))$, $\beta(0,t)\in H^1([0,\infty))$, and $X\in H^1([0,\infty))$, which is obtained in the proof of Property 1,
	justifies $t$-differentiation of \eqref{gtsalb}, \eqref{eq:beta1} in the weak sense. According to \eqref{eq:btx}--\eqref{eq:etxb},
	the $\beta_{x}(x,t)$ and $\eta_x(x,t)$ can be expressed as
	\begin{align}
		&\beta_{x}(x,t)=-\acute{c}_1\varepsilon_2\beta(1,\varepsilon_2(x-1)+t) \label{eq:betax} \\
		&\eta_x(x,t)=\acute{c}_1\varepsilon_1\beta(1,-\varepsilon_2(x+1)+t) \notag \\
		&-(\varepsilon_1(C+K)(A+BK)-\mathcal D_1(0))X(-\varepsilon_1x+t)\notag \\
		&-(\varepsilon_1(C+K)B-c_1(0)-g_1(0))\beta(0,-\varepsilon_1x+t) \notag \\
		&+\int_{0}^{x}g^{(1)}(x)\beta(0,\varepsilon_1(s-x)+t)ds \notag \\
		&-\int_{0}^{x}\acute{c}_1\varepsilon_2c_1(s)\beta(1,(\varepsilon_1+\varepsilon_2)-\varepsilon_2-\varepsilon_1x+t)ds \notag \\
		&+\int_{0}^{x}(c^{(1)}_1(s)+\mathcal F_{12}(s,s))\beta(s,\varepsilon_1(s-x)+t)ds \notag \\
		&+\int_{0}^{x}(\mu^{(1)}_1(s)+\mathcal F_{13}(s,s))u(s,\varepsilon_1(s-x)+t)ds \notag \\
		&+\int_{0}^{x}\mathcal D_1^{(1)}(x)X(\varepsilon_1(s-x)+t)ds \notag \\
		&+\int_{0}^{x}\mathcal D_2^{(1)}(s)d(\varepsilon_1(s-x)+t)ds+\mathcal D_2(0)d(-\varepsilon_1x+t) \notag \\
		&+\int_{0}^{x}\int_{0}^{s}\mathcal F_{11x}(s,y)\eta(y,\varepsilon_1(s-x)+t)dyds \notag \\
		&+\int_{0}^{x}\int_{0}^{s}\mathcal F_{12x}(s,y)\beta(y,\varepsilon_1(s-x)+t)dyds \notag \\
		&+\int_{0}^{x}\int_{0}^{s}\mathcal F_{13x}(s,y)u(y,\varepsilon_1(s-x)+t)dyds \notag \\
		&+\int_{0}^{x}\mathcal F_{11}(s,s)\eta(s,\varepsilon_1(s-x)+t)ds \notag \\
		&+\int_{0}^{1}\mathcal F_{13}(0,y)u(y,-\varepsilon_1x+t)dy \notag \\
		&+\int_{0}^{x}\mu_1(x)u_x(s,\varepsilon_1(s-x)+t)ds. \label{eq:etax}
\end{align}
We define
	\begin{align}
		&V_4(t)=V_2(t)+\int_{0}^{1}\beta_x^2(x,t)dx+\frac{1}{16}\int_{0}^{1}\eta_x^2(x,t)dx, \label{V4(t)}
	\end{align}
	and consider
	\begin{align}
		\Omega_4(t)=&\lvert \beta(1,t)\lvert^2+\lvert X(t) \lvert^2+\| \beta(\cdot,t) \|^2_{H^1}+\| \eta(\cdot,t) \|^2_{H^1} \notag \\
		&+\| u(\cdot,t) \|^2+\| u_x(\cdot,t)\|^2  \label{eq:Ome4},
	\end{align}
	we have $\theta_{4,1}\Omega_4(t)\leq V_4(t)\leq \theta_{4,2}\Omega_4(t)$ for some positive $\theta_{4,1}$, $\theta_{4,2}$.
	Recalling \eqref{V2}, using Cauchy Schwarz inequality and Young's inequality, we obtain
	\begin{align}
		&V_4(t)\leq \bigg( 2\acute{c}_1^2\varepsilon_1^2+(1+\acute{c}_1^2\varepsilon_2^2)(-\varepsilon_1(C+K)B+c_1(0)g_1(0))^2\notag \\
		&+\bigg( |\mathcal D_1(0)-\varepsilon_1(C+K)(A+BK)|^2+\int_{0}^{1}|\mathcal D_1^{(1)}(x)|^2dx \bigg) \notag \\
		&|X(t)|^2+\int_{0}^{1}((1+\acute{c}_1^2\varepsilon_2^2)g^{(1)}(x)^2+\acute{c}_1^2\varepsilon_2^2c_1^2(x))dx \bigg)\beta_1^2(1,t) \notag \\
		&+ \int_{0}^{1}\bigg((c_1^{(1)}(x)+\mathcal F_{12}(x,x))^2+\int_{0}^{x}\mathcal F_{11x}^2(x,y)dy \bigg)dx\notag \\
		&\int_{0}^{1}\beta^2(x,t)dx+\int_{0}^{1}\mu_1^2(x)dx\int_{0}^{1}u_x^2(x,t)dx+V_2(t) \notag \\
		&+ \int_{0}^{1}(\mathcal F_{11}^2(x,x)+\int_{0}^{x}\mathcal F_{11x}^2(x,y)dy )dx\int_{0}^{1}\eta^2(x,t)dx \notag \\
		&+\int_{0}^{1}\bigg((\mu^{(1)}(x)+\mathcal F_{13}(x,x))^2+\int_{0}^{x}\mathcal F_{13x}^2(x,y)dy+ \notag \\
		&\mathcal F_{13}^2(0,y) \bigg)dx\int_{0}^{1}u^2(x,t)dx+(\mathcal{D}_2^2(0)+\int_{0}^{1}\mathcal{D}_2^2(x)dx)|d(t)|^2 \notag \\
		&\leq \frac{\kappa_1}{\theta_{2,1}}\theta_{2,1}\Omega_2(t)+(\mathcal{D}_2^2(0)+\int_{0}^{1}\mathcal{D}_2^2(x)dx)|d(t)|^2 \notag \\
		&+V_2(0)e^{-\lambda_2t}+\frac{\acute{\eta}_5}{\lambda_2} \label{eq:V4}
	\end{align}
	for some positive $\kappa_1$. We thus have $V_4(t)\leq (1+\frac{\kappa_1}{\theta_{2,1}})$ $V_4(0)e^{-\lambda_2t}+\mathcal{C}_1$ where $\mathcal C_1=(1+\frac{\kappa_1}{\theta_{2,1}})\frac{\acute{\eta}_5}{\lambda_2}+(\mathcal{D}_2^2(0)+\|\mathcal D(\cdot)\|^2)|d(t)|^2$.
	It means that $\Omega_4(t)\leq \frac{\theta_{4,2}}{\theta_{4,1}}(1+\frac{\kappa_1}{\theta_{2,1}})\Omega_4(0)e^{-\lambda_2 t}+\frac{\mathcal{C}_1}{ \theta_{4,1}}$.

Recalling \eqref{gsxib} and \eqref{cbt2xi}, applying Cauchy-Schwarz inequality and Young's inequality, we get
	\begin{align}
		|z(t)|^2\leq& \acute{\kappa}_3(\|\eta(\cdot,t)\|^2_{H^1}+\beta(1,t)^2+|X(t)|^2+\|\beta(\cdot,t))\|^2\notag \\
		&+\|u(\cdot,t)\|^2+|d(t)|^2)
	\end{align}
	for some positive $\acute{\kappa}_3$. Thus, recalling Assumption \ref{d(t)}, we get $z(t)$ is bounded.
	Defining 
	\begin{align}
		\Xi_1(t)=&\lvert z(t)\lvert^2+\lvert X(t)\lvert^2+\| \xi(\cdot,t)\|^2_{H^1}+\| \eta(\cdot,t)\|^2_{H^1} \notag \\
		&+\| u(\cdot,t)\|^2+\| u_x(\cdot,t)\|^2 \label{eq:Xi1},
	\end{align} 
	applying Cauchy-Schwarz inequality and recalling transformations \eqref{cbt2xi}, \eqref{cxi2bt}, we get
	$
	\theta_{4,3}(t)(\Xi_1(t)+|d(t)|^2)\leq \Omega_4(t)\leq \theta_{4,4}(\Xi_1(t)+|d(t)|^2)
	$
	for some positive $\theta_{4,3}$ and $\theta_{4,4}$. Therefore we have
	\begin{align}
		\Xi_1(t)\leq& \frac{\theta_{4,2}\theta_{4,4}}{\theta_{4,1}\theta_{4,3}}(1+\frac{\kappa_1}{\theta_{2,1}})\Xi_1(0)e^{-\lambda_2 t}+\frac{\mathcal{C}_1}{\theta_{4,1}\theta_{4,3}} \notag \\
		&+\frac{\theta_{4,2}\theta_{4,4}}{\theta_{4,1}\theta_{4,3}}(1+\frac{\kappa_1}{\theta_{2,1}}){\bar D_d}^2.
	\end{align}
	Thus, recalling \eqref{eq:V4}, \eqref{V2},  \eqref{Omeg0} is achieved with
	\begin{align}
		\Upsilon_0=&\frac{\theta_{4,2}\theta_{4,4}}{\theta_{4,1}\theta_{4,3}}(1+\frac{\kappa_1}{\theta_{2,1}}),\quad \acute{C}_0=\lambda_2, \notag \\
		\mathcal{C}_0=&\frac{1}{\theta_{4,1}\theta_{4,3}}\bigg( \frac{\theta_{2,1}+\kappa_1}{\theta_{2,1}\lambda_2}(|a_1q+\frac{a_2}{\kappa_0}qA_d|^2\frac{\ell_1}{4}+\frac{L_2}{2}) \notag \\
		&\mathcal D_2^2(0)+\int_{0}^{1}\mathcal D_2^2(x)dx \bigg)\bar{D}_d^2.  \label{mathC0}
\end{align}
Please note that according to \eqref{eq:theta21}, \eqref{eq:Ome4}, \eqref{eq:Xi1}, we know $\theta_{2,1}$, $\theta_{4,1}$, $\theta_{4,3}$ and $\lambda_2$ depend on the plant parameters in \eqref{gsz}--\eqref{gxd}. Similarly, according to \eqref{dV2}, \eqref{eq:V4}, we get that $\kappa_1$, $L_2$, $\theta_{4,2}$, $\theta_{4,4}$ depend on the plant parameters as well, and can be adjusted by the design parameters. Therefore, from \eqref{mF11}--\eqref{mD2} and \eqref{cxi2bt}, \eqref{cbt2xi}, we have that $\mathcal C_0$ is bounded by $\bar{D}_d$. That is, if $\bar{D}_d=0$ (i.e., external disturbances vanish), then $\mathcal C_0=0$, the exponential stability is obtained.

\textit{ 3):}
Applying Cauchy Schwarz inequality into \eqref{U(t)}, recalling Lemma \ref{kernel klp}, \eqref{eq:n1}--\eqref{eq:n10} and \eqref{eq:etax}, together with Properties 1, 2, Property 3 is obtained.

The proof of Theorem $\ref{tosb}$ is complete.

\subsection{Proof Theorem $\ref{ertg}$}\label{The 3}
\setcounter{equation}{0}
\renewcommand{\theequation}{F.\arabic{equation}}
\textit{1):} Following the proof of Property 1 of Theorem \ref{tosb}, the observer error system has a unique solution $(\widetilde{z},\widetilde{\xi},\widetilde{\eta}, \widetilde{u}, \widetilde{X}, \widetilde{d}) \in H^1([0,\infty);\mathbb R)\times H^{1}([0,1]\times [0,\infty))^3 \times H^1([0,\infty);\mathbb R^n)\times H^1([0,\infty);\mathbb R^m)$.

\textit{2):} Consider a Lyapunov function as
\begin{align}
		V_3(t)&=\acute{b}_1\widetilde{X}(t)^TP_2\widetilde{X}(t)+\frac{\varepsilon_1}{2}\int_{0}^{1}\acute{p}_0e^{-\acute{a}_0x}\widetilde{\alpha}^2(x,t)dx \notag \\
		&+\frac{\varepsilon_2}{2}\int_{0}^{1}\acute{h}e^{\acute{d}x}\widetilde{\beta}^2(x,t)dx+\frac{\acute{a}_1}{2\kappa_0}\int_{0}^{1}\widetilde{u}^2(x,t)dx \notag \\
		&+\frac{\acute{a}_2}{2\kappa_0}\int_{0}^{1}\widetilde{u}_x^2(x,t)dx+\frac{1}{2}\widetilde{Y}(t)^2+\acute{b}_2\widetilde{d}(t )^TP_3\widetilde{d}(t) \label{V3(t)}
\end{align}
where the positive parameters $\acute{p}_0, \acute{b}_1,\acute{b}_2,\acute{a}_0,\acute{h},\acute{d},\acute{a}_1$,$\acute{a}_2$ are to be chosen later, and where $P_2=P_2^T>0$ and $P_3=P_3^T>0$ are the solution to the Lyapunov equations $P_2(A-L_xC)+(A-L_xC)^TP_2=-Q_2$ and $P_3(A_d-L_dq)+(A_d-L_dq)^TP_3=-Q_3$, respectively, for some $Q_2=Q_2^T>0$, $Q_3=Q_3^T>0$, considering $A-L_xC$ and $A_d-L_dq$ are Hurwitz.
Defining
\begin{align}
	\Omega_3(t)=&\| \widetilde{\alpha}(\cdot,t)\|^2+\| \widetilde{\beta}(\cdot,t)\|^2+\| \widetilde{u}(\cdot,t)\|^2+\| \widetilde{u}_x(\cdot,t)\|^2  \notag \\
	&+| \widetilde{Y}(t)|^2+| \widetilde{X}(t)|^2+| \widetilde{d}(t)|^2, \label{Om3} 
\end{align}	
we have $\theta_{3,1}\Omega_3(t)\leq V_3(t) \leq\theta_{3,2}\Omega_3(t)$ where $\theta_{3,1}$ $=\min\{ \frac{1}{2}, \acute{b}_1\lambda_{\min}(P_2), \frac{\varepsilon_1}{2}\acute{p}e^{-\acute{a}_0}, \frac{\varepsilon_2\acute{h}}{2}, \frac{\acute{a}_1}{2\kappa_0}, \frac{\acute{a}_2}{2\kappa_0},\acute{b}_2\lambda_{\min}(P_3) \}$, $\theta_{3,2}=\max\{ \frac{1}{2}, \acute{b}_1\lambda_{\max}(P_2),\frac{\varepsilon_1}{2}\acute{p},\frac{\varepsilon_2\acute{h}}{2}e^{\acute{d}}, \frac{\acute{a}_1}{2\kappa_0},\frac{\acute{a}_2}{2\kappa_0},\acute{b}_1\lambda_{\max}(P_3) \}.$

Differentiating \eqref{V3(t)} in time and integrating by parts, we can arrive at
\begin{align}
	&\dot{V}_3(t)\leq (c_0-L_z)\widetilde{Y}(t)^2-\acute{b}_1\lambda_{\min}(Q_2)\lvert \widetilde{X}(t)\lvert^2 \notag \\
	&-\frac{\acute{b}_2}{2}\lambda_{\min}(Q_3)|\widetilde{d}(t)|^2+\frac{2\acute{b}_2|P_3L_d\acute{p}_2|^2}{\lambda_{\min}(Q_3)}|\widetilde{X}(t)|^2+\widetilde{Y}(t)\bigg[ \notag \\
	&\int_{0}^{1}\mathcal{G}_1(x)\widetilde{\alpha}(x,t)dx+\frac{L_z}{q_0}\widetilde{\alpha}(1,t)+\int_{0}^{1}\mathcal{G}_3(x)\widetilde{u}(x,t)dx \notag \\
	&-\int_{0}^{1}\frac{M(x)}{\varepsilon_2}N_2(x)dx\widetilde{X}(t)-\int_{0}^{1}\frac{M(x)}{\varepsilon_2}N_4(x)dx\widetilde{d}(t) \bigg] \notag \\
	&+\int_{0}^{1}\acute{p}_0e^{-\acute{a}_0x}\widetilde{\alpha}(x,t)\bigg(-\widetilde{\alpha}_x(x,t)+N_1(x)\widetilde{X}(t)\notag \\
	&+\int_{0}^{x}S_{11}(x,y)\widetilde{\alpha}(y,t)dy+\int_{0}^{x}S_{13}(x,y)\widetilde{u}(y,t)dy+ \notag \\
	&\mu_1(x)\widetilde{u}(x,t)+N_3(x)\widetilde{d}(t) \bigg)dx+\int_{0}^{1}\acute{h}e^{\acute{d}x}\widetilde{\beta}(x,t)\bigg( \widetilde{\beta}_x(x,t)\notag \\
	&+\int_{0}^{x}S_{21}(x,y)\widetilde{\alpha}(y,t)dy+\int_{0}^{x}S_{23}(x,y)\widetilde{u}(y,t)dy+ \notag \\
	&c_2(x)\widetilde{\alpha}(x,t)+\mu_2(x)\widetilde{u}(x,t)+N_2(x)\widetilde{X}(t)+N_4(x)\widetilde{d}(t) \bigg)dx \notag \\
	&+\int_{0}^{1}\acute{a}_1\widetilde{u}(x,t)\widetilde{u}_{xx}(x,t)dx+\int_{0}^{1}\acute{a}_2\widetilde{u}_x(x,t)\widetilde{u}_{xxx}(x,t)dx
\end{align}	where $c_0-L_z<0$ because of \eqref{eq:Lz}.

Applying Young's inequality, Cauchy-Schwarz inequality and Poincar$\acute{e}$ inequality, we get
\begin{align}
	&\dot{V}_3(t)\leq -\bigg(\acute{b}_1\lambda_{\min}(Q_2)-\frac{\acute{p}_0L_3}{2}-\frac{\acute{h}L_3}{2}e^{\acute{d}}-\frac{L_3}{4r_{10}}-\frac{1}{4}|a_1\acute{p}_2 \notag \\
	&+\frac{a_2}{\kappa_0}(\acute{p}_2(A-L_xC)-qL_d\acute{p}_2)|^2-\frac{2\acute{b}_2|P_3L_d\acute{p}_2|^2}{\lambda_{\min}(Q_3)} \bigg) \widetilde{X}(t)|^2 \notag \\
	&-\bigg( \frac{\acute{b}_2}{2}\lambda_{\min}(Q_3)-\frac{L_3}{4r_{11}}-\frac{1}{4}|a_1q+\frac{a_2}{\kappa_0}q(A_d-L_dq)|^2 \notag \\
	&-\frac{p_0L_3}{2}-\frac{\acute{h}L_3}{2}e^d \bigg)|\widetilde{d}(t)|^2-\int_{0}^{1}\acute{h}\bigg(\frac{\acute{d}}{2}-3L_3\bigg)\widetilde{\beta}^2(x,t)dx  \notag \\
	&-(L_z-c_0-r_5-(r_6+r_7+r_{10}+r_{11})L_3-\acute{h}e^{\acute{d}} q_0^2)\lvert \widetilde{Y}(t)\lvert^2\notag \\
	&-\int_{0}^{1}\bigg(\frac{1}{5}(\acute{a}_1-2)-\frac{\acute{p}_0L_3}{2\acute{a}}-\frac{\acute{p}_0L_3}{2}-\frac{\acute{h}L_3}{2\acute{d}}e^{\acute{d}}+\frac{\acute{h}L_3}{2\acute{d}} \notag \\
	& -\frac{\acute{h}L_3}{2}e^{\acute{d}}-\frac{L_3}{4r_7}  \bigg)\widetilde{u}^2(x,t)dx -\int_{0}^{1}\bigg(-\frac{\acute{h}L_3}{2\acute{d}}e^{\acute{d}}-\frac{\acute{h}L_3}{2}e^{\acute{d}} \notag \\
	&+(\frac{\acute{p}_0\acute{a}_0}{2}-\frac{\acute{p}_0L_3}{2\acute{a}_0}-2\acute{p}_0L_3)e^{-\acute{a}_0}+\frac{\acute{h}L_3}{2\acute{d}}-\frac{L_3}{4r_6}\bigg)\widetilde{\alpha}^2(x,t)dx \notag \\	&-\int_{0}^{1}\frac{1}{5}(\acute{a}_1-2)\widetilde{u}_x^2(x,t)dx-(\acute{a}_2-2)\int_{0}^{1}\widetilde{u}_{xx}^2(x,t)dx \notag \\
	&-\bigg(\frac{\acute{p}_0e^{-\acute{a}_0}}{2}-\frac{L_z^2}{4r_5q_0^2}-\acute{h}e^{\acute{d}}q_1^2\bigg)\widetilde{\alpha}(1,t)^2
\end{align}
where
 $	L_3=\max\{  \| S_{11}(x,y)\|_\infty,\| S_{13}(x,y)\|_\infty,\| S_{21}(x,y)\|_\infty,$ $\| S_{23}(x,y)\|_\infty, \lvert c_2(x)\lvert_\infty,|\mu_1(x)|_\infty,| \mu_2(x)|_\infty,| N_1(x)|_\infty,$ $ | N_2(x)|_\infty,| N_3(x)|_\infty,| N_4(x)|_\infty,\lvert \mathcal{G}_1(x)\lvert_\infty, \lvert\mathcal{G}_3(x)\lvert_\infty, | \frac{1}{\varepsilon_2}M(x)$ $N_2(x)|_\infty,| \frac{1}{\varepsilon_2}M(x)N_4(x)|_\infty \},
	$
and $c_0$ $-L_z+r_5+r_6L_3+r_7L_3+r_{10}L_3+\acute{h}e^{\acute{d}} q_0^2<0$ for sufficiently small $r_5$,$r_6$,$r_7$,$r_{10}$,$\acute{h}$. 

Choosing the positive $\acute{p}_0$, $\acute{d}$, $\acute{a}_0$, $\acute{a}_1$, $\acute{a}_2$, $\acute{b}_1$,  $\acute{b}_2$ to satisfy
\begin{align}
	\acute{a}_1>&\max\biggl\{ 2,5(\frac{\acute{p}_0L_3}{2\acute{a}}+\frac{\acute{p}_0L_3}{2}+\frac{\acute{h}L_3}{2\acute{d}}e^{\acute{d}}-\frac{\acute{h}L_3}{2\acute{d}}\notag \\
	&+\frac{\acute{h}L_3}{2}e^{\acute{d}}+\frac{L_3}{4r_7})+2 \biggr\}, \\
	\acute{a}_2>&2, \quad \acute{d}>6L_3, \quad \acute{a}_0>2L_3+\sqrt{4L_3^2+L_3}, \\
	\acute{p}_0>&\max\{ 2e^{\acute{a}_0}(\frac{L_z^2}{4r_5q_0^2}+\acute{h}e^{\acute{d}}q_1^2),e^{\acute{a}_0}(\frac{\acute{h}L_3}{2\acute{d}}e^{\acute{d}}-\frac{\acute{h}L_3}{2\acute{d}} \notag \\
	&+\frac{\acute{h}L_3}{2}e^{\acute{d}}+\frac{L_3}{4r_6})/(\frac{\acute{a}_0}{2}-\frac{L_3}{2\acute{a}_0}-2L_3) \},\\
	\acute{b}_1>&\bigg(\frac{\acute{p}_0L_3}{2}+\frac{\acute{h}L_3}{2}e^{\acute{d}}+\frac{L_3}{4r_{10}}+\frac{2\acute{b}_2|P_3L_d\acute{p}_2|^2}{\lambda_{\min}(Q_3)} \notag \\
	&+\frac{a_2}{\kappa_0}(\acute{p}_2(A-L_xC)-qL_d\acute{p}_2)|^2 \bigg)/\lambda_{\min}(Q_2) , \\
	\acute{b}_2 >&2\bigg( \frac{L_3}{4r_{11}}+\frac{1}{4}|a_1q+\frac{a_2}{\kappa_0}q(A_d-L_dq)|^2+\frac{p_0L_3}{2} \notag \\
	&+\frac{\acute{h}L_3}{2}e^d \bigg)/\lambda_{\min}(Q_3),
\end{align} 
we arrive at
\begin{align}
	\dot{V}_3(t)&\leq -\lambda_3\theta_{3,2}\Omega_3(t)+(-\frac{\acute{p}_0e^{-\acute{a}_0}}{2}+\frac{L_z^2}{4r_5q_0^2}+\acute{h}e^{\acute{d}})\widetilde{\alpha}(1,t)^2
\end{align}
for some positive $\lambda_3$. 

Consider another Lyapunov function
	\begin{align}
		V_5(t)=&\frac{\varepsilon_1}{2}\int_{0}^{1}B_3e^{-\delta_3x}\widetilde{\alpha}_x^2(x,t)dx+\frac{\varepsilon_2}{2}\int_{0}^{1}e^{\delta_3x}\widetilde{\beta}_x^2(x,t)dx \notag \\
		&+\mathcal{R}_3V_3(t) \label{V5(t)}
	\end{align}
	where $B_3, \delta_3$ are positive. Defining
	\begin{align}
		\Omega_5(t)=&\| \widetilde{\alpha}(\cdot,t)\|^2_{H^1}+\| \widetilde{\beta}(\cdot,t)\|^2_{H^1}+\| \widetilde{u}(\cdot,t)\|^2_{H^1}+| \widetilde{Y}(t)|^2 \notag \\
		&+| \widetilde{X}(t)|^2+| \widetilde{d}(t)|^2,
\end{align}
we have $\theta_{5,1}\Omega_5(t) \leq V_5(t)\leq\theta_{5,2}\Omega_5(t)$ for some positive $\theta_{5,1}$, $\theta_{5,2}$. Taking the derivative of \eqref{V5(t)}, we then get
\begin{align}
		&\dot{V}_5(t)\leq -\bigg( \mathcal{R}_3\acute{\beta}_6-\frac{\bar{L}_3}{2}e^{\delta_3}-\frac{B_3\bar{L}_5}{2}\bigg)\int_{0}^{1}\widetilde{u}_x^2(x,t)dx  \notag \\
		&-\bigg( \mathcal{R}_3\acute{\beta}_2-\frac{\bar{L}_3}{2}e^{\delta_3}-\frac{B_3}{2}N_1(0)-\frac{B_3\bar{L}_5}{2}-\frac{e^{\delta_3}}{2}\kappa_3 \bigg)| \widetilde{X}(t)|^2 \notag \\
		&-\bigg( \frac{B_3e^{-\delta_3}}{2}-\frac{e^{\delta_3}}{2}\kappa_3 \bigg)\widetilde{\alpha}_{x}(1,t)^2-\bigg( \mathcal{R}_3\acute{\beta}_8-\frac{\bar{L}_3e^{\delta_3}}{2}\notag \\
		&-\frac{\bar{L}_3}{2\delta_3}e^{\delta_3}-\frac{e^{\delta_3}}{2}\kappa_3-\frac{B_3\bar{L}_5}{2}-\frac{B_3\bar{L}_5}{2\delta_3}\bigg)\int_{0}^{1}\widetilde{\alpha}^2(x,t)dx \notag \\
		&-\bigg((\frac{B_3\delta_3}{2}-\frac{7}{2}B_3\bar{L}_5)e^{-\delta_3}-\frac{\bar{L}_3}{2}e^{\delta_3} \bigg)\int_{0}^{1}\widetilde{\alpha}_x^2(x,t)dx \notag \\
		& -\bigg(\mathcal{R}_3\acute{\beta}_5-\frac{\bar{L}_3}{2\delta_3}e^{\delta_3}-\frac{\bar{L}_3}{2}-\frac{e^{\delta_3}}{2}\kappa_3\bigg)\int_{0}^{1}\widetilde{u}^2(x,t)dx \notag \\
		&-\bigg(\mathcal{R}_3\acute{\beta}_1-\frac{e^{\delta_3}}{2}\kappa_3 \bigg)| \widetilde{Y}(t)|^2-\bigg(\mathcal{R}_3\acute{\beta}_3-\frac{e^{\delta_3}}{2}\kappa_3\bigg)\widetilde{\alpha}(1,t)^2 \notag \\
		&-\mathcal R_3 \acute{\beta}_7\int_{0}^{1}\widetilde{u}_{xx}^2(x,t)dx-\int_{0}^{1}\mathcal{R}_3\acute{h}\bigg(\frac{\acute{d}}{2}-3L_3\bigg)\widetilde{\beta}^2(x,t)dx  \notag \\
		&-\bigg(\frac{\delta_3}{2}-4\bar{L}_3 \bigg)\int_{0}^{1}e^{\delta_3x}\widetilde{\beta}_x^2(x,t)dx-\frac{1}{2}\widetilde{\beta}_x(0,t)^2 \notag \\
		&-\bigg( \mathcal R_3\acute{\beta}_9-\frac{B_3\bar{L}_5}{2}-\frac{\bar{L}_3}{2}-\frac{e^{\delta_3}}{2}\kappa_3 \bigg)|\widetilde{d}(t)|^2  \label{dV5}
\end{align}
where $\acute{\beta}_1=-c_0+L_z-r_5-(r_6+r_7+r_{10}+r_{11})L_3-he^{\acute{d}} q_0^2$, $\acute{\beta}_2=\acute{b}_1\lambda_{\min}(Q_2)-\frac{\acute{p}_0L_3}{2}-\frac{\acute{h}L_3}{2}e^{\acute{d}}
	-\frac{L_3}{4r_{10}}-\frac{1}{4}|a_1\acute{p}_2+\frac{a_2}{\kappa_0}(\acute{p}_2(A-L_xC)-qL_d\acute{p}_2)|^2-\frac{2\acute{b}_2|P_3L_d\acute{p}_2|^2}{\lambda_{\min}(Q_3)}$, $\acute{\beta}_3=\frac{\acute{p}_0e^{-\acute{a}}}{2}-\frac{L_z^2}{4r_5q_0^2}-\acute{h}e^{\acute{d}}q_1^2$, $\acute{\beta}_5=\frac{1}{5}(\acute{a}_1-2)+\frac{\acute{h}L_3}{2\acute{d}}-\frac{\acute{p}_0L_3}{2\acute{a}}-\frac{\acute{p}_0L_3}{2}-\frac{\acute{h}L_3}{2\acute{d}}e^{\acute{d}}-\frac{\acute{h}L_3}{2}e^{\acute{d}}-\frac{L_3}{4r_7}$, $\acute{\beta}_6=\frac{1}{5}(\acute{a}_1-2)$, $\acute{\beta}_7=\acute{a}_2-2$, $\acute{\beta}_8=(\frac{\acute{p}_0\acute{a}}{2}-\frac{\acute{p}_0L_3}{2\acute{a}}-2\acute{p}_0L_3)e^{-\acute{a}}-\frac{\acute{h}L_3}{2\acute{d}}e^{\acute{d}}+\frac{\acute{h}L_3}{2\acute{d}}-\frac{\acute{h}L_3}{2}e^{\acute{d}}-\frac{L_3}{4r_6}$, $\acute{\beta}_9=\frac{\acute{b}_2}{2}\lambda_{\min}(Q_3)-\frac{L_3}{4r_{11}}-\frac{1}{4}|a_1q+\frac{a_2}{\kappa_0}q(A_d-L_dq)|^2-\frac{p_0L_3}{2}-\frac{\acute{h}L_3}{2}e^d$, and where
\begin{align}
	\bar{L}_3=&\max\{ \lvert S_{21}(x,x)+c_2^{(1)}(x)\lvert_{\infty}, \lvert c_2(x)\lvert_{\infty}, \lvert \mu_2(x) \lvert_{\infty},\notag \\
	&\lvert \mu_2^{(1)}(x)+S_{23}(x,x) \lvert_{\infty}, \lvert N_2^{(1)}(x) \lvert_{\infty},\lvert N_4^{(1)}(x) \lvert_{\infty}, \notag \\
	& \| S_{21x}(x,y)\|_{\infty}, \| S_{23x}(x,y\|_{\infty}) \}, \\
	\bar{L}_5=&\max\{ \lvert S_{11}(x,x) \lvert_{\infty}, \lvert S_{13}(x,x)+\mu_1^{(1)}(x) \lvert_{\infty}, \lvert \mu_1(x) \lvert_{\infty},\notag \\
	& \lvert N_1^{(1)}(x) \lvert_{\infty},\lvert N_3^{(1)}(x) \lvert_{\infty}, \| S_{11x}(x,y)\|_{\infty}, \| S_{13x}(x,y)\|_{\infty} \} .
\end{align}
The inequalitie $\widetilde{\beta}_x(1,t)^2\leq \kappa_3(\widetilde{\alpha}_x(1,t)^2+\widetilde{\alpha}(1,t)^2+\int_{0}^{1}\widetilde{\alpha}^2(x,t)dx$ $+\int_{0}^{1}\widetilde{u}^2(x,t)dx+\lvert \widetilde{Y}(t)\lvert^2$ $+\lvert \widetilde{X}(t)\lvert^2+|\widetilde{d}(t)|^2)$ which is obtained from \eqref{eretab} and their time or spatial derivatives, has been used in obtaining \eqref{dV5}.

By choosing 
\begin{align}
	B_3>\max\bigg\{\frac{\bar{L}_3e^{2\delta_3}}{\delta_3-7\bar{L}_5}, e^{2\delta_3}\kappa_3 \bigg\},\quad\delta_3>\max\{7\bar{L}_5,8\bar{L}_3\},
\end{align}
and sufficiently large $\mathcal{R}_3$, we get
\begin{align}
	\dot{V}_5(t)\leq -\lambda_5 \theta_{5,2}\Omega_5(t)\leq -\lambda_5V_5(t)
\end{align}	
for some positive $\lambda_5$ and where
\begin{align}
		\lambda_5=&\min\bigg\{ \frac{1}{2}\mathcal{R}_3\acute{\beta}_1,  \frac{1}{2}\mathcal{R}_3\acute{\beta}_2,  \frac{1}{2}\mathcal{R}_3\acute{\beta}_4, \frac{1}{2}\mathcal{R}_3\acute{\beta}_5,\frac{1}{2}\mathcal{R}_3\acute{\beta}_6, \notag \\
		& \frac{1}{2}\mathcal{R}_3\acute{\beta}_8,\frac{1}{2}\mathcal{R}_3\acute{\beta}_9,(\frac{B_3\delta_3}{2}-\frac{7}{2}B_3\bar{L}_5)e^{-\delta_3}-\frac{\bar{L}_3}{2}e^{\delta_3}, \notag \\
		& -4\bar{L}_3+\frac{\delta_3}{2},\mathcal{R}_3\acute{h}(\frac{\acute{d}}{2}-3L_3) \bigg\}/\theta_{5,2}. \label{lambda5}
\end{align}
Then through the same steps in Theorem $\ref{tosb}$, defining
\begin{align}
	\Xi_3(t)=&\| \widetilde{\eta}(\cdot,t)\|^2_{H^1}+\| \widetilde{\xi}(\cdot,t)\|^2_{H^1}+\| \widetilde{u}(\cdot,t)\|^2_{H^1}+| \widetilde{z}(t)|^2 \notag \\
	&+| \widetilde{X}(t)|^2+| \widetilde{d}(t)|^2, \label{Xi1} 
\end{align}
we obtain
\begin{align}
	\Xi_3(t)\leq \frac{\theta_{5,2}\theta_{5,4}}{\theta_{5,1}\theta_{5,3}}\Xi_3(0)e^{-\lambda_5 t}
\end{align}
for some positive parameters $\theta_{5,3},\theta_{5,4}$ that are obtained by applying Cauchy-Schwarz inequality into the transformations \eqref{z2Y}--\eqref{ettoa}.

Then \eqref{Ome} is obtained with
\begin{align}
	\varUpsilon_e=\frac{\theta_{5,1}\theta_{5,3}}{\theta_{5,2}\theta_{5,4}}, \quad \acute{C}_e=\lambda_5 \label{Upe}
\end{align}
where $\lambda_5$ is given by \eqref{lambda5}. The proof of Theorem $\ref{ertg}$ is complete.
\subsection{Expression of $\mathcal{G}_1$, $\mathcal{G}_3$, $S_{11}$, $S_{13}$, $S_{21}$, $S_{23}$, $N_1$, $N_2$, $N_3$, $N_4$}\label{S11}
\setcounter{equation}{0}
\renewcommand{\theequation}{G.\arabic{equation}}
The functions $\mathcal{G}_1(x)$, $\mathcal{G}_3(x)$, $S_{11}(x,y)$, $S_{13}(x,y)$, $S_{21}(x,y)$, $S_{23}(x,y)$, $N_1(x)$, $N_2(x)$, $N_3(x)$, $N_4(x)$ in target observer error system are shown as follows
\begin{align}
	\mathcal{G}_{1}(x)=&-\frac{1}{\varepsilon_2}M(x)c_2(x)-\frac{1}{\varepsilon_2}\int_{x}^{1}S_{21}(y,x)M(y)dy, \\ \mathcal{G}_{3}(x)=&-\frac{1}{\varepsilon_2}M(x)\mu_2(x)-\frac{1}{\varepsilon_2}\int_{x}^{1}S_{23}(y,x)M(y)dy, \\
	S_{11}(x,y)=&f_{11}(x,y)-\frac{\varepsilon_1}{\varepsilon_2}\int_{y}^{x}S_{21}(z,y)\phi(x,z)dz\notag \\
	&-\frac{\varepsilon_1}{\varepsilon_2}c_2(y)\phi(x,y), \\
	S_{13}(x,y)=&f_{13}(x,y)-\frac{\varepsilon_1}{\varepsilon_2}\int_{y}^{x}S_{23}(z,y)\phi(x,z)dz\notag \\
	&-\frac{\varepsilon_1}{\varepsilon_2}\mu_2(y)\phi(x,y), \\
	S_{21}(x,y)=&f_{21}(x,y)-\int_{y}^{x}S_{21}(z,y)\psi(x,z)dz\notag \\
	&-c_2(y)\psi(x,y), \\
	S_{23}(x,y)=&f_{23}(x,y)-\int_{y}^{x}S_{23}(z,y)\psi(x,z)dz\notag \\
	&-\mu_2(y)\psi(x,y), \\
	N_1(x)=&D_1(x)-\frac{\varepsilon_1}{\varepsilon_2}\int_{0}^{x}\phi(x,y)N_2(y)dy, \\ 
	N_2(x)=&D_2(x)-\int_{0}^{x}\psi(x,y)N_2(y)dy, \\
	N_3(x)=&G_1(x)-\frac{\varepsilon_1}{\varepsilon_2}\int_{0}^{x}\phi(x,y)N_4(y)dy, \\ 
	N_4(x)=&G_2(x)-\int_{0}^{x}\psi(x,y)N_4(y)dy.
\end{align}

\section*{References}

\end{document}